\DeclarePairedDelimiter{\group}{(}{)}
\DeclarePairedDelimiter{\set}{\{}{\}}
\DeclarePairedDelimiter{\abs}{\vert}{\vert}
\newcommand{\naturals}{\mathbb{N}}
\newcommand{\reals}{\mathbb{R}}
\newcommand{\posreals}{\reals_{>0}}
\newcommand{\gbls}[1][]{\mathscr{L}_{#1}}
\newcommand{\sgbls}[1][]{\mathscr{S}_{#1}}
\newcommand{\gbl}{f}
\newcommand{\altgbl}{g}
\newcommand{\altgbltoo}{h}
\newcommand{\gbli}[1][]{f_{#1}}
\newcommand{\altgbli}[1][]{g_{#1}}
\newcommand{\optset}[1][]{A_{#1}}
\newcommand{\altoptset}[1][]{B_{#1}}
\newcommand{\altoptsettoo}[1][]{C_{#1}}
\newcommand{\optsets}{\mathscr{Q}}
\newcommand{\rejectset}[1][]{K_{#1}}
\newcommand{\rejectsets}{\mathbf{K}}
\newcommand{\choicefun}[1][]{C_{#1}}
\newcommand{\rejectfun}[1][]{R_{#1}}
\newcommand{\cset}[3][]{\set[#1]{#2\colon#3}}
\newcommand{\ind}[1]{\mathbb{I}_{#1}}
\newcommand{\ifandonlyif}{\Leftrightarrow}
\newcommand{\co}[1]{#1^\mathrm{c}}
\newcommand{\lowprev}[1][]{\underline{P}_{#1}}
\newcommand{\uppprev}[1][]{\overline{P}_{#1}}
\newcommand{\lowuppprev}[1][]{\overline{\underline{P}}_{#1}}
\newcommand{\linprev}[1][]{P_{#1}}
\newcommand{\altlinprev}[1][]{Q_{#1}}
\newcommand{\cohlowprevs}{\underline{\mathbf{P}}}
\newcommand{\linprevs}{\mathbf{P}}
\newcommand{\pspace}{\Omega}
\newcommand{\xvalues}{\mathscr{X}}
\newcommand{\yvalues}{\mathscr{Y}}
\newcommand{\xyvalues}{\xvalues\times\yvalues}
\newcommand{\zvalues}{\mathscr{Z}}
\newcommand{\partition}[1][]{\mathcal{P}_{#1}}
\newcommand{\ultrafilter}[2][\lowprev]{\mathscr{U}_{{#1},{#2}}}
\newcommand{\aevt}[1][]{E_{{#1}}}
\newcommand{\bevt}[1][]{F_{{#1}}}
\newcommand{\cevt}[1][]{G_{{#1}}}
\newcommand{\taevt}[1][]{\tilde{E}_{{#1}}}
\newcommand{\tbevt}[1][]{\tilde{F}_{{#1}}}
\newcommand{\xvar}[1][]{X_{{#1}}}
\newcommand{\yvar}[1][]{Y_{{#1}}}
\newcommand{\zvar}[1][]{Z_{{#1}}}
\newcommand{\xyvar}{{(\xvar,\yvar)}}
\newcommand{\lowdis}[1][\zvar]{\underline{P}_{#1}}
\newcommand{\uppdis}[1][\zvar]{\overline{P}_{#1}}
\newcommand{\dis}[1][\zvar]{P_{#1}}
\DeclareMathOperator{\posi}{posi}
\begin{document}

\title*{On a notion of independence proposed by Teddy Seidenfeld}
\author{Jasper De Bock and Gert de Cooman}
\authorrunning{De Bock and De Cooman}
\institute{Jasper De Bock \at Foundations Lab for Imprecise Probabilities, Ghent University, Technologiepark-Zwijnaarde 125, 9052 Belgium \email{jasper.debock@ugent.be}
\and Gert de Cooman \at Foundations Lab for Imprecise Probabilities, Ghent University, Technologiepark-Zwijnaarde 125, 9052 Belgium \email{gert.decooman@ugent.be}}
\maketitle

\abstract*{%
Teddy Seidenfeld has been arguing for quite a long time that binary preference models are not powerful enough to deal with a number of crucial aspects of imprecision and indeterminacy in uncertain inference and decision making.
It is at his insistence that we initiated our study of so-called sets of desirable option sets, which we have argued elsewhere provides an elegant and powerful approach to dealing with general, binary as well as non-binary, decision-making under uncertainty.
We use this approach here to explore an interesting notion of irrelevance (and independence), first suggested by Seidenfeld in an example intended as a criticism of a number of specific decision methodologies based on (convex) binary preferences. 
We show that the consequences of making such an irrelevance or independence assessment are very strong, and might be used to argue for the use of so-called mixing choice functions, and E-admissibility as the resulting decision scheme.}

\abstract{%
Teddy Seidenfeld has been arguing for quite a long time that binary preference models are not powerful enough to deal with a number of crucial aspects of imprecision and indeterminacy in uncertain inference and decision making.
It is at his insistence that we initiated our study of so-called sets of desirable option sets, which we have argued elsewhere provides an elegant and powerful approach to dealing with general, binary as well as non-binary, decision-making under uncertainty.
We use this approach here to explore an interesting notion of irrelevance (and independence), first suggested by Seidenfeld in an example intended as a criticism of a number of specific decision methodologies based on (convex) binary preferences. 
We show that the consequences of making such an irrelevance or independence assessment are very strong, and might be used to argue for the use of so-called mixing choice functions, and E-admissibility as the resulting decision scheme.}


\section{Context and introduction}\label{sec:introduction}
In much of our earlier work on the foundations of imprecise---or indeterminate \cite{levi1999:isipta:imprecise:indeterminate}---probabilities \cite{augustin2013:itip,walley1991} we availed ourselves of \emph{binary} preference orders between uncertain rewards to model a subject's decisions under uncertainty; see \cite{debock2015:thesis,debock2015:credal:nets,decooman2015:coherent:predictive:inference,cooman2011b,cooman2010,quaeghebeur2015:statement} for a few representative examples.
In the field, the monikers `desirability' and `sets of desirable gambles' are typically used to describe uncertainty models involving such (strict) binary preference orders \cite{couso2011,decooman2015:coherent:predictive:inference,cooman2010,quaeghebeur2012:itip,walley2000}. 
In earlier work, Seidenfeld et al.~\cite{seidenfeld1995} also introduced the term `favourability' for this. 

Since the publication of that work, Teddy Seidenfeld has been developing arguments in favour of a more involved approach to uncertainty modelling and decision making. 
If we really want to take indecision and imprecision seriously, he has insisted, we need to abandon binary preference models in favour of more general choice functions, as for instance described in \cite{seidenfeld2010}.
For this reason, one of us (Gert) started to work with Arthur Van Camp---his then PhD student---on exploring the connections between choice function theory and desirability. 
This led to a number of joint papers \cite{vancamp2018:exchangeability,2018vancamp:lexicographic,vancamp2018:indifference}, and, eventually, to Arthur's PhD Thesis \cite{2017vancamp:phdthesis}.

Inspired by that work, the two of us decided to explore this connection further. 
A key insight we had, is that choice functions, when interpreted appropriately, can express statements such as ``\emph{at least one} of these preferences is true''. 
Since a desirable gamble is by definition a preference assessment---it is an uncertain reward that is strictly preferred to the status quo---this suggests that choice functions can deal with `OR'-statements between assessments of desirability.
In contrast, the language of sets of desirable gambles typically only deals with `AND'-statements between such assessments.\footnote{How to also deal with `NOT' in this and related languages, was studied in quite some detail by one of us in an earlier collaboration \cite{quaeghebeur2015:statement}.}
This observation led us to the intriguing idea that general choice functions might be interpreted, axiomatised and represented using the language of desirability, and that at the same time, they could enrich this language with `OR'-statements.
Investigating this idea and confirming our suspicions in all the necessary detail has been part of an ongoing project, with a number of papers nearing completion.
Early versions, which the present discussion is based on, have been published in conference proceedings \cite{ipmu2020debock,debock2018,debock2019:interpretation,ipmu2020decooman}, and more detailed versions with proofs are also available on ArXiv \cite{ipmu2020debock:arxiv,debock2018:arXiv,debock2019:interpretation:arxiv,ipmu2020decooman:arxiv}.
We will summarise the relevant ideas and results in Section~\ref{sec:choice-functions} further on.

Our results so far have led us to agree with Seidenfeld's criticism, and to follow him in moving from binary preference models to choice functions. 
But they have not led us to abandon desirability.
On the contrary: on our account, desirability is also very well suited for describing and interpreting non-binary choice. 
This interpretation---that (not) choosing an option from an option set with more than two elements can be brought back to an `OR' of desirability statements---inspires a set of axioms which allows us to cover much---if not all---of the literature on the subject that we have come across.

Simply moving towards general choice functions doesn't immunise us against all aspects of Seidenfelds's criticism, however. 
For it is not merely the systematic use of binary decision schemes that he has been arguing against, it is also---and perhaps foremost---some of their features, which may also be shared by some types of non-binary decision schemes. 
In fact, he has a treasure trove of intricate little examples that he likes to pick apart other people's pet theories about rational decision making with.
We have no doubt that some of them may still be brought to bear on specific decision schemes within our desirability-based theory of choice functions.
In one of his examples, which he typically mobilises to cast doubt on the indiscriminate assumption of convexity for a set of indeterminate probabilities, he introduced {\itshape en passant} a requirement for `independence' that hasn't stopped fascinating us since the fateful day he sent us a few hand-outs explaining the basic ideas behind it. 
Whereas the hand-outs are not publicly available, the main idea expressed in them is, because he and his colleagues have also published a similar example in \cite[Section 4]{seidenfeld2010}. 
It involves the following very intuitive `rationality requirement' about the value of `independent information', namely, that it ought to have none:
\begin{quote}
When two events, \(\aevt\) and \(\bevt\), are `independent' then it is not reasonable to \emph{spend resources} in order to use the state of one, \(\aevt\) versus \(\co{\aevt}\), to decide between two gambles that depend solely on the other event, \(\bevt\) versus \(\co{\bevt}\).
\end{quote}
Rather than reproduce his specific example here in its full detail, we will rephrase his requirement in a more general abstract form, and without the symmetry that is implicit in his formulation. 
We will refer to this asymmetric version as (an assessment of) \emph{S-irrelevance}. 
The main goal of this paper will be to study its implications. 

We consider a possibility space~\(\pspace\), and two \emph{events}~\(\aevt,\bevt\subseteq\pspace\).\footnote{We will use the language of events, rather than propositions, to express the things we are uncertain about, but the difference is immaterial for what we have in mind.}
The event~\(\bevt\) could for instance refer to a(n unknown) medical condition of a patient in a Brussels hospital, and the event~\(\aevt\) could refer to (unknown) specific weather conditions at the South Pole.
\emph{Gambles} are uncertain rewards expressed in units of some predetermined linear utility,\footnote{Our results can be developed using horse lotteries, but we opt here for a simplified version.} modelled as bounded maps \(\gbl\colon\pspace\to\reals\).
We denote the set of all such maps by~\(\gbls(\pspace)\), or more simply by~\(\gbls\) when it is clear from the context what the domain of the gambles is.

A \emph{gamble on the occurrence of~\(\bevt\)} is a gamble of the type
\begin{equation*}
\lambda\ind{\bevt}+\mu\ind{\co{\bevt}}
=\begin{cases}
\lambda&\text{ if \(\bevt\) occurs}\\
\mu&\text{ if \(\bevt\) doesn't occur},
\end{cases}
\end{equation*}
where \(\lambda\) and \(\mu\) are real numbers, \(\co{\bevt}\coloneqq\pspace\setminus\bevt\) is the complement of~\(\bevt\), and \(\ind{\bevt}\) is the indicator of~\(\bevt\): the gamble that assumes the value~\(1\) on~\(\bevt\) and \(0\) elsewhere.
We denote by
\begin{equation*}
\gbls[\bevt]\coloneqq\cset{\lambda\ind{\bevt}+\mu\ind{\co{\bevt}}}{\lambda,\mu\in\reals}
\end{equation*}
the set of all such gambles.

To formalise his idea, Seidenfeld considers two gambles~\(\gbl,\altgbl\in\gbls[\bevt]\) on the occurrence of~\(\bevt\). 
In our example, they could for instance represent the uncertain rewards for two possible courses of treatment for our patient in the Brussels hospital, whose actual rewards are determined by her actual (but unknown) medical condition. 

We can use these two gambles to construct a composite gamble \(\ind{\aevt}\gbl+\ind{\co{\aevt}}\altgbl\), whose outcome also depends on the state of the event \(\aevt\): it yields the uncertain reward \(\gbl\) if \(\aevt\) occurs, and \(\altgbl\) if \(\aevt\) doesn't occur.
In our example, this would correspond to taking either the first or the second treatment, depending on the weather condition at the South Pole, and \(\ind{\aevt}\gbl+\ind{\co{\aevt}}\altgbl\) is then the reward function for this composite treatment.

We now consider the problem of choosing between the gambles in the collection
\begin{equation*}
\optset[\epsilon]\coloneqq\set{\gbl,\altgbl,\ind{\aevt}\gbl+\ind{\co{\aevt}}\altgbl-\epsilon}
\text{ for some real constant \(\epsilon>0\).}
\end{equation*}
The third gamble consists in paying a price \(\epsilon>0\) in order to find out the status of~\(\aevt\) which then determines whether we get the uncertain reward~\(\gbl\) or \(\altgbl\) on the outcome of~\(\bevt\).
Seidenfeld's requirement states that when a subject believes \(\aevt\) and~\(\bevt\) to be `independent', then the third gamble must never be chosen, for any \(\epsilon>0\).
Or alternatively, in a language that stresses rejection rather than choice: the third option must be rejected from the set of options \(\optset[\epsilon]\) for all \(\epsilon>0\).\footnote{In both his hand-outs and the above-mentioned paper \cite{seidenfeld2010}, Seidenfeld argues, similarly, for what he calls the \emph{inadmissibility} of the third option.}

If, as we will explain in Section~\ref{sec:choice-functions}, we consider choice or rejection statements to provide information about strict preferences between gambles, then this requirement states that at least one option in the set \(\set{\gbl,\altgbl}\) must be preferred over the rejected option~\(\ind{\aevt}\gbl+\ind{\co{\aevt}}\altgbl-\epsilon\), which amounts to OR-ing these two preference assessments:
\begin{equation*}
\gbl\text{ is preferred over }\ind{\aevt}\gbl+\ind{\co{\aevt}}\altgbl-\epsilon
\quad\text{OR}\quad
\altgbl\text{ is preferred over }\ind{\aevt}\gbl+\ind{\co{\aevt}}\altgbl-\epsilon.
\end{equation*}
In this paper, we intend to explore the consequences of making such assessments, using our new approach to coherent choice functions, which is, as we have already mentioned, eminently suited for dealing with such `OR'-statements.

Thus, in developing this independence idea, we can make two lines of research come together, both of which were inspired by Teddy Seidenfeld.

We outline our approach to coherent choice in Section~\ref{sec:choice-functions}, and derive the far-reaching consequences of imposing the above-mentioned `independence' requirement in Section~\ref{sec:S-irrelevance:for:events}.
We extend the discussion from events to variables in Section~\ref{sec:S-irrelevance:for:variables}, and dwell on the implications of our findings in Section~\ref{sec:discussion}.

\section{A crash course in desirability-based choice functions}\label{sec:choice-functions}
A \emph{choice function}~\(\choicefun\) is a set-valued operator on sets of options. 
In particular, for any set of options \(\optset\), the corresponding value of~\(\choicefun\) is a subset \(\choicefun\group{\optset}\) of~\(\optset\). 
We will be considering throughout the special case where these options are gambles in~\(\gbls(\pspace)\): bounded real-valued maps on~\(\pspace\), interpreted as uncertain rewards.\footnote{For a more general approach to desirability-based choice functions, where options can take values in an abstract vector space, we refer the interested reader to one of our earlier papers~\cite{debock2019:interpretation,debock2019:interpretation:arxiv}.} 
The option sets~\(\optset\) are furthermore taken to be finite, and we denote the set of all such finite subsets of~\(\gbls(\pspace)\) by~\(\optsets(\pspace)\).
Again, if it is clear from the context what the possibility space is, we will use the simpler notations~\(\gbls\) and~\(\optsets\). 

Gambles can be ordered by the point-wise ordering, where \(\gbl\geq\altgbl\) means that \(\gbl(\omega)\geq\altgbl(\omega)\) for all \(\omega\in\pspace\), and \(\gbl>\altgbl\) means that \(\gbl\geq\altgbl\) but \(\gbl\neq\altgbl\).
We will also use the notation \(\gbl\succ\altgbl\) to mean that \(\inf(\gbl-\altgbl)>0\).

The purpose of a choice function then, is to represent a subject's choice between the options in~\(\optset\), for any \(\optset\in\optsets\). 
The terminology can be a bit misleading, though, because to say that \(\choicefun(\optset)=\altoptset\) is not taken to mean that all options in~\(\altoptset\) are chosen. 
Rather, it means that, based on the available information, our subject is only disposed to rule out the options in~\(\optset\setminus\altoptset\), but remains undecided about the remaining options in~\(\altoptset\). 
For this reason, it makes sense to focus on the options that are rejected---as in `not chosen'---and to consider the corresponding \emph{rejection function}~\(\rejectfun\), defined by~\(\rejectfun(\optset)\coloneqq\optset\setminus\choicefun(\optset)\) for all \(\optset\in\optsets\).

Our interpretation for rejection---and hence also choice---functions now goes as follows. 
Consider a subject whose uncertainty is represented by a rejection function~\(\rejectfun\), or equivalently, by a choice function~\(\choicefun\). 
Then for a given option set \(\optset\in\optsets\), the statement that an option~\(\gbl\in\optset\) is rejected from~\(\optset\)---that \(\gbl\in\rejectfun\group{\optset}\) or \(\gbl\notin\choicefun(\optset)\)---is taken to mean that \emph{there is at least one option~\(\altgbl\) in~\(\optset\) that our subject strictly prefers over \(\gbl\)}. 

The connection with the language of desirability is now almost immediate, because it is eminently suited for dealing with binary preference statements such as ``the gamble~\(\altgbl\) is strictly preferred over the gamble~\(\gbl\)''. 
In terms of desirability, this simply means that \(\altgbl-\gbl\) is desirable~\cite{quaeghebeur2012:itip,walley2000}. 
By applying this to our interpretation for rejection in terms of preferences, we obtain an equivalent interpretation in terms of desirability: \(\gbl\) is rejected from~\(\optset\) if the option set
\begin{equation*}
\optset\ominus\gbl\coloneqq\cset{\altgbl-\gbl}{\altgbl\in\optset\setminus\set{\gbl}}
\end{equation*}
contains at least one desirable gamble. 
So we find that under our interpretation, the study of choice and rejection functions reduces to the study of sets of gambles that contain at least one desirable gamble.

In order to formalise this, we have introduced the concept of a \emph{desirable option set}: a set~\(\optset\in\optsets\) that contains at least one desirable gamble. 
A subject's uncertainty can then be described by means of a set of such desirable option sets: a set~\(\rejectset\subseteq\optsets\) of option sets~\(\optset\) that she assesses to be desirable, in the sense that according to her beliefs, every \(\optset\in\rejectset\) contains at least one desirable gamble. 
For any such set of desirable option sets~\(\rejectset\), the corresponding rejection function and choice function are then defined by
\begin{equation}\label{eq:choiceandrejectionfromK}
\rejectfun(\optset)\coloneqq\cset{\gbl\in\optset}{\optset\ominus\gbl\in\rejectset}
\text{ and }
\choicefun(\optset)\coloneqq\cset{\gbl\in\optset}{\optset\ominus\gbl\notin\rejectset}
\end{equation}
In the remainder of this contribution, we will focus mainly on sets of desirable options sets~\(\rejectset\), and will consider choice and rejection functions as derived objects, obtained by Equation~\eqref{eq:choiceandrejectionfromK}. 
In particular, we will focus on sets of desirable option sets that are \emph{coherent}, in the sense that they satisfy the following rationality criteria for the beliefs---or behavioural dispositions---expressed by~\(\rejectset\). 
We will use~`\((\lambda,\mu)>0\)' as a shorthand notation for `\(\lambda\geq0\), \(\mu\geq0\) and \(\lambda+\mu>0\)'.

\begin{definition}[Coherence for sets of desirable option sets]\label{def:coherence:rejectset}
A set of desirable option sets~\(\rejectset\subseteq\optsets\) is called \emph{coherent} if it satisfies the following axioms:
\begin{enumerate}[label=\(\mathrm{K}_{\arabic*}\).,ref=\(\mathrm{K}_{\arabic*}\),leftmargin=*,start=0]
\item\label{ax:rejects:removezero} if \(\optset\in\rejectset\) then also \(\optset\setminus\set{0}\in\rejectset\), for all \(\optset\in\optsets\);
\item\label{ax:rejects:nozero} \(\set{0}\notin\rejectset\);
\item\label{ax:rejects:pos} \(\set{\gbl}\in\rejectset\), for all \(\gbl\in\gbls\) with \(\inf\gbl>0\);
\item\label{ax:rejects:cone} if \(\optset[1],\optset[2]\in\rejectset\) and if, for all \(\gbl\in\optset[1]\) and~\(\altgbl\in\optset[2]\), \((\lambda_{\gbl,\altgbl},\mu_{\gbl,\altgbl})>0\), then also
\begin{equation*}
\cset{\lambda_{\gbl,\altgbl}\gbl+\mu_{\gbl,\altgbl}\altgbl}{\gbl\in\optset[1],\altgbl\in\optset[2]}
\in\rejectset;
\end{equation*}
\item\label{ax:rejects:mono} if \(\optset[1]\in\rejectset\) and~\(\optset[1]\subseteq\optset[2]\), then also \(\optset[2]\in\rejectset\), for all \(\optset[1],\optset[2]\in\optsets\).
\end{enumerate}
\end{definition}
\noindent
This axiomatisation is entirely based on---and motivated by---our interpretation and the following three rationality principles for a notion of desirability:
\begin{enumerate}[label=\(\mathrm{d}_{\arabic*}\).,ref=\(\mathrm{d}_{\arabic*}\),start=1,leftmargin=*]
\item\label{ax:desirability:nozero} 
\(0\) is not desirable;
\item\label{ax:desirability:pos} if \(\gbl\succ0\), or in other words, \(\inf\gbl>0\), then \(\gbl\) is desirable;\footnote{Our general approach \cite{debock2019:interpretation,debock2019:interpretation:arxiv} allows for more general `background orderings' \(\succ\) to replace the ordering based on `\(\inf\gbl>0\)' considered here.}
\item\label{ax:desirability:cone} if \(\gbl,\altgbl\) are desirable and~\((\lambda,\mu)>0\), then \(\lambda\gbl+\mu\altgbl\) is desirable.
\end{enumerate}
For a motivation and discussion of these principles, we refer to~\cite{quaeghebeur2012:itip,walley2000}.
For a detailed explanation of why \ref{ax:desirability:nozero}--\ref{ax:desirability:cone} indeed naturally lead to \ref{ax:rejects:removezero}--\ref{ax:rejects:mono}, we refer to~\cite{debock2019:interpretation}, which also contains a small example that illustrates the use of our workhorse axiom~\ref{ax:rejects:cone}. 
More generally, that same reference~\cite{debock2019:interpretation} also provides much more information about---and motivation for---the framework that we summarise here. 

For our present purposes, it will suffice to focus on a number of special cases that play a central role in this paper: choice functions based on linear, and on lower, previsions.

\subsection{Choice functions based on linear previsions}\label{sec:choicefromlinear}
Perhaps the best-known method for choosing between uncertain rewards, is to choose those options that have the highest expected utility with respect to some given probability measure. 
This measure is often taken to be countably additive, but we will not impose this restriction here and work with finitely additive probability measures instead, defined on all events \(\aevt\subseteq\pspace\). 
The expectation operators that correspond to such measures are linear previsions.

\begin{definition}[Linear prevision]\label{def:linearprevision}
A \emph{linear prevision}~\(\linprev\) on~\(\gbls\) is a real-valued map on~\(\gbls\) that satisfies
\begin{enumerate}[label=\(\mathrm{P}_{\arabic*}\).,ref=\(\mathrm{P}_{\arabic*}\),leftmargin=*,start=1]
\item\label{ax:linprev:inf} \(\linprev(\gbl)\geq\inf\gbl\) for all \(\gbl\in\gbls\);
\item\label{ax:linprev:homo} \(\linprev(\lambda\gbl)=\lambda\linprev(\gbl)\) for all \(\lambda\in\reals\) and~\(\gbl\in\gbls\);
\item\label{ax:linprev:additive} \(\linprev(\gbl+\altgbl)=\linprev(\gbl)+\linprev(\altgbl)\) for all \(\gbl,\altgbl\in\gbls\).
\end{enumerate}
We denote the set of all linear previsions on~\(\gbls\) by~\(\linprevs\). 
\end{definition}
Conversely, every linear prevision~\(\linprev\) has a corresponding finitely additive probability measure, also denoted by~\(\linprev\), and defined by~\(\linprev(\aevt)\coloneqq\linprev(\ind{\aevt})\) for all \(\aevt\subseteq\pspace\).

For any given linear prevision---or, equivalently, any finitely additive probability measure---\(\linprev\), we now let \(\choicefun[\linprev]\) be the choice function that corresponds to maximising expected utility. 
For all \(\optset\in\optsets\), it is defined by\footnote{It is also customary in much of the literature to furthermore remove from a choice set \(\choicefun(\optset)\) those options that are dominated by other options in \(\optset\) for the point-wise ordering \(\geq\) of options. We will leave this implicit here, as an operation that can always be performed \emph{afterwards}.}
\begin{equation}\label{eq:choicefromlinear}
\choicefun[\linprev](\optset)
\coloneqq\cset{\gbl\in\optset}{(\forall\altgbl\in\optset)\linprev(\gbl)\geq\linprev(\altgbl)}.
\end{equation}
That this is a special case of our more general framework can be seen by defining the set of desirable option sets
\begin{equation}\label{eq:Kfromlinprev}
\rejectset[\linprev]\coloneqq\cset{\optset\in\optsets}{(\exists\gbl\in\optset)\linprev(\gbl)>0},
\end{equation}
which is easily verified to be coherent. 
By applying Equation~\eqref{eq:choiceandrejectionfromK}, we find that the corresponding choice function is indeed given by~\(\choicefun[\linprev]\): for any \(\optset\in\optsets\) and~\(\gbl\in\optset\), we see that
\begin{align*}
\optset\ominus\gbl\notin\rejectset[\linprev]
\ifandonlyif\neg(\exists\altgbl\in\optset\setminus\set{\gbl})\linprev(\altgbl-\gbl)>0
&\ifandonlyif(\forall\altgbl\in\optset\setminus\set{\gbl})\linprev(\altgbl-\gbl)\leq0\\
&\ifandonlyif(\forall\altgbl\in\optset\setminus\set{\gbl})\linprev(\gbl)\geq\linprev(\altgbl)\\
&\ifandonlyif(\forall\altgbl\in\optset)\linprev(\gbl)\geq\linprev(\altgbl).
\end{align*}
It is clear that the choice models~\(\rejectset[\linprev]\) and~\(\choicefun[\linprev]\) are \emph{binary}, in the sense that they are completely determined by a binary strict preference relation on gambles \(\gbl,\altgbl\in\gbls\), in this case expressed by \(\linprev(\gbl)>\linprev(\altgbl)\).

More generally, we can replace the linear prevision, or probability measure, \(\linprev\) by a set \(\mathscr{P}\subseteq\linprevs\) of such previsions. 
In that case, one possible approach to decision making is to apply Levi's E-admissibility criterion \cite{levi1980a,troffaes2007}, which amounts to considering the union of the choices of the individual \(\linprev\in\mathscr{P}\), or equivalently, rejecting the options that are rejected under every \(\linprev\in\mathscr{P}\), typically leading to a non-binary choice model.\footnote{Strictly speaking, Levi only introduced, and argued for, this criterion in a context where he required the set \(\mathscr{P}\) to be \emph{convex}. We will still use the term `E-admissibility' even when \(\mathscr{P}\) is not convex. As mentioned before, we also leave the removal of \(\geq\)-dominated options implicit, as something that can be done afterwards.}
We let \(\choicefun[\mathscr{P}]\) be the choice function that corresponds to this decision criterion, defined by
\begin{align}
\choicefun[\mathscr{P}](\optset)
\coloneqq&\bigcup\cset{\choicefun[\linprev](\optset)}{\linprev\in\mathscr{P}}\notag\\
=&\cset[\big]{\gbl\in\optset}{(\exists\linprev\in\mathscr{P})(\forall\altgbl\in\optset)\linprev(\gbl)\geq\linprev(\altgbl)}
\text{ for all \(\optset\in\optsets\).}
\label{eq:choicefromEadmissibility}
\end{align}
This too corresponds to a special case of our framework, as can be seen by applying Equation~\eqref{eq:choiceandrejectionfromK} to the set of desirable option sets
\begin{equation}\label{eq:KfromEadmissibility}
\rejectset[\mathscr{P}]\coloneqq\bigcap_{\linprev\in\mathscr{P}}\rejectset[\linprev]
=\cset{\optset\in\optsets}{(\forall\linprev\in\mathscr{P})(\exists\gbl\in\optset)\linprev(\gbl)>0}.
\end{equation}
That \(\rejectset[\mathscr{P}]\) is coherent can be seen by observing that coherence is preserved under taking (non-empty) intersections of sets of desirable option sets.

\subsection{Choice functions based on lower previsions}\label{sec:choicefromlower}
While intuitive and straightforward,  E-admissibility is not the only possible generalisation of expectation maximisation. 
Within the field of imprecise probabilities, (Walley--Sen) maximality~\cite{troffaes2007,walley1991} is another extension that is often adopted; this too, as we will see, corresponds to a special case of our framework.

The uncertainty models to which the decision criterion of maximality is typically applied are not linear previsions, but rather a generalisation called \emph{coherent lower previsions}. 
We will only give a very brief account of them here; many more details about their interpretation and mathematical properties can be found in \cite{troffaes2013:lp,walley1991}.

\begin{definition}[Coherent lower prevision]\label{def:lowerprevision}
A \emph{coherent lower prevision}~\(\lowprev\) on~\(\gbls\) is a real-valued map on~\(\gbls\) that satisfies
\begin{enumerate}[label=\(\mathrm{LP}_{\arabic*}\).,ref=\(\mathrm{LP}_{\arabic*}\),leftmargin=*,start=1]
\item\label{ax:lowprev:inf} \(\lowprev(\gbl)\geq\inf\gbl\) for all \(\gbl\in\gbls\);
\item\label{ax:lowprev:homo} \(\lowprev(\lambda\gbl)=\lambda\lowprev(\gbl)\) for all \(\lambda\in\posreals\) and \(\gbl\in\gbls\);
\item\label{ax:lowprev:superadditive} \(\lowprev(\gbl+\altgbl)\geq\lowprev(\gbl)+\lowprev(\altgbl)\) for all \(\gbl,\altgbl\in\gbls\).
\end{enumerate}
We denote the set of all coherent lower previsions on~\(\gbls\) by~\(\cohlowprevs\).
\end{definition}
\noindent For any event \(\aevt\subseteq\pspace\), we will also call \(\lowprev(\aevt)\coloneqq\lowprev(\ind{\aevt})\) the \emph{lower probability} of~\(\aevt\) and \(\smash{\uppprev(\aevt)\coloneqq\uppprev(\ind{\aevt})}\) its \emph{upper probability}.

By comparing Definitions~\ref{def:linearprevision} and~\ref{def:lowerprevision}, we see that linear previsions indeed correspond to a special case of coherent lower previsions. 
In particular, they are coherent lower previsions \(\lowprev\) that are \emph{precise}, in the sense that they coincide with their conjugate upper prevision~\(\smash{\uppprev}\), defined by
\begin{equation*}
\uppprev(\gbl)\coloneqq-\lowprev(-\gbl)
\text{ for all \(\gbl\in\gbls\).}
\end{equation*}

\begin{proposition}[{\protect\cite[Section~2.3.6]{walley1991}}]
Let \(\lowprev\) be a coherent lower prevision on~\(\gbls\). 
Then \(\lowprev\) is a linear prevision on~\(\gbls\) if and only if \(\smash{\lowprev(\gbl)=\uppprev(\gbl)}\) for all \(\gbl\in\gbls\).
\end{proposition}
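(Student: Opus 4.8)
The plan is to prove both implications directly from the defining axioms, the only extra ingredient being the conjugacy relation \(\uppprev(\gbl)=-\lowprev(-\gbl)\). For the ``only if'' direction, I would assume \(\lowprev\) is a linear prevision and simply apply \ref{ax:linprev:homo} with \(\lambda=-1\) to get \(\lowprev(-\gbl)=-\lowprev(\gbl)\), so that \(\uppprev(\gbl)=-\lowprev(-\gbl)=\lowprev(\gbl)\) for every \(\gbl\in\gbls\); this needs no further work.

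For the ``if'' direction, I would assume \(\lowprev(\gbl)=\uppprev(\gbl)\) for all \(\gbl\in\gbls\) and first rewrite this hypothesis in the form in which it actually gets used, namely \(\lowprev(-\gbl)=-\lowprev(\gbl)\) for all \(\gbl\in\gbls\) (\emph{self-conjugacy}). Then I would verify \ref{ax:linprev:inf}--\ref{ax:linprev:additive} in turn. Axiom \ref{ax:linprev:inf} is literally \ref{ax:lowprev:inf}. For \ref{ax:linprev:homo}: the case \(\lambda>0\) is \ref{ax:lowprev:homo}; the case \(\lambda=0\) reduces to the auxiliary fact \(\lowprev(0)=0\), which follows because \ref{ax:lowprev:inf} gives \(\lowprev(0)\geq\inf 0=0\) while \ref{ax:lowprev:superadditive} gives \(\lowprev(0)=\lowprev(0+0)\geq2\lowprev(0)\), hence \(\lowprev(0)\leq0\); and the case \(\lambda<0\) is handled by writing \(\lowprev(\lambda\gbl)=\lowprev\bigl(\abs{\lambda}(-\gbl)\bigr)=\abs{\lambda}\lowprev(-\gbl)=-\abs{\lambda}\lowprev(\gbl)=\lambda\lowprev(\gbl)\), using \ref{ax:lowprev:homo} with the positive factor \(\abs{\lambda}\) and then self-conjugacy. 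Finally, for \ref{ax:linprev:additive}, superadditivity \ref{ax:lowprev:superadditive} already gives \(\lowprev(\gbl+\altgbl)\geq\lowprev(\gbl)+\lowprev(\altgbl)\); applying \ref{ax:lowprev:superadditive} to \(-\gbl\) and \(-\altgbl\) and using self-conjugacy three times yields the reverse inequality \(\lowprev(\gbl+\altgbl)=-\lowprev(-\gbl-\altgbl)\leq-\lowprev(-\gbl)-\lowprev(-\altgbl)=\lowprev(\gbl)+\lowprev(\altgbl)\).

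I do not expect a serious obstacle. The only points that call for a little care are that \ref{ax:lowprev:homo} supplies only \emph{positive} homogeneity, so the scalars \(\lambda=0\) and \(\lambda<0\) in \ref{ax:linprev:homo} must be argued separately (with \(\lambda=0\) resting on the preliminary \(\lowprev(0)=0\)), and that one should consistently translate the hypothesis \(\lowprev=\uppprev\) into its conjugate form before using it, since that is the shape in which it combines cleanly with superadditivity.
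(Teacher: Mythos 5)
Your proof is correct. The paper itself gives no proof of this proposition---it is simply cited from Walley's book---so there is nothing to compare against; your direct verification (conjugacy via \ref{ax:linprev:homo} with \(\lambda=-1\) for the `only if' direction, and checking \ref{ax:linprev:inf}--\ref{ax:linprev:additive} from \ref{ax:lowprev:inf}--\ref{ax:lowprev:superadditive} together with self-conjugacy for the `if' direction, with the cases \(\lambda=0\) and \(\lambda<0\) handled separately via the preliminary fact \(\lowprev(0)=0\)) is the standard argument and is complete.
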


Besides their defining properties~\ref{ax:lowprev:inf}--\ref{ax:lowprev:superadditive}, coherent lower previsions also satisfy various other properties that are often conveniently used in proofs. 
We only mention a small selection and refer to~\cite[Section~2.6.1]{walley1991} and~\cite{troffaes2013:lp} for more extensive lists, proofs and further discussion:
\begin{enumerate}[label=\(\mathrm{LP}_{\arabic*}\).,ref=\(\mathrm{LP}_{\arabic*}\),leftmargin=*,start=4]
\item\label{ax:lowprev:infandsup} \(\inf\gbl\leq\lowprev(\gbl)\leq\uppprev(\gbl)\leq\sup\gbl\) for all \(\gbl\in\gbls\);
\item\label{ax:lowprev:monotonicity} if \(\gbl\leq\altgbl\) then \(\lowprev(\gbl)\leq\lowprev(\altgbl)\) and \(\uppprev(\gbl)\leq\uppprev(\altgbl)\) for all \(\gbl,\altgbl\in\gbls\);
\item\label{ax:lowprev:constantadditive} \(\lowprev(\gbl+\mu)=\lowprev(\gbl)+\mu\) for all \(\gbl\in\gbls\) and \(\mu\in\reals\);
\item\label{ax:lowprev:uniformconvergence} \(\lowprev(\gbli[n])\to\lowprev(\gbl)\) for all \(\gbli[n],\gbl\in\gbls\) such that \(\sup\abs{\gbli[n]-\gbl}\to0\);
\item\label{ax:lowprev:mixed:additivity} \(\lowprev(\gbl)+\lowprev(\altgbl)\leq\lowprev(\gbl+\altgbl)\leq\lowprev(\gbl)+\uppprev(\altgbl)\leq\uppprev(\gbl+\altgbl)\leq\uppprev(\gbl)+\uppprev(\altgbl)\) for all \(\gbl,\altgbl\in\gbls\).
\end{enumerate}
Observe that we have identified the real number~\(\mu\) with the gamble that assumes this constant value.

Given a coherent lower prevision~\(\lowprev\) on~\(\gbls\), we let \(\choicefun[\lowprev]\) be the choice function that is obtained by applying the criterion of maximality. 
For all \(\optset\in\optsets\), it is defined by
\begin{equation}\label{eq:choicefromlower}
\choicefun[\lowprev](\optset)
\coloneqq\cset[\big]{\gbl\in\optset}{(\forall\altgbl\in\optset)\lowprev(\altgbl-\gbl)\leq0}.
\end{equation}
The idea here is that an option~\(\gbl\in\optset\) is rejected from an option set~\(\optset\) if there is some (other) option~\(\altgbl\in\optset\) such that \(\lowprev(\altgbl-\gbl)>0\).\footnote{Again, we leave the removal of dominated options for the point-wise ordering~\(\geq\) implicit, as something that can be done afterwards.}
As for linear previsions, this leads to a binary choice model, as it is completely determined by this binary strict preference relation on gambles.

Under the behavioural interpretation of lower previsions~\cite[Section~2.3.1]{walley1991}, where the lower prevision of a gamble is interpreted as the supremum price for buying that gamble, \(\lowprev(\altgbl-\gbl)>0\) means that our subject is willing to pay a strictly positive price for the gamble \(\altgbl-\gbl\), or equivalently, that she is willing to pay a strictly positive price to replace the uncertain reward \(\gbl\) by~\(\altgbl\).

Alternatively, the statement that \(\lowprev(\altgbl-\gbl)>0\) can also be interpreted in terms of linear previsions, by considering the set of all linear previsions that dominate \(\lowprev\):
\begin{equation*}
\linprevs(\lowprev)
\coloneqq
\cset{\linprev\in\linprevs}{\linprev(\gbl)\geq\lowprev(\gbl)\text{ for all \(\gbl\in\gbls\)}}.
\end{equation*}
This set is non-empty, convex and closed with respect to the topology of point-wise convergence of bounded linear real functionals, and it furthermore has \(\lowprev\) and \(\smash{\uppprev}\) as its lower and upper envelopes~\cite[Section~3.3.3]{walley1991}:
\begin{equation}\label{eq:lowerandupperenvelope}
\lowprev(\gbl)=\min\cset{\linprev(\gbl)}{\linprev\in\linprevs(\lowprev)}
\text{ and }
\uppprev(\gbl)=\max\cset{\linprev(\gbl)}{\linprev\in\linprevs(\lowprev)}.
\end{equation}
The statement that \(\lowprev(\altgbl-\gbl)\) is strictly positive can therefore also be interpreted as stating that \(\linprev(\altgbl)>\linprev(\gbl)\) for all \(\linprev\in\linprevs(\lowprev)\). 
This leads to the following alternative characterisation of~\(\choicefun[\lowprev]\):
\begin{equation*}
 \choicefun[\lowprev](\optset)
 =\cset[\big]{\gbl\in\optset}{(\forall\altgbl\in\optset)(\exists\linprev\in\linprevs(\lowprev))\linprev(\gbl)\geq\linprev(\altgbl)}
\text{ for all \(\optset\in\optsets\).}
\end{equation*}
By comparing this expression with Equation~\eqref{eq:choicefromEadmissibility}, we see that, generally speaking, \(\choicefun[\lowprev]\) doesn't coincide with~\(\choicefun[\linprevs(\lowprev)]\), which illustrates that maximality and E-admissibility are distinct decision criteria. 
In the precise case, however, they do coincide and then reduce to expectation maximisation. 
For E-admissibility, this is immediate; for maximality, this can be seen by comparing Equations~\eqref{eq:choicefromlinear} and~\eqref{eq:choicefromlower} in the case that \(\lowprev=\linprev\) is linear.

That maximality is also a special case of our desirability-based theory of choice functions can be seen by defining the following set of desirable option sets
\begin{equation}\label{eq:Kfromlowprev}
\rejectset[\lowprev]\coloneqq\cset{\optset\in\optsets}{(\exists\gbl\in\optset)\lowprev(\gbl)>0}.
\end{equation}
It is easily verified to be coherent [this follows from~\ref{ax:lowprev:inf}--\ref{ax:lowprev:superadditive}], and its corresponding choice function is given by~\(\choicefun[\lowprev]\).

Since sets of desirable option sets of this kind are coherent, so are their intersections. 
With any non-empty set \(\mathscr{P}\subseteq\cohlowprevs\) of coherent lower previsions, we can therefore associate a coherent set of desirable option sets
\begin{equation}
\rejectset[\mathscr{P}]\coloneqq\bigcap\cset{\rejectset[\lowprev]}{\lowprev\in\mathscr{P}}
=\cset{\optset\in\optsets}{(\forall\lowprev\in\mathscr{P})(\exists\gbl\in\optset)\lowprev(\gbl)>0}
\label{eq:Kfromsetsoflower}
\end{equation}
and its corresponding choice function~\(\choicefun[\mathscr{P}]\), given by
\begin{align}
\choicefun[\mathscr{P}](\optset)
\coloneqq&\bigcup\cset{\choicefun[\lowprev](\optset)}{\lowprev\in\mathscr{P}}\notag\\
=&\cset[\big]{\gbl\in\optset}{(\exists\lowprev\in\mathscr{P})(\forall\altgbl\in\optset)\lowprev(\altgbl-\gbl)\leq0}
\text{ for all \(\optset\in\optsets\).}
\label{eq:choicefromsetsoflower}
\end{align}
If \(\mathscr{P}\) consists of linear previsions only, these respective expressions reduce to the Equations~\eqref{eq:KfromEadmissibility} and~\eqref{eq:choicefromEadmissibility} for E-admissibility. 
If \(\mathscr{P}\) consists of a single lower prevision, we obtain maximality, and if this single lower prevision is furthermore linear, we arrive at expectation maximisation. 
So we see that this class of choice functions contains all the special cases that we have seen so far. 
We will call all such choice models \emph{Archimedean}.

\begin{definition}\label{def:archimedeanity}
A set of desirable option sets \(\rejectset\) is called \emph{Archimedean} if there is some non-empty set \(\mathscr{P}\subseteq\cohlowprevs\) of coherent lower previsions such that \(\rejectset=\rejectset[\mathscr{P}]\). 
The largest such set~\(\mathscr{P}\) is then \(\cohlowprevs(\rejectset)\coloneqq\cset{\lowprev\in\cohlowprevs}{\rejectset\subseteq\rejectset[\lowprev]}\).
\end{definition}

\subsection{An axiomatic basis for working with linear and lower previsions}
In the remainder of this contribution, we will focus on Archimedean sets of desirable option sets and their corresponding choice functions, either in their full generality or in particular cases. 
However, rather than impose this Archimedean property {\itshape ad hoc}, we prefer to derive it from first principles by imposing additional axioms, besides coherence, on the sets of desirable option sets that model a subject's choices.

In order to achieve this, we strengthen our interpretation for a set of desirable option sets~\(\rejectset\). 
That is, for any \(\optset\in\optsets\), \(\optset\in\rejectset\) is henceforth taken to mean that there is at least one gamble \(f\) in~\(\optset\) that is \emph{strictly desirable}~\cite{walley1991,walley2000},\footnote{There may arise, due to Walley's \cite{walley1991,walley2000} perhaps unfortunate introduction of this terminology, some confusion in the reader's mind about the use of `strict'. In most accounts of preference relations, the term `strict preference' refers to `(weak) preference without indifference', and it is also in this sense that we have used the term `strict preference' in the Introduction. Walley uses the moniker `strict' for a stronger requirement, which is essentially based on some lower (or linear) prevision being \emph{strictly} positive. We maintain this rather unhappy use of terminology here merely for historical reasons, but insist on warning the reader about the possible confusion this may entail.} in the sense that there is some \(\epsilon\in\posreals\) such that \(f-\epsilon\) is desirable, where---as before for \(\mu\)---we identify the real number \(\epsilon\) with the constant gamble that takes the value \(\epsilon\).

In an earlier paper~\cite{debock2019:interpretation}, this interpretation in terms of strict desirability led us to propose a notion of Archimedeanity for sets of desirable option sets. 
With hindsight, we now prefer to call it \emph{strong Archimedeanity}, and to reserve the term Archimedeanity for sets of desirable option sets~\(\rejectset[\mathscr{P}]\) that correspond to a set of lower previsions~\(\mathscr{P}\subseteq\cohlowprevs\).

\begin{definition}[Strongly Archimedean set of desirable option sets]\label{def:gamblestronglyarchimedean:setsofsets}
We call a set of desirable option sets \(\rejectset\) \emph{strongly Archimedean} if it is coherent and satisfies
\begin{enumerate}[label=\(\mathrm{K}_{\mathrm{A}}\).,ref=\(\mathrm{K}_{\mathrm{A}}\),leftmargin=*]
\item\label{ax:rejects:gamblearch} for all \(\optset\in\rejectset\), there is some \(\epsilon\in\posreals\) such that \(\optset-\epsilon\in\rejectset\).
\end{enumerate}
\end{definition}
\noindent As we proved in earlier work~\cite{debock2019:interpretation}, strongly Archimedean choice models are in a one-to-one correspondence with sets of coherent lower previsions that are \emph{closed} in the topology on bounded real functionals induced by point-wise convergence.

\begin{theorem}[Representation for strongly Archimedean choice models]\label{theo:rejectsets:representation:lowerprev:twosided}
A set of desirable option sets \(\rejectset\) is strongly Archimedean if and only if there is a non-empty closed set \(\mathscr{P}\subseteq\cohlowprevs\) of coherent lower previsions such that \(\rejectset=\rejectset[\mathscr{P}]\).
The largest such set~\(\mathscr{P}\) is then\/ \(\cohlowprevs(\rejectset)\).
\end{theorem}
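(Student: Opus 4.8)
The plan is to prove the two directions separately; the one topological fact that makes everything work is that $\cohlowprevs$ is compact in the topology of pointwise convergence, since by~\ref{ax:lowprev:infandsup} we have $\inf\gbl\leq\lowprev(\gbl)\leq\sup\gbl$ for all $\lowprev\in\cohlowprevs$ and $\gbl\in\gbls$, so $\cohlowprevs$ is a closed subset of the compact product $\prod_{\gbl\in\gbls}[\inf\gbl,\sup\gbl]$ (the requirements~\ref{ax:lowprev:inf}--\ref{ax:lowprev:superadditive} survive pointwise limits).

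\emph{If.} Suppose $\rejectset=\rejectset[\mathscr{P}]=\bigcap_{\lowprev\in\mathscr{P}}\rejectset[\lowprev]$ with $\mathscr{P}\subseteq\cohlowprevs$ non-empty and closed. Coherence of~$\rejectset$ is exactly the observation already made in Section~\ref{sec:choicefromlower}: each $\rejectset[\lowprev]$ is coherent and coherence is preserved under non-empty intersections. For~\ref{ax:rejects:gamblearch}, I would fix $\optset\in\rejectset$ and look at the map $m$ on~$\mathscr{P}$ defined by $m(\lowprev)\coloneqq\max_{\gbl\in\optset}\lowprev(\gbl)$ --- a genuine maximum since $\optset$ is finite --- which is continuous, being a finite maximum of (continuous) evaluation maps. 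Since $\optset\in\rejectset[\lowprev]$ for every $\lowprev\in\mathscr{P}$, $m$ is strictly positive on the compact set~$\mathscr{P}$ and hence attains there a minimum $2\epsilon>0$. By~\ref{ax:lowprev:constantadditive}, $\max_{\gbl\in\optset}\lowprev(\gbl-\epsilon)=m(\lowprev)-\epsilon\geq\epsilon>0$ for all $\lowprev\in\mathscr{P}$, so $\optset-\epsilon\in\rejectset[\lowprev]$ for every $\lowprev\in\mathscr{P}$, and therefore $\optset-\epsilon\in\rejectset[\mathscr{P}]=\rejectset$, which is~\ref{ax:rejects:gamblearch}.

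\emph{Only if.} Let $\rejectset$ be strongly Archimedean; I claim that $\cohlowprevs(\rejectset)$ is a suitable --- indeed the largest --- choice for the set of lower previsions in the theorem. The inclusion $\rejectset\subseteq\rejectset[\cohlowprevs(\rejectset)]$ is immediate from the definition of~$\cohlowprevs(\rejectset)$, so the heart of the matter is the converse $\rejectset[\cohlowprevs(\rejectset)]\subseteq\rejectset$, which I would prove contrapositively: given $\optset[0]\in\optsets$ with $\optset[0]\notin\rejectset$ (and, by~\ref{ax:rejects:mono}, we may assume $0\notin\optset[0]$), I produce some $\lowprev\in\cohlowprevs(\rejectset)$ with $\optset[0]\notin\rejectset[\lowprev]$, which then forces $\optset[0]\notin\rejectset[\cohlowprevs(\rejectset)]$. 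To build~$\lowprev$ I would first invoke the representation of coherent sets of desirable option sets by coherent sets of desirable gambles from~\cite{debock2019:interpretation}: since $\rejectset$ is coherent and $\optset[0]\notin\rejectset$, there is a coherent set of desirable gambles~$D$ (one satisfying~\ref{ax:desirability:nozero}--\ref{ax:desirability:cone}) that \emph{dominates}~$\rejectset$ --- meaning $\optset\cap D\neq\emptyset$ for every $\optset\in\rejectset$ --- while $\optset[0]\cap D=\emptyset$. The Archimedean-specific step is then to pass to the strictly desirable core $D'\coloneqq\cset{\gbl\in\gbls}{(\exists\epsilon\in\posreals)\,\gbl-\epsilon\in D}$: one checks routinely that $D'$ is again a coherent set of desirable gambles with $D'\subseteq D$, so still $\optset[0]\cap D'=\emptyset$, and --- crucially --- that $D'$ keeps dominating~$\rejectset$: for $\optset\in\rejectset$, axiom~\ref{ax:rejects:gamblearch} gives $\epsilon\in\posreals$ with $\optset-\epsilon\in\rejectset$, so by domination of~$D$ some $\gbl\in\optset$ has $\gbl-\epsilon\in D$, whence $\gbl\in\optset\cap D'$. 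Finally, $\lowprev(\gbl)\coloneqq\sup\cset{\mu\in\reals}{\gbl-\mu\in D}$ defines a coherent lower prevision with $\cset{\gbl\in\gbls}{\lowprev(\gbl)>0}=D'$ (a standard desirability computation), so $\rejectset[\lowprev]=\cset{\optset\in\optsets}{\optset\cap D'\neq\emptyset}$ contains~$\rejectset$ (hence $\lowprev\in\cohlowprevs(\rejectset)$) and excludes~$\optset[0]$, as required.

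Combining the two inclusions gives $\rejectset=\rejectset[\cohlowprevs(\rejectset)]$, and it remains to handle the extra clauses. $\cohlowprevs(\rejectset)$ is non-empty, for otherwise $\rejectset=\rejectset[\cohlowprevs(\rejectset)]=\optsets$, against~\ref{ax:rejects:nozero}. It is closed: if $\lowprev[n]\to\lowprev$ pointwise with each $\lowprev[n]\in\cohlowprevs(\rejectset)$, then $\lowprev\in\cohlowprevs$ by compactness of~$\cohlowprevs$, and for every $\optset\in\rejectset$ axiom~\ref{ax:rejects:gamblearch} yields $\epsilon\in\posreals$ with $\optset-\epsilon\in\rejectset\subseteq\rejectset[{\lowprev[n]}]$, so each $\lowprev[n]$ has some $\gbli[n]\in\optset$ with $\lowprev[n](\gbli[n])>\epsilon$, and --- $\optset$ being finite --- we may fix $\gbli[n]=\gbl$ along a subsequence to get $\lowprev(\gbl)\geq\epsilon>0$, i.e.\ $\optset\in\rejectset[\lowprev]$, so $\lowprev\in\cohlowprevs(\rejectset)$. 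And it is the largest such set, because any non-empty $\mathscr{P}\subseteq\cohlowprevs$ with $\rejectset=\rejectset[\mathscr{P}]$ satisfies $\rejectset=\rejectset[\mathscr{P}]\subseteq\rejectset[\lowprev]$ for each $\lowprev\in\mathscr{P}$, i.e.\ $\mathscr{P}\subseteq\cohlowprevs(\rejectset)$. The step I expect to be the main obstacle is the converse inclusion $\rejectset[\cohlowprevs(\rejectset)]\subseteq\rejectset$ from the \emph{only if} part: the appeal to the desirability representation of coherent sets of desirable option sets, and --- within the Archimedean setting --- the verification that passing to the strictly desirable core does not destroy domination, which is precisely where~\ref{ax:rejects:gamblearch} does its work. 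Everything else reduces to elementary point-set topology on the compact space~$\cohlowprevs$.
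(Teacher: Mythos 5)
Your proof is correct, but it is worth being explicit that it cannot be ``the same as the paper's'', because the paper offers no argument at all here: it disposes of the theorem in one line by citing Theorem~28 and Proposition~24 of~\cite{debock2019:interpretation}. What you have produced is a self-contained reconstruction of what that reference must do, and the two halves of your strategy are both sound. For the ``if'' direction, the compactness of \(\cohlowprevs\) (as a closed subset of \(\prod_{\gbl}[\inf\gbl,\sup\gbl]\)) together with the continuity of \(\lowprev\mapsto\max_{\gbl\in\optset}\lowprev(\gbl)\) and constant additivity~\ref{ax:lowprev:constantadditive} gives exactly the uniform \(\epsilon\) that axiom~\ref{ax:rejects:gamblearch} demands; this is where closedness of \(\mathscr{P}\) is genuinely used. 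For the ``only if'' direction, the one imported ingredient is the representation of a coherent \(\rejectset\) by coherent sets of desirable gambles \(D\) --- which comes from the very same reference the paper leans on, so this is a fair trade --- and the genuinely Archimedean step, passing to the strictly desirable core \(D'\) and checking via~\ref{ax:rejects:gamblearch} that domination of \(\rejectset\) survives, is supplied correctly by you: the verifications that \(D'\) is coherent, that \(D'\subseteq D\), and that \(\lowprev(\gbl)=\sup\cset{\mu}{\gbl-\mu\in D}\) is a coherent lower prevision with \(\cset{\gbl}{\lowprev(\gbl)>0}=D'\) all go through using only \ref{ax:desirability:nozero}--\ref{ax:desirability:cone}. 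The non-emptiness, maximality and closedness clauses are also handled correctly.

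One minor technical caveat: in the closedness argument you argue with sequences \(\lowprev[n]\to\lowprev\), but the topology of pointwise convergence on \(\cohlowprevs\) is not sequential when \(\gbls\) is infinite-dimensional, so closedness is not characterised by sequences. The fix is immediate: either run the identical argument with nets (finiteness of \(\optset\) still lets you pass to a cofinal subnet on which the witnessing gamble is constant), or observe that \ref{ax:rejects:gamblearch} lets you write
\begin{equation*}
\cohlowprevs(\rejectset)=\bigcap_{\optset\in\rejectset}\cset[\Big]{\lowprev\in\cohlowprevs}{\max_{\gbl\in\optset}\lowprev(\gbl)\geq\epsilon_{\optset}}
\end{equation*}
with \(\epsilon_{\optset}>0\) chosen so that \(\optset-\epsilon_{\optset}\in\rejectset\), exhibiting it directly as an intersection of closed sets. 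Neither repair changes the substance of your argument.
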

\begin{proof}
This result is a direct consequence of~\cite[Theorem~28 and Proposition~24]{debock2019:interpretation}.
\end{proof}

If we compare this result to Definition~\ref{def:archimedeanity}, we see that every strongly Archimedean set of desirable options is---as the terminology also suggests---Archimedean. 
The axiom of strong Archimedeanity can therefore be employed as a justification for working with an Archimedean choice model, or equivalently, with a set \(\mathscr{P}\) of coherent lower previsions. 
However, strong Archimedeanity is a bit too strong for that purpose---hence our change in terminology with respect to~\cite{debock2019:interpretation}---because it additionally implies that \(\mathscr{P}\) is closed.

In order to resolve this issue, one of us has developed alternative axioms that weaken strong Archimedeanity in such a way that the closedness condition in Theorem~\ref{theo:rejectsets:representation:lowerprev:twosided} is no longer needed, which makes sure that these alternative axioms characterise Archimedeanity~\cite{ipmu2020debock,ipmu2020debock:arxiv,partialorderchoice2020debock:arxiv}.
They are still based on an interpretation in terms of strict desirability, but employ this interpretation more subtly. 
Simply put, the subtlety involves the fact that even if a subject assesses that there is some \(f\in\optset\) that is strictly desirable, meaning that there is some \(\epsilon\in\posreals\) such that \(f-\epsilon\) is desirable, she may not know for which specific \(\epsilon\) this is the case. 
We are then no longer justified in stating that \(\optset-\epsilon\in\rejectset\) for some \(\epsilon\in\posreals\), as strong Archimedeanity does. 
Fortunately, however, we can still infer---more involved---conditions on \(\rejectset\) from such an assessment, which turn out to be equivalent to Archimedeanity.
A detailed exposition of these ideas falls outside the scope of the present discussion though; for more information on Archimedean choice functions and how to axiomatise them, we refer to our most recent work on this topic~\cite{ipmu2020debock,ipmu2020debock:arxiv,partialorderchoice2020debock:arxiv,ipmu2020decooman,ipmu2020decooman:arxiv}.
For our present purposes, it will suffice to merely remember that Archimedeanity can be given an axiomatic basis that motivates the use of general---not necessarily closed---sets of coherent lower previsions. 

In order to obtain a representation in terms of sets of linear rather than coherent lower previsions, it turns out that we need to impose one more axiom, which is due to Seidenfeld et al.~\cite{seidenfeld2010}.
It states that we can remove from a desirable option set those options that are positive linear combinations---mixtures---of a number of its other options.\footnote{In this sense, it would perhaps be preferable to call it an `unmixing property', as the term `mixing' is better suited for an approach that favours choice over rejection. Nevertheless, we have decided to stick to `mixing', for reasons of consistency with the terminology introduced in \cite{seidenfeld2010}.}
It involves the following closure operator, which adds to a set of options all the positive linear combinations of any finite number of its elements:
\begin{equation*}
\posi(\optset)\coloneqq
\cset[\bigg]{\sum_{k=1}^n\lambda_k\gbli[k]}{n\in\naturals,\lambda_k>0,\gbli[k]\in\optset}
\text{ for all \(\optset\subseteq\gbls\)}.
\end{equation*}

\begin{definition}[Mixing property for sets of desirable option sets]\label{def:mixingrejects}
We call a set of desirable option sets \(\rejectset\) \emph{mixing} if it is coherent and satisfies
\begin{enumerate}[label=\(\mathrm{K}_{\mathrm{M}}\).,ref=\(\mathrm{K}_{\mathrm{M}}\),leftmargin=*]
\item\label{ax:rejects:removepositivecombinations} if \(\altoptset\in\rejectset\) and \(\optset\subseteq\altoptset\subseteq\posi\group{\optset}\), then also \(\optset\in\rejectset\), for all \(\optset,\altoptset\in\optsets\).
\end{enumerate}
\end{definition}

\begin{proposition}[{\protect\cite[Proposition~2]{ipmu2020debock}}]
\label{prop:mixingimpliessetoflinear}
Let \(\rejectset\) be an Archimedean set of desirable option sets that is mixing. 
Then every coherent lower prevision~\(\lowprev\) in~\(\cohlowprevs(\rejectset)\) is linear.
\end{proposition}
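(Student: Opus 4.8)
The plan is to prove the contrapositive: I will show that any coherent lower prevision $\lowprev$ with $\rejectset\subseteq\rejectset[\lowprev]$ — that is, any $\lowprev\in\cohlowprevs(\rejectset)$ — that fails to be linear also prevents $\rejectset$ from being mixing, by producing a direct violation of axiom~\ref{ax:rejects:removepositivecombinations}. So suppose $\lowprev\in\cohlowprevs(\rejectset)$ is not linear. Since $\lowprev$ is not linear, the (unnamed) proposition following Definition~\ref{def:lowerprevision} yields a gamble $\gbl\in\gbls$ with $\lowprev(\gbl)<\uppprev(\gbl)$ (the opposite inequality $\lowprev(\gbl)\le\uppprev(\gbl)$, i.e.\ $\lowprev(\gbl)+\lowprev(-\gbl)\le0$, always holding by superadditivity~\ref{ax:lowprev:superadditive}). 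After translating $\gbl$ by the midpoint of the interval $[\lowprev(\gbl),\uppprev(\gbl)]$ and invoking constant additivity~\ref{ax:lowprev:constantadditive} (together with its conjugate form for $\uppprev$), I may assume outright that $\lowprev(\gbl)<0<\uppprev(\gbl)$, i.e.\ $\lowprev(\gbl)<0$ and $\lowprev(-\gbl)<0$ (in particular $\gbl\ne0$). Set $\epsilon\coloneqq\min\set{-\lowprev(\gbl),\,\uppprev(\gbl)}>0$.

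Next I would build the two finite option sets
\[
\optset\coloneqq\set{\gbl+\epsilon,\,-\gbl+\epsilon}
\quad\text{and}\quad
\altoptset\coloneqq\set{\gbl+\epsilon,\,-\gbl+\epsilon,\,\epsilon},
\]
where $\epsilon$ in the last slot denotes the constant gamble with value $\epsilon$. On the one hand, $\altoptset\in\rejectset$: since $\inf\epsilon=\epsilon>0$, axiom~\ref{ax:rejects:pos} gives $\set{\epsilon}\in\rejectset$, and then $\set{\epsilon}\subseteq\altoptset$ together with monotonicity~\ref{ax:rejects:mono} yields $\altoptset\in\rejectset$. On the other hand, $\optset\subseteq\altoptset\subseteq\posi(\optset)$: the first inclusion is immediate, and for the second it suffices to note that $\gbl+\epsilon$ and $-\gbl+\epsilon$ lie in $\posi(\optset)$ trivially, while $\epsilon=\tfrac12(\gbl+\epsilon)+\tfrac12(-\gbl+\epsilon)$ exhibits the constant gamble $\epsilon$ as a strictly positive combination of the two elements of $\optset$, so $\epsilon\in\posi(\optset)$ as well. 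Hence, if $\rejectset$ were mixing, applying~\ref{ax:rejects:removepositivecombinations} to $\altoptset$ would force $\optset\in\rejectset$.

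The contradiction then comes from $\optset\notin\rejectset$. Since $\lowprev\in\cohlowprevs(\rejectset)$ means precisely $\rejectset\subseteq\rejectset[\lowprev]$, it is enough to check $\optset\notin\rejectset[\lowprev]$; by the definition of $\rejectset[\lowprev]$ in Equation~\eqref{eq:Kfromlowprev}, this says that no element of $\optset$ has strictly positive lower prevision, which is exactly how $\epsilon$ was chosen: by constant additivity~\ref{ax:lowprev:constantadditive},
\[
\lowprev(\gbl+\epsilon)=\lowprev(\gbl)+\epsilon\le0
\quad\text{and}\quad
\lowprev(-\gbl+\epsilon)=\lowprev(-\gbl)+\epsilon\le0.
\]
So $\optset\notin\rejectset[\lowprev]\supseteq\rejectset$, contradicting $\optset\in\rejectset$, and therefore $\lowprev$ must be linear. (Archimedeanity of $\rejectset$ is used only to give $\cohlowprevs(\rejectset)$ its meaning as the representing set, as in Definition~\ref{def:archimedeanity}; the argument itself needs only that $\lowprev$ is a coherent lower prevision with $\rejectset\subseteq\rejectset[\lowprev]$, plus coherence and the mixing property of $\rejectset$.)

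All of this is routine once the right objects are in hand; the one creative step — and the natural place to get stuck — is choosing the witness $\gbl$ and the pair $(\optset,\altoptset)$. The guiding idea is that a single gamble $\gbl$ straddling $\lowprev$'s indeterminacy interval keeps both $\lowprev(\gbl+\epsilon)$ and $\lowprev(-\gbl+\epsilon)$ non-positive, so $\lowprev$-maximality rejects nothing from $\optset$, whereas adjoining the one harmless option $\epsilon$ — simultaneously a strictly positive mixture of the two options in $\optset$ and a gamble with strictly positive infimum — places $\altoptset$ inside both $\rejectset$ and $\posi(\optset)$, which is exactly the situation the mixing axiom forbids. The only small check along the way is that $\epsilon>0$, i.e.\ that $\lowprev(\gbl)<0$ and $\lowprev(-\gbl)<0$ can be secured simultaneously; that is where non-linearity, in the form $\lowprev(\gbl)+\lowprev(-\gbl)<0$, is actually used. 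No limiting or Archimedean argument is needed.
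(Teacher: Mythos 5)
Your proof is correct. The paper itself does not prove this proposition but defers to an external reference (\cite[Proposition~2]{ipmu2020debock}), so there is no in-paper argument to compare against; your construction is the natural one and every step checks out. In particular, the key choices all work: non-linearity plus \ref{ax:lowprev:infandsup} and constant additivity legitimately yield a gamble \(\gbl\) with \(\lowprev(\gbl)<0<\uppprev(\gbl)\); the constant gamble \(\epsilon=\tfrac12(\gbl+\epsilon)+\tfrac12(-\gbl+\epsilon)\) does place \(\altoptset\) inside \(\posi(\optset)\) while \ref{ax:rejects:pos} and \ref{ax:rejects:mono} put \(\altoptset\) in \(\rejectset\); and the choice \(\epsilon=\min\set{-\lowprev(\gbl),\uppprev(\gbl)}\) makes both \(\lowprev(\gbl+\epsilon)\) and \(\lowprev(-\gbl+\epsilon)\) non-positive, so \(\optset\notin\rejectset[\lowprev]\supseteq\rejectset\), contradicting \ref{ax:rejects:removepositivecombinations}. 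Your closing remark that Archimedeanity enters only through the meaning of \(\cohlowprevs(\rejectset)\) as \(\cset{\lowprev\in\cohlowprevs}{\rejectset\subseteq\rejectset[\lowprev]}\) is also accurate.
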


\begin{proposition}
\label{prop:setoflinearimpliesmixing}
Let\/ \(\mathscr{P}\subseteq\linprevs\) be a non-empty set of linear previsions. 
Then \(\rejectset[\mathscr{P}]\) is mixing.
\end{proposition}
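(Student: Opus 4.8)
The plan is to check the two clauses of Definition~\ref{def:mixingrejects} in turn. Coherence of \(\rejectset[\mathscr{P}]\) is already available to us: for each \(\linprev\in\mathscr{P}\) the set \(\rejectset[\linprev]\) of Equation~\eqref{eq:Kfromlinprev} is coherent [the linear special case of the coherence of \(\rejectset[\lowprev]\), which follows from~\ref{ax:linprev:inf}--\ref{ax:linprev:additive}], and since coherence is preserved under non-empty intersections, \(\rejectset[\mathscr{P}]=\bigcap_{\linprev\in\mathscr{P}}\rejectset[\linprev]\) is coherent as well. So the real work is entirely in verifying the mixing axiom~\ref{ax:rejects:removepositivecombinations}.

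To that end I would fix \(\optset,\altoptset\in\optsets\) with \(\altoptset\in\rejectset[\mathscr{P}]\) and \(\optset\subseteq\altoptset\subseteq\posi(\optset)\), and aim to show \(\optset\in\rejectset[\mathscr{P}]\) --- by Equation~\eqref{eq:KfromEadmissibility}, this means producing, for each \(\linprev\in\mathscr{P}\), some \(\gbl\in\optset\) with \(\linprev(\gbl)>0\). Fix such a \(\linprev\). Because \(\altoptset\in\rejectset[\mathscr{P}]\subseteq\rejectset[\linprev]\), there is some \(\altgbl\in\altoptset\) with \(\linprev(\altgbl)>0\), and because \(\altgbl\in\altoptset\subseteq\posi(\optset)\) we may write \(\altgbl=\sum_{k=1}^n\lambda_k\gbli[k]\) for some \(n\in\naturals\), positive reals \(\lambda_k\), and options \(\gbli[k]\in\optset\). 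Applying linearity (\ref{ax:linprev:homo} and~\ref{ax:linprev:additive}) gives \(\linprev(\altgbl)=\sum_{k=1}^n\lambda_k\linprev(\gbli[k])>0\); since every \(\lambda_k>0\), at least one summand is strictly positive, hence \(\linprev(\gbli[k])>0\) for some \(k\), and this \(\gbli[k]\in\optset\) is the desired option. As \(\linprev\in\mathscr{P}\) was arbitrary, this yields \(\optset\in\rejectset[\mathscr{P}]\), establishing~\ref{ax:rejects:removepositivecombinations}.

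There is no serious obstacle here; the argument is short. The one point worth emphasising is precisely where linearity --- as opposed to mere superadditivity --- is indispensable: the inference ``the sum \(\sum_k\lambda_k\linprev(\gbli[k])\) is positive, so one of its terms is'' relies on the \emph{equality} \(\linprev(\altgbl)=\sum_k\lambda_k\linprev(\gbli[k])\). For a genuinely imprecise coherent lower prevision one would only have \(\lowprev(\altgbl)\geq\sum_k\lambda_k\lowprev(\gbli[k])\) by~\ref{ax:lowprev:homo}--\ref{ax:lowprev:superadditive}, and then \(\lowprev(\altgbl)>0\) says nothing about the individual \(\lowprev(\gbli[k])\) --- which is exactly why this converse is stated for sets of \emph{linear} previsions, and why it dovetails with Proposition~\ref{prop:mixingimpliessetoflinear}.
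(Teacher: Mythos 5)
Your proof is correct and follows essentially the same route as the paper's: decompose the witnessing option \(\altgbl\in\altoptset\subseteq\posi(\optset)\) as a positive linear combination of elements of \(\optset\), and use the linearity of \(\linprev\) to conclude that at least one of those elements must have strictly positive prevision. The only cosmetic difference is that the paper first reduces to a single \(\linprev\) by noting that mixingness is preserved under non-empty intersections, whereas you fix each \(\linprev\in\mathscr{P}\) inside the argument; your closing remark on where linearity (as opposed to superadditivity) is indispensable matches the paper's motivation for restricting this converse to linear previsions.
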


\begin{proof}
\label{theo:representationformixing}
Since mixingness is clearly preserved under taking (non-empty) intersections, it suffices to prove that for any \(\linprev\in\linprevs\), \(\rejectset[\linprev]\) is mixing. 
To this end, consider any \(\linprev\in\linprevs\) and any \(\optset,\altoptset\in\optsets\) such that \(\altoptset\in\rejectset[\linprev]\) and \(\optset\subseteq\altoptset\subseteq\posi\group{\optset}\). 
Since \(\altoptset\in\rejectset[\linprev]\), there is some \(\altgbl\in\altoptset\) such that \(\linprev(\altgbl)>0\). 
Since \(\altoptset\subseteq\posi\group{\optset}\), there are \(n\in\naturals\) and, for each \(k\in\set{1,\dots,n}\), \(\lambda_k>0\) and \(\gbli[k]\in\optset\) such that \(\altgbl=\sum_{k=1}^n\lambda_k\gbli[k]\). 
Hence, it follows from the linearity of~\(\linprev\) that
\begin{equation*}
0
<\linprev(\altgbl)
=\linprev\bigg(\sum_{k=1}^n\lambda_k\gbli[k]\bigg)
=\sum_{k=1}^n\lambda_k\linprev(\gbli[k]),
\end{equation*}
which implies that there is at least one \(k^*\in\set{1,\dots,n}\) such that \(\linprev(\gbli[k^*])>0\). 
Since \(\gbli[k^*]\in\optset\), this implies that \(\optset\in\rejectset[\linprev]\).
\end{proof}

By combining these two results with Theorem~\ref{theo:rejectsets:representation:lowerprev:twosided} and Definition~\ref{def:archimedeanity}, we obtain representation in terms of (closed) sets of linear previsions.

\begin{theorem}\label{thm:archimedean:and:mixing}
A set of desirable option sets \(\rejectset\) is Archimedean and mixing if and only if there is a non-empty set \(\mathscr{P}\subseteq\linprevs\) of coherent linear previsions such that \(\rejectset=\rejectset[\mathscr{P}]\). 
Similarly, \(\rejectset\) is strongly Archimedean and mixing if and only if there is a non-empty closed set \(\mathscr{P}\subseteq\linprevs\) of linear previsions such that \(\rejectset=\rejectset[\mathscr{P}]\).
In both cases, the largest such set~\(\mathscr{P}\) is \(\linprevs(\rejectset)\coloneqq\cset{\linprev\in\linprevs}{\rejectset\subseteq\rejectset[\linprev]}\), and \(\cohlowprevs(\rejectset)=\linprevs(\rejectset)\).
\end{theorem}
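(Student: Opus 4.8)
The plan is to obtain this as a corollary of the machinery already assembled: Definition~\ref{def:archimedeanity} and Theorem~\ref{theo:rejectsets:representation:lowerprev:twosided} supply the representation in terms of (closed) sets of coherent lower previsions, while Propositions~\ref{prop:mixingimpliessetoflinear} and~\ref{prop:setoflinearimpliesmixing} are exactly the two bridges between ``set of coherent lower previsions'' and ``set of linear previsions''. I would prove the two stated equivalences in parallel — the only difference between them being the presence or absence of the closedness requirement — and treat the ``largest such set'' claim last.

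For the \emph{if} directions, suppose $\rejectset=\rejectset[\mathscr{P}]$ for some non-empty $\mathscr{P}\subseteq\linprevs$ (taking $\mathscr{P}$ closed in the strongly Archimedean variant). Since every linear prevision is a coherent lower prevision, $\mathscr{P}$ is then a non-empty (closed) subset of~$\cohlowprevs$, so Definition~\ref{def:archimedeanity} — respectively Theorem~\ref{theo:rejectsets:representation:lowerprev:twosided} — yields at once that $\rejectset$ is Archimedean — respectively strongly Archimedean. Mixingness comes for free from Proposition~\ref{prop:setoflinearimpliesmixing}. This part is pure unwinding of definitions.

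For the \emph{only if} directions, suppose $\rejectset$ is Archimedean (resp.\ strongly Archimedean) and mixing. By Definition~\ref{def:archimedeanity} (resp.\ Theorem~\ref{theo:rejectsets:representation:lowerprev:twosided}) we have $\rejectset=\rejectset[\cohlowprevs(\rejectset)]$ with $\cohlowprevs(\rejectset)$ non-empty, and closed in the strongly Archimedean case. The crucial extra input is Proposition~\ref{prop:mixingimpliessetoflinear}: since $\rejectset$ is Archimedean and mixing (and strong Archimedeanity implies Archimedeanity, so the proposition applies in that case too), every $\lowprev\in\cohlowprevs(\rejectset)$ is in fact linear, i.e.\ $\cohlowprevs(\rejectset)\subseteq\linprevs$. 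Hence $\mathscr{P}\coloneqq\cohlowprevs(\rejectset)$ is a non-empty (closed) subset of~$\linprevs$ with $\rejectset=\rejectset[\mathscr{P}]$, as required.

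Finally, for the ``largest set'' and ``$\cohlowprevs(\rejectset)=\linprevs(\rejectset)$'' claims, I would note that $\linprevs(\rejectset)=\cohlowprevs(\rejectset)\cap\linprevs$ straight from the definitions, so the inclusion $\cohlowprevs(\rejectset)\subseteq\linprevs$ just established gives $\cohlowprevs(\rejectset)=\linprevs(\rejectset)$. Maximality then transfers: any set $\mathscr{P}\subseteq\linprevs$ with $\rejectset=\rejectset[\mathscr{P}]$ is in particular a subset of~$\cohlowprevs$ representing~$\rejectset$, hence contained in $\cohlowprevs(\rejectset)=\linprevs(\rejectset)$ by the maximality clauses of Definition~\ref{def:archimedeanity} / Theorem~\ref{theo:rejectsets:representation:lowerprev:twosided}; and $\linprevs(\rejectset)$ itself represents~$\rejectset$, so it is the largest such set (and it is closed in the strongly Archimedean case, since $\cohlowprevs(\rejectset)$ is). The only subtlety worth watching is keeping the closed/non-closed bookkeeping consistent between the two halves — and, in the strongly Archimedean direction, noting that a closed subset of~$\linprevs$ is closed in~$\cohlowprevs$, since $\linprevs$ is itself closed for the topology of point-wise convergence — but there is no genuine analytic work here: all of it was already done in the cited results.
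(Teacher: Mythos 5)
Your proposal is correct and follows essentially the same route as the paper's own proof: both directions rest on exactly the same four ingredients (Definition~\ref{def:archimedeanity}, Theorem~\ref{theo:rejectsets:representation:lowerprev:twosided}, and Propositions~\ref{prop:mixingimpliessetoflinear} and~\ref{prop:setoflinearimpliesmixing}), and your handling of the maximality and $\cohlowprevs(\rejectset)=\linprevs(\rejectset)$ claims is an equivalent reorganisation of the paper's sandwich argument. The only cosmetic difference is that you take $\mathscr{P}=\cohlowprevs(\rejectset)$ directly as the witness in the only-if direction, whereas the paper works with an arbitrary representing $\mathscr{P}$ and then identifies the largest one; both are fine.
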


\begin{proof}
First assume that \(\rejectset\) is (strongly) Archimedean and mixing. 
Since \(\rejectset\) is (strongly) Archimedean, we know from Definition~\ref{def:archimedeanity} (Theorem~\ref{theo:rejectsets:representation:lowerprev:twosided}) that there is a non-empty (closed) set \(\mathscr{P}\subseteq\cohlowprevs\) of coherent lower previsions such that \(\rejectset=\rejectset[\mathscr{P}]\). 
Furthermore, since \(\rejectset\) is mixing, it follows from Proposition~\ref{prop:mixingimpliessetoflinear} that \(\mathscr{P}\subseteq\linprevs\).

Conversely, assume that there is a non-empty (closed) set \(\mathscr{P}\subseteq\linprevs\) such that \(\rejectset=\rejectset[\mathscr{P}]\).
Proposition~\ref{prop:setoflinearimpliesmixing} then implies that \(\rejectset\) is mixing. 
Furthermore, since \(\linprevs\subseteq\cohlowprevs\), it follows from Definition~\ref{def:archimedeanity} (Theorem~\ref{theo:rejectsets:representation:lowerprev:twosided}) that \(\rejectset\) is (strongly) Archimedean. 
It remains to show that any such \(\mathscr{P}\) is a subset of~\(\linprevs(\rejectset)\), that \(\linprevs(\rejectset)\) is non-empty (and closed), that \(\linprevs(\rejectset)=\cohlowprevs(\rejectset)\) and that \(\rejectset=\rejectset[\linprevs(\rejectset)]\).

That \(\mathscr{P}\) is a subset of~\(\linprevs(\rejectset)\) follows immediately from the definitions of~\(\rejectset[\mathscr{P}]\) and~\(\linprevs(\rejectset)\) and from the fact that \(\rejectset=\rejectset[\mathscr{P}]\). 
On the one hand, since \(\mathscr{P}\) is non-empty, this establishes the non-emptiness of~\(\linprevs(\rejectsets)\). 
On the other hand, this implies that \(\rejectset[\linprevs(\rejectset)]\subseteq\rejectset[\mathscr{P}]=\rejectset\), and therefore, since \(\rejectset\) is clearly a subset of~\(\rejectset[\linprevs(\rejectset)]\), that \(\rejectset=\rejectset[\linprevs(\rejectset)]\). 
That \(\linprevs(\rejectset)=\cohlowprevs(\rejectset)\) follows from Proposition~\ref{prop:mixingimpliessetoflinear}. 
That \(\linprevs(\rejectset)\) is closed if \(\rejectset\) is strongly Archimedean and mixing, finally, follows from the fact that \(\linprevs(\rejectset)=\cohlowprevs(\rejectset)\) and Theorem~\ref{theo:rejectsets:representation:lowerprev:twosided}.
\end{proof}

\section{S-Irrelevance for events}\label{sec:S-irrelevance:for:events}
Let us now assume that our subject's preferences are modelled by some coherent set of desirable option sets \(\rejectset\).
We consider two events \(\aevt\) and \(\bevt\), and investigate the import of Teddy Seidenfeld's operationalisation of the independence requirement discussed in the Introduction: for any \(\gbl\) and \(\altgbl\) in~\(\gbls[\bevt]\) and all real \(\epsilon>0\) 
\begin{equation*}
\ind{\aevt}\gbl+\ind{\co{\aevt}}\altgbl-\epsilon
\text{ is rejected from the option set }
\optset[\epsilon]\coloneqq\set{\gbl,\altgbl,\ind{\aevt}f+\ind{\co{\aevt}}\altgbl-\epsilon}.
\end{equation*}
Taking into account that
\begin{equation*}
\gbl-(\ind{\aevt}\gbl+\ind{\co{\aevt}}\altgbl-\epsilon)
=(\gbl-\altgbl)\ind{\co{\aevt}}+\epsilon
\text{ and }
\altgbl-(\ind{\aevt}\gbl+\ind{\co{\aevt}}\altgbl-\epsilon)
=(\altgbl-\gbl)\ind{\aevt}+\epsilon,
\end{equation*}
Equation~\eqref{eq:choiceandrejectionfromK} leads us to the following definition.
It acknowledges that the above requirement is not necessarily symmetrical in~\(\aevt\) and \(\bevt\), and therefore leads to a notion of irrelevance; independence is then introduced as symmetrised irrelevance.

\begin{definition}[S-irrelevance and S-independence]\label{def:S-irrelevance}
Consider two events \(\aevt,\bevt\subseteq\pspace\).
We say that \(\aevt\) is \emph{S-irrelevant} to \(\bevt\) with respect to a coherent set of desirable option sets~\(\rejectset\) if
\begin{equation}\label{eq:simplifiedphysirr}
\set{\ind{\co{\aevt}}\gbl+\epsilon,-\ind{\aevt}\gbl+\epsilon}\in\rejectset
\text{ for all \(\gbl\in\gbls[\bevt]\) and all \(\epsilon>0\)}.
\end{equation}
We say that \(\aevt\) and \(\bevt\) are \emph{S-independent} with respect to~\(\rejectset\) if \(\aevt\) is S-irrelevant to~\(\bevt\) and \(\bevt\) is S-irrelevant to~\(\aevt\).
\end{definition}
\noindent
These irrelevance and independence notions are invariant under complementation, as follows easily from Definition~\ref{def:S-irrelevance}.

\begin{proposition}\label{prop:symmetry:for:S-irrelevance}
Consider any two events \(\aevt,\bevt\subseteq\pspace\), and any \(\taevt\in\set{\aevt,\co{\aevt}}\) and \(\tbevt\in\set{\bevt,\co{\bevt}}\).
Then \(\aevt\) is S-irrelevant to~\(\bevt\) with respect to a coherent set of desirable option sets~\(\rejectset\) if and only if \(\taevt\) is S-irrelevant to~\(\tbevt\) with respect to~\(\rejectset\).
Similarly, \(\aevt\) and \(\bevt\) are S-independent with respect to~\(\rejectset\) if and only if \(\taevt\) and~\(\tbevt\) are.
\end{proposition}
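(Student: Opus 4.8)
The plan is to base the whole argument on two elementary observations about the set \(\gbls[\bevt]\) of gambles on the occurrence of an event, after which everything follows by pure rewriting of the membership condition in Equation~\eqref{eq:simplifiedphysirr}.

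First I would note that \(\gbls[\bevt]=\gbls[\co{\bevt}]\): the gamble \(\lambda\ind{\bevt}+\mu\ind{\co{\bevt}}\) is literally the same object as \(\mu\ind{\co{\bevt}}+\lambda\ind{\co{\co{\bevt}}}\), so replacing \(\bevt\) by \(\co{\bevt}\) does not change the family of gambles quantified over in Definition~\ref{def:S-irrelevance}. Since moreover \(\co{\aevt}\) never occurs inside that family, this already shows that S-irrelevance of \(\aevt\) to \(\bevt\) and S-irrelevance of \(\aevt\) to \(\co{\bevt}\) are the very same statement, and likewise with \(\aevt\) replaced by \(\co{\aevt}\); so the \(\tbevt\)-part of the claim is immediate.

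Second I would observe that \(\gbls[\bevt]\) is closed under \(\gbl\mapsto-\gbl\), because \(-(\lambda\ind{\bevt}+\mu\ind{\co{\bevt}})=(-\lambda)\ind{\bevt}+(-\mu)\ind{\co{\bevt}}\). Hence the universal quantifier ``for all \(\gbl\in\gbls[\bevt]\)'' in Equation~\eqref{eq:simplifiedphysirr} is unaffected if we substitute \(-\gbl\) for \(\gbl\) throughout. Doing so turns the option set \(\set{\ind{\co{\aevt}}\gbl+\epsilon,-\ind{\aevt}\gbl+\epsilon}\) into \(\set{-\ind{\co{\aevt}}\gbl+\epsilon,\ind{\aevt}\gbl+\epsilon}\), which — using \(\ind{\co{\co{\aevt}}}=\ind{\aevt}\) — is precisely the option set appearing in Equation~\eqref{eq:simplifiedphysirr} when \(\aevt\) is replaced by \(\co{\aevt}\). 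This gives the \(\taevt\)-part: S-irrelevance of \(\aevt\) to \(\bevt\) is equivalent to S-irrelevance of \(\co{\aevt}\) to \(\bevt\). Combining the two observations then handles an arbitrary pair \((\taevt,\tbevt)\in\set{\aevt,\co{\aevt}}\times\set{\bevt,\co{\bevt}}\).

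Finally, since S-independence of two events is by definition the conjunction of the two S-irrelevance statements between them, its invariance follows at once: \(\taevt\) is S-irrelevant to \(\tbevt\) iff \(\aevt\) is S-irrelevant to \(\bevt\), and \(\tbevt\) is S-irrelevant to \(\taevt\) iff \(\bevt\) is S-irrelevant to \(\aevt\), so the two conjunctions coincide. I do not expect any genuine obstacle here; the only thing to get right is the sign-and-complement bookkeeping when passing from \(\gbl\) to \(-\gbl\), together with the easy but essential facts that \(\gbls[\bevt]\) is symmetric under negation and insensitive to complementing \(\bevt\) — and, as the argument is merely a relabelling of membership statements, no coherence axiom of \(\rejectset\) is actually invoked.
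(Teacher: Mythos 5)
Your proposal is correct and follows essentially the same route as the paper's own proof: both rest on the observations that \(\gbls[\bevt]=\gbls[\co{\bevt}]\) and that \(\gbls[\bevt]\) is closed under negation, so that substituting \(-\gbl\) for \(\gbl\) turns the option set in Equation~\eqref{eq:simplifiedphysirr} into the one for \(\co{\aevt}\). Your remark that no coherence axiom is needed is also accurate, as the paper's argument likewise uses none.
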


\begin{proof}
We concentrate on the statement concerning S-irrelevance, as the proof for the statement about S-independence then follows immediately. 
Due to the symmetry of the statement, it clearly suffices to prove necessity.

So assume that \(\aevt\) is S-irrelevant to~\(\bevt\) with respect to~\(\rejectset\). 
We start by observing that \(\gbls[\tbevt]=\gbls[\bevt]\). 
So if \(\taevt=\aevt\), our assumption trivially implies that \(\taevt\) is S-irrelevant to~\(\tbevt\) with respect to~\(\rejectset\). 
If \(\taevt=\co{\aevt}\), then for any \(\gbl\in\gbls[\tbevt]\) and \(\epsilon>0\), since \(-\gbl\in\gbls[\tbevt]=\gbls[\bevt]\), it follows from the assumption that
\begin{equation*}
\set{-\ind{\taevt}\gbl+\epsilon,\ind{\co{\taevt}}\gbl+\epsilon}
=
\set{\ind{\co{\aevt}}(-\gbl)+\epsilon,-\ind{\aevt}(-\gbl)+\epsilon}\in\rejectset.
\end{equation*}
Hence, also in this case, \(\taevt\) is S-irrelevant to~\(\tbevt\) with respect to~\(\rejectset\).
\end{proof}

If the coherent~\(\rejectset\) is moreover Archimedean, as we will henceforth typically assume, then S-irrelevance with respect to~\(\rejectset\) can be characterised more simply in terms of S-irrelevance with respect to specific binary choice models of the types~\(\rejectset[\lowprev]\) and~\(\rejectset[\linprev]\).
Similar statements will of course hold for S-independence.

\begin{proposition}\label{prop:S-irrelevance:in:terms_of:binary}
Consider any events \(\aevt,\bevt\subseteq\pspace\) and a set of desirable option sets~\(\rejectset\).
If \(\rejectset\) is Archimedean, then \(\aevt\) is S-irrelevant to~\(\bevt\) with respect to~\(\rejectset\) if and only if \(\aevt\) is S-irrelevant to~\(\bevt\) with respect to~\(\rejectset[\lowprev]\) for all \(\lowprev\in\cohlowprevs(\rejectset)\).
Similarly, if \(\rejectset\) is Archimedean and mixing, then \(\aevt\) is S-irrelevant to~\(\bevt\) with respect to~\(\rejectset\) if and only if \(\aevt\) is S-irrelevant to~\(\bevt\) with respect to~\(\rejectset[\linprev]\) for all \(\linprev\in\linprevs(\rejectset)\).
\end{proposition}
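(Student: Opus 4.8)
The plan is to exploit that an Archimedean $\rejectset$ is the intersection of the binary models it dominates, together with the fact that S-irrelevance is simply the requirement that a prescribed family of option sets all belong to $\rejectset$; such a requirement passes through an intersection by a routine interchange of universal quantifiers.

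First I would establish the representation $\rejectset=\rejectset[\cohlowprevs(\rejectset)]=\bigcap_{\lowprev\in\cohlowprevs(\rejectset)}\rejectset[\lowprev]$ whenever $\rejectset$ is Archimedean. By Definition~\ref{def:archimedeanity} there is a non-empty $\mathscr{P}\subseteq\cohlowprevs$ with $\rejectset=\rejectset[\mathscr{P}]=\bigcap_{\lowprev\in\mathscr{P}}\rejectset[\lowprev]$, so $\rejectset\subseteq\rejectset[\lowprev]$ for every $\lowprev\in\mathscr{P}$ and hence $\mathscr{P}\subseteq\cohlowprevs(\rejectset)$, which in particular makes $\cohlowprevs(\rejectset)$ non-empty. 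Since larger index sets yield smaller intersections, $\rejectset[\cohlowprevs(\rejectset)]\subseteq\rejectset[\mathscr{P}]=\rejectset$, while the reverse inclusion $\rejectset\subseteq\bigcap_{\lowprev\in\cohlowprevs(\rejectset)}\rejectset[\lowprev]$ is immediate from the very definition of $\cohlowprevs(\rejectset)$. I would also recall that each $\rejectset[\lowprev]$ is coherent, so that ``$\aevt$ is S-irrelevant to $\bevt$ with respect to $\rejectset[\lowprev]$'' is meaningful.

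Next I would unwind the definitions. For $\gbl\in\gbls[\bevt]$ and $\epsilon>0$, write $\altoptset[\gbl,\epsilon]\coloneqq\set{\ind{\co{\aevt}}\gbl+\epsilon,-\ind{\aevt}\gbl+\epsilon}$; then S-irrelevance of $\aevt$ to $\bevt$ with respect to any coherent set of desirable option sets is exactly the condition that $\altoptset[\gbl,\epsilon]$ belongs to it for all such $\gbl$ and $\epsilon$. Using the representation, $\altoptset[\gbl,\epsilon]\in\rejectset$ if and only if $\altoptset[\gbl,\epsilon]\in\rejectset[\lowprev]$ for every $\lowprev\in\cohlowprevs(\rejectset)$; quantifying over $\gbl\in\gbls[\bevt]$ and $\epsilon>0$ and interchanging these universal quantifiers with the one over $\lowprev$ shows that $\aevt$ is S-irrelevant to $\bevt$ with respect to $\rejectset$ if and only if, for every $\lowprev\in\cohlowprevs(\rejectset)$, $\altoptset[\gbl,\epsilon]\in\rejectset[\lowprev]$ for all $\gbl\in\gbls[\bevt]$ and $\epsilon>0$, i.e.\ if and only if $\aevt$ is S-irrelevant to $\bevt$ with respect to $\rejectset[\lowprev]$ for all $\lowprev\in\cohlowprevs(\rejectset)$. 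This settles the first equivalence.

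For the second, assume $\rejectset$ is in addition mixing. Theorem~\ref{thm:archimedean:and:mixing} then gives $\rejectset=\rejectset[\linprevs(\rejectset)]=\bigcap_{\linprev\in\linprevs(\rejectset)}\rejectset[\linprev]$ with $\linprevs(\rejectset)$ a non-empty set of linear previsions (and $\linprevs(\rejectset)=\cohlowprevs(\rejectset)$). Since for a linear prevision $\linprev$ the model $\rejectset[\linprev]$ of Equation~\eqref{eq:Kfromlinprev} coincides with the model $\rejectset[\lowprev]$ of Equation~\eqref{eq:Kfromlowprev} taken at $\lowprev=\linprev$, repeating the argument verbatim with $\linprevs(\rejectset)$ replacing $\cohlowprevs(\rejectset)$ yields the stated characterisation in terms of the $\rejectset[\linprev]$, $\linprev\in\linprevs(\rejectset)$. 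There is no genuinely hard step here: the only things that need care are checking that $\cohlowprevs(\rejectset)$ --- respectively $\linprevs(\rejectset)$ --- is non-empty and really does reconstruct $\rejectset$ as an intersection (precisely what Definition~\ref{def:archimedeanity} and Theorem~\ref{thm:archimedean:and:mixing} provide), and noting that the quantifier interchange is legitimate because both quantifiers are universal.
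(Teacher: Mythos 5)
Your proposal is correct and follows essentially the same route as the paper: both reduce S-irrelevance to membership of the option sets $\set{\ind{\co{\aevt}}\gbl+\epsilon,-\ind{\aevt}\gbl+\epsilon}$ in $\rejectset=\bigcap\cset{\rejectset[\lowprev]}{\lowprev\in\cohlowprevs(\rejectset)}$ and interchange the universal quantifiers, then invoke $\cohlowprevs(\rejectset)=\linprevs(\rejectset)$ from Theorem~\ref{thm:archimedean:and:mixing} for the mixing case. The only difference is that you explicitly re-derive the representation $\rejectset=\rejectset[\cohlowprevs(\rejectset)]$, which the paper simply reads off from Definition~\ref{def:archimedeanity}.
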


\begin{proof}
If we combine Definitions~\ref{def:archimedeanity} and~\ref{def:S-irrelevance}, we see that \(\aevt\) is S-irrelevant to~\(\bevt\) with respect to an Archimedean~\(\rejectset\) if and only if
\begin{equation*}
\set{\ind{\co{\aevt}}\gbl+\epsilon,-\ind{\aevt}\gbl+\epsilon}\in\bigcap\cset{\rejectset[\lowprev]}{\lowprev\in\cohlowprevs(\rejectset)}
\text{ for all \(\gbl\in\gbls[\bevt]\) and all \(\epsilon>0\)},
\end{equation*}
which proves the statement for Archimedeanity.
If we also impose mixingness on~\(\rejectset\), the second statement follows at once from the first and the fact that then, according to Theorem~\ref{thm:archimedean:and:mixing}, \(\cohlowprevs(\rejectset)=\linprevs(\rejectset)\).
\end{proof}
\noindent It therefore behoves us to study S-irrelevance with respect to such special binary models \(\rejectset[\linprev]\) and \(\rejectset[\lowprev]\).

\subsection{S-irrelevance with respect to linear prevision models}
First, we consider any linear prevision~\(\linprev\) on~\(\gbls\) and study the implications of S-irrelevance with respect to the binary choice model \(\rejectset[\linprev]\).
For any events \(\aevt\) and \(\bevt\), Equation~\eqref{eq:Kfromlinprev} then clearly implies that \(\aevt\) is S-irrelevant to~\(\bevt\) if and only if
\begin{equation*}
\linprev(\ind{\co{\aevt}}\gbl+\epsilon)>0\text{ or }\linprev(-\ind{\aevt}\gbl+\epsilon)>0
\text{ for all \(\gbl\in\gbls[\bevt]\setminus\set{0}\) and all \(\epsilon>0\)}, 
\end{equation*} 
or equivalently, since \(\linprev\) is constant additive [\ref{ax:lowprev:constantadditive}],
\begin{equation}\label{eq:S-irrelevance:linprev}
\linprev(\ind{\co{\aevt}}\gbl)\geq0\text{ or }\linprev(-\ind{\aevt}\gbl)\geq0
\text{ for all \(\gbl\in\gbls[\bevt]\)}.
\end{equation}

We now set out to investigate how this specific instance of S-irrelevance relates to the usual independence condition.
We will find that for this particular type of binary choice model, S-irrelevance, S-independence and the usual independence notion coincide.

\begin{definition}[Independent events with respect to a linear prevision]
We will call any two events \(\aevt\) and \(\bevt\) \emph{independent} with respect to a linear prevision~\(\linprev\) on \(\gbls\) if \(\linprev(\aevt\cap\bevt)=\linprev(\aevt)\linprev(\bevt)\). 
\end{definition}

\begin{proposition}\label{prop:symmetry:for:independence}
Consider any two events \(\aevt,\bevt\subseteq\pspace\), and consider any \(\taevt\in\set{\aevt,\co{\aevt}}\) and \(\tbevt\in\set{\bevt,\co{\bevt}}\).
Then \(\aevt\) and \(\bevt\) are independent with respect to a linear prevision~\(\linprev\) on \(\gbls\) if and only if \(\taevt\) and \(\tbevt\) are.
\end{proposition}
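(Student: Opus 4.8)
The plan is to reduce the full biconditional to a single one-sided lemma---that independence with respect to $\linprev$ is preserved when exactly one of the two events is replaced by its complement---and then to prove that lemma by a short computation using the linearity properties \ref{ax:linprev:homo} and \ref{ax:linprev:additive} of $\linprev$.

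First I would record two elementary preliminaries. From \ref{ax:linprev:inf} applied to the constant gambles $1$ and $-1$, together with \ref{ax:linprev:homo}, one gets $\linprev(\pspace)=\linprev(\ind{\pspace})=1$; and then, since $\ind{\co{\aevt}}=\ind{\pspace}-\ind{\aevt}$, axiom \ref{ax:linprev:additive} yields $\linprev(\co{\aevt})=1-\linprev(\aevt)$ for every event $\aevt\subseteq\pspace$.

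The core step is then: if $\aevt$ and $\bevt$ are independent with respect to $\linprev$, i.e.\ $\linprev(\aevt\cap\bevt)=\linprev(\aevt)\linprev(\bevt)$, then so are $\aevt$ and $\co{\bevt}$. For this I would use the disjoint decomposition $\aevt=(\aevt\cap\bevt)\cup(\aevt\cap\co{\bevt})$, hence $\ind{\aevt\cap\co{\bevt}}=\ind{\aevt}-\ind{\aevt\cap\bevt}$, and compute with \ref{ax:linprev:additive} and the preliminary above:
\begin{equation*}
\linprev(\aevt\cap\co{\bevt})
=\linprev(\aevt)-\linprev(\aevt\cap\bevt)
=\linprev(\aevt)-\linprev(\aevt)\linprev(\bevt)
=\linprev(\aevt)\bigl(1-\linprev(\bevt)\bigr)
=\linprev(\aevt)\linprev(\co{\bevt}).
\end{equation*}
Because the condition $\linprev(\aevt\cap\bevt)=\linprev(\aevt)\linprev(\bevt)$ is visibly symmetric in $\aevt$ and $\bevt$ (both $\aevt\cap\bevt$ and the product of the two probabilities are), the same argument with the roles of $\aevt$ and $\bevt$ interchanged shows that independence of $\aevt$ and $\bevt$ also implies independence of $\co{\aevt}$ and $\bevt$.

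Finally I would assemble the statement. Given $\taevt\in\set{\aevt,\co{\aevt}}$ and $\tbevt\in\set{\bevt,\co{\bevt}}$, applying the two implications of the previous paragraph at most once each turns independence of $\aevt$ and $\bevt$ into independence of $\taevt$ and $\tbevt$ (zero applications if $\taevt=\aevt$ and $\tbevt=\bevt$, one if exactly one event is complemented, both otherwise). For the converse direction, I would note that complementation is an involution, so $\aevt\in\set{\taevt,\co{\taevt}}$ and $\bevt\in\set{\tbevt,\co{\tbevt}}$; hence the reverse implication is simply the forward implication applied with $\taevt,\tbevt$ in the role of $\aevt,\bevt$. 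I do not expect any genuine obstacle here: the only things needing a little care are the bookkeeping that lets the biconditional follow from one one-sided lemma, and the preliminary identity $\linprev(\co{\aevt})=1-\linprev(\aevt)$ on which the computation rests.
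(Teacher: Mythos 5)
Your proposal is correct and follows essentially the same route as the paper: both reduce the biconditional to the single lemma that independence survives complementing one of the two events, and both prove that lemma by the same elementary computation combining additivity of \(\linprev\) over the disjoint decomposition with \(\linprev(\co{\bevt})=1-\linprev(\bevt)\). You are merely slightly more explicit about the preliminary identity and the final bookkeeping, which the paper leaves implicit.
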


\begin{proof}
It clearly suffices to assume that \(\aevt\) and \(\bevt\) are independent, and to prove that \(\co{\aevt}\) and \(\bevt\) are.
So assume that \(\aevt\) and \(\bevt\) are independent. 
Then
\begin{align*}
\linprev(\co{\aevt})\linprev(\bevt)
=[1-\linprev(\aevt)]\linprev(\bevt)
&=\linprev(\bevt)-\linprev(\aevt)\linprev(\bevt)
=\linprev(\bevt)-\linprev(\aevt\cap\bevt)\\
&=\linprev(\bevt\setminus(\aevt\cap\bevt))
=\linprev(\bevt\cap\co{\aevt}),
\end{align*}
implying that \(\co{\aevt}\) and \(\bevt\) are independent as well.
\end{proof}

\begin{theorem}\label{theo:lineartwosided}
Consider any two events \(\aevt,\bevt\subseteq\pspace\), and any linear prevision~\(\linprev\) on~\(\gbls\).
Then the following statements are equivalent: 
\begin{enumerate}[label=\upshape(\roman*),leftmargin=*]
\item \(\aevt\) is S-irrelevant to~\(\bevt\) with respect to~\(\rejectset[\linprev]\);
\item \(\aevt\) and \(\bevt\) are S-independent with respect to~\(\rejectset[\linprev]\);
\item \(\aevt\) and \(\bevt\) are independent with respect to~\(\linprev\).
\end{enumerate}
\end{theorem}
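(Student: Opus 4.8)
The plan is to prove the core equivalence (i)$\Leftrightarrow$(iii) by a direct computation, after which (ii) follows essentially for free. Indeed, (ii)$\Rightarrow$(i) is immediate from Definition~\ref{def:S-irrelevance}, while for (i)$\Rightarrow$(ii) it suffices to note that the independence condition \(\linprev(\aevt\cap\bevt)=\linprev(\aevt)\linprev(\bevt)\) is manifestly symmetric in~\(\aevt\) and~\(\bevt\) (one could also invoke Proposition~\ref{prop:symmetry:for:independence}): starting from (i), the direction (i)$\Rightarrow$(iii) applied to \((\aevt,\bevt)\) gives independence, this is also independence of \((\bevt,\aevt)\), and the direction (iii)$\Rightarrow$(i) applied to \((\bevt,\aevt)\) then gives that \(\bevt\) is S-irrelevant to~\(\aevt\); together with (i), this is precisely S-independence. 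Thus everything reduces to showing that \(\aevt\) is S-irrelevant to~\(\bevt\) with respect to~\(\rejectset[\linprev]\) if and only if \(\aevt\) and~\(\bevt\) are independent with respect to~\(\linprev\).

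For that I would start from the characterisation~\eqref{eq:S-irrelevance:linprev} already derived above, namely that S-irrelevance means \(\linprev(\ind{\co{\aevt}}\gbl)\geq0\) or \(\linprev(\ind{\aevt}\gbl)\leq0\) for every \(\gbl\in\gbls[\bevt]\). The first, and I expect the only genuinely delicate, step is to exploit that the linear space \(\gbls[\bevt]\) is closed under negation: imposing that disjunction for every \(\gbl\in\gbls[\bevt]\) is the same as imposing it for every \(\gbl\) and for every \(-\gbl\), and a short case distinction on the signs of \(\linprev(\ind{\co{\aevt}}\gbl)\) and \(\linprev(\ind{\aevt}\gbl)\) shows that the conjunction of the two disjunctions obtained for \(\gbl\) and for \(-\gbl\) is equivalent to the single inequality \(\linprev(\ind{\co{\aevt}}\gbl)\linprev(\ind{\aevt}\gbl)\geq0\). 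Hence S-irrelevance of~\(\aevt\) to~\(\bevt\) with respect to~\(\rejectset[\linprev]\) is equivalent to
\begin{equation*}
\linprev(\ind{\co{\aevt}}\gbl)\,\linprev(\ind{\aevt}\gbl)\geq0
\quad\text{for all }\gbl\in\gbls[\bevt].
\end{equation*}

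It then remains to make this explicit. Writing \(\gbl=\lambda\ind{\bevt}+\mu\ind{\co{\bevt}}\) with \(\lambda,\mu\in\reals\) and using the additivity and homogeneity of~\(\linprev\), one gets \(\linprev(\ind{\aevt}\gbl)=\lambda\linprev(\aevt\cap\bevt)+\mu\linprev(\aevt\cap\co{\bevt})\) and \(\linprev(\ind{\co{\aevt}}\gbl)=\lambda\linprev(\co{\aevt}\cap\bevt)+\mu\linprev(\co{\aevt}\cap\co{\bevt})\), so the displayed condition says exactly that the quadratic form \((\lambda,\mu)\mapsto A\lambda^2+B\lambda\mu+C\mu^2\) is nonnegative on~\(\reals^2\), with \(A=\linprev(\co{\aevt}\cap\bevt)\linprev(\aevt\cap\bevt)\), \(C=\linprev(\co{\aevt}\cap\co{\bevt})\linprev(\aevt\cap\co{\bevt})\) and \(B=x+y\), where \(x\coloneqq\linprev(\co{\aevt}\cap\bevt)\linprev(\aevt\cap\co{\bevt})\) and \(y\coloneqq\linprev(\co{\aevt}\cap\co{\bevt})\linprev(\aevt\cap\bevt)\). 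Since \(A\geq0\) and \(C\geq0\) hold automatically (each is a product of nonnegative probabilities), nonnegativity of this form is equivalent to \(B^2\leq4AC\); and since \(B^2=(x+y)^2\) while \(4AC=4xy\), this is \((x-y)^2\leq0\), that is, \(x=y\). Finally a routine algebraic manipulation turns \(x=y\) into \(\linprev(\aevt\cap\bevt)=\linprev(\aevt)\linprev(\bevt)\): substitute \(\linprev(\co{\aevt}\cap\bevt)=\linprev(\bevt)-\linprev(\aevt\cap\bevt)\), \(\linprev(\aevt\cap\co{\bevt})=\linprev(\aevt)-\linprev(\aevt\cap\bevt)\) and \(\linprev(\co{\aevt}\cap\co{\bevt})=1-\linprev(\aevt)-\linprev(\bevt)+\linprev(\aevt\cap\bevt)\), expand, and cancel common terms. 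This proves (i)$\Leftrightarrow$(iii), and with it the theorem.
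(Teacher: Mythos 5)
Your proposal is correct, and it takes a genuinely different route from the paper's. The paper proves (i)\(\Rightarrow\)(iii) contrapositively by exhibiting a single witness gamble: assuming \(\linprev(\aevt\cap\bevt)\neq\linprev(\aevt)\linprev(\bevt)\), it takes \(\gbl\coloneqq\ind{\bevt}-\linprev(\bevt)\) and checks directly that both \(\linprev(\ind{\co{\aevt}}\gbl)\) and \(\linprev(-\ind{\aevt}\gbl)\) equal \(-\abs{\linprev(\aevt\cap\bevt)-\linprev(\aevt)\linprev(\bevt)}<0\), violating criterion~\eqref{eq:S-irrelevance:linprev}; the converse is a two-line sign check. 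You instead turn the quantified disjunction into the single condition \(\linprev(\ind{\co{\aevt}}\gbl)\linprev(\ind{\aevt}\gbl)\geq0\) for all \(\gbl\in\gbls[\bevt]\) (using closure of \(\gbls[\bevt]\) under negation—this logical step is sound: the conjunction of the disjunctions for \(\gbl\) and \(-\gbl\) fails exactly when the two previsions have strictly opposite signs), read this as positive semidefiniteness of a binary quadratic form with automatically nonnegative diagonal coefficients, and reduce the discriminant condition to the cross-product identity \(\linprev(\co{\aevt}\cap\bevt)\linprev(\aevt\cap\co{\bevt})=\linprev(\co{\aevt}\cap\co{\bevt})\linprev(\aevt\cap\bevt)\), which is indeed equivalent to \(\linprev(\aevt\cap\bevt)=\linprev(\aevt)\linprev(\bevt)\) by finite additivity. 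All steps check out (including the semidefiniteness criterion \(B^2\leq4AC\) given \(A,C\geq0\), and \(4AC=4xy\) so that \(B^2\leq4AC\) collapses to \((x-y)^2\leq0\)). What each approach buys: the paper's argument is shorter and makes the failure of S-irrelevance operationally concrete via an explicit gamble, and its witness construction is reused later in Lemma~\ref{lem:atleastonezero} for the lower-prevision case; yours treats both directions symmetrically in one algebraic equivalence and incidentally isolates the clean reformulation of S-irrelevance as a sign-agreement (product-nonnegativity) condition, which is a nice structural insight, though it leans on linearity in a way that would not transfer to the lower-prevision setting of Section~\ref{sec:lowerprevisions:events}.
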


\begin{proof} 
Since \(\aevt\) is independent of~\(\bevt\) if and only if \(\bevt\) is independent of~\(\aevt\), it clearly suffices to prove the equivalence of S-irrelevance and independence.

We begin with the `only if' part.
Assume \emph{ex absurdo} that \(\aevt\) and \(\bevt\) are not independent with respect to~\(\linprev\), so \(\linprev(\aevt\cap\bevt)\neq\linprev(\aevt)\linprev(\bevt)\). 
We will assume that \(\linprev(\aevt\cap\bevt)>\linprev(\aevt)\linprev(\bevt)\), but the arguments are completely analogous for the case that \(\linprev(\aevt\cap\bevt)<\linprev(\aevt)\linprev(\bevt)\): simply reverse the roles of~\(\aevt\) and \(\co{\aevt}\).

Let \(\delta\coloneqq\linprev(\aevt\cap\bevt)-\linprev(\aevt)\linprev(\bevt)>0\).
Then
\begin{align*}
\linprev(\co{\aevt}\cap\bevt)-\linprev(\co{\aevt})\linprev(\bevt)
&=\linprev(\bevt)-\linprev(\aevt\cap\bevt)-\linprev(\co{\aevt})\linprev(\bevt)\\
&=\linprev(\aevt)\linprev(\bevt)-\linprev(\aevt\cap\bevt)=-\delta<0,
\end{align*}
so \(\linprev(\co{\aevt}\cap\bevt)<\linprev(\co{\aevt})\linprev(\bevt)\).
Now let \(\gbl\coloneqq\ind{\bevt}-\linprev(\bevt)\in\gbls[\bevt]\), then
\begin{equation*}
\linprev(\ind{\co{\aevt}}\gbl)
=\linprev(\ind{\co{\aevt}}\ind{\bevt})-\linprev(\bevt)\linprev(\ind{\co{\aevt}})
=\linprev(\co{\aevt}\cap\bevt)-\linprev(\co{\aevt})\linprev(\bevt)=-\delta<0
\end{equation*}
and 
\begin{equation*}
\linprev(-\ind{\aevt}\gbl)
=-\linprev(\ind{\aevt}\ind{\bevt})+\linprev(\bevt)\linprev(\ind{\aevt})
=\linprev(\aevt)\linprev(\bevt)-\linprev(\aevt\cap\bevt)=-\delta<0.
\end{equation*}
It therefore follows from the S-irrelevance criterion~\eqref{eq:S-irrelevance:linprev} that \(\aevt\) is not S-irrelevant to~\(\bevt\) with respect to~\(\rejectset[\linprev]\), a contradiction. 
We conclude that \(\aevt\) and \(\bevt\) are independent with respect to~\(\linprev\).

It remains to prove the `if' part, so let us assume that \(\aevt\) and \(\bevt\) are independent with respect to~\(\linprev\). 
We will use Definition~\ref{def:S-irrelevance} to prove that \(\aevt\) is S-irrelevant to~\(\bevt\) with respect to~\(\rejectset[\linprev]\). 
Fix any \(\gbl\in\gbls[\bevt]\), then we need to show that the S-irrelevance criterion~\eqref{eq:S-irrelevance:linprev} is satisfied. 
Since \(\aevt\) and \(\bevt\) are independent with respect to~\(\linprev\)---and also using Proposition~\ref{prop:symmetry:for:independence}---this criterion simplifies to
\begin{equation*} 
\linprev(\co{\aevt})\linprev(\gbl)\geq0
\text{ or }
-\linprev(\aevt)\linprev(\gbl)\geq0,
\end{equation*}
which is clearly always satisfied, because the first inequality holds when \(\linprev(\gbl)\geq0\) and the second one holds when \(\linprev(\gbl)\leq0\).
\end{proof}
Since we have seen that sets of desirable option sets~\(\rejectset\) that are Archimedean and mixing are completely determined by the linear prevision models \(\rejectset[\linprev]\) that include them, it ought not to surprise us, in view of Theorem~\ref{theo:lineartwosided}, that for such \(\rejectset\), we can reduce S-irrelevance (and S-independence) to independence with respect to their representing linear previsions~\(\linprev\).

\begin{theorem}
Let \(\rejectset\) be an Archimedean and mixing set of desirable option sets, and consider any events~\(\aevt,\bevt\subseteq\pspace\). 
Then the following statements are equivalent: 
\begin{enumerate}[label=\upshape(\roman*),leftmargin=*]
\item \(\aevt\) is S-irrelevant to~\(\bevt\) with respect to~\(\rejectset\);
\item \(\aevt\) and \(\bevt\) are S-independent with respect to~\(\rejectset\);
\item \(\aevt\) and \(\bevt\) are independent with respect to~\(\linprev\), for all \(\linprev\in\linprevs(\rejectset)\).
\end{enumerate}
\end{theorem}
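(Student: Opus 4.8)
The plan is to reduce everything to the binary linear-prevision case already settled in Theorem~\ref{theo:lineartwosided}, using the representation afforded by Proposition~\ref{prop:S-irrelevance:in:terms_of:binary}. First I would establish the equivalence of (i) and (iii). Since \(\rejectset\) is Archimedean and mixing, Proposition~\ref{prop:S-irrelevance:in:terms_of:binary} tells us that \(\aevt\) is S-irrelevant to~\(\bevt\) with respect to~\(\rejectset\) if and only if \(\aevt\) is S-irrelevant to~\(\bevt\) with respect to~\(\rejectset[\linprev]\) for every \(\linprev\in\linprevs(\rejectset)\). Fixing such a \(\linprev\), the equivalence of items (i) and (iii) in Theorem~\ref{theo:lineartwosided} says that this holds exactly when \(\aevt\) and \(\bevt\) are independent with respect to~\(\linprev\). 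Quantifying over all \(\linprev\in\linprevs(\rejectset)\) then yields precisely statement (iii), so (i)~\(\Leftrightarrow\)~(iii).

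Next I would deal with (ii). The implication (ii)~\(\Rightarrow\)~(i) is immediate from Definition~\ref{def:S-irrelevance}, since S-independence of \(\aevt\) and \(\bevt\) includes by definition S-irrelevance of \(\aevt\) to~\(\bevt\). For the reverse, observe that statement (iii) is symmetric in \(\aevt\) and~\(\bevt\): the condition \(\linprev(\aevt\cap\bevt)=\linprev(\aevt)\linprev(\bevt)\) is visibly symmetric, and in any case Proposition~\ref{prop:symmetry:for:independence} confirms this. Applying the equivalence (i)~\(\Leftrightarrow\)~(iii) a second time, now with the roles of \(\aevt\) and~\(\bevt\) interchanged, shows that (iii) is also equivalent to the statement that \(\bevt\) is S-irrelevant to~\(\aevt\) with respect to~\(\rejectset\). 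Hence (iii) implies both S-irrelevance statements, that is, (ii). Chaining these gives (i)~\(\Rightarrow\)~(iii)~\(\Rightarrow\)~(ii), closing the cycle.

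I do not expect any genuine obstacle here: the theorem is essentially a bookkeeping exercise that glues Proposition~\ref{prop:S-irrelevance:in:terms_of:binary} and Theorem~\ref{theo:lineartwosided} together. The only point that needs a little care is ensuring that the reduction in Proposition~\ref{prop:S-irrelevance:in:terms_of:binary} is legitimately invoked with the two events in either order when establishing the symmetry required for (ii); since that proposition is stated for arbitrary events \(\aevt,\bevt\), this is unproblematic. One might also remark, for the reader's benefit, that this mirrors the structure of the proof of Theorem~\ref{theo:lineartwosided} itself, where the symmetry of independence was exactly what allowed attention to be restricted to S-irrelevance.
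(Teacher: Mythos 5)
Your proposal is correct and follows exactly the route the paper takes: its proof simply states that the theorem is an immediate consequence of Proposition~\ref{prop:S-irrelevance:in:terms_of:binary} and Theorem~\ref{theo:lineartwosided}, which is precisely the reduction you carry out. The only difference is that you spell out the symmetry argument needed to recover statement (ii), which the paper leaves implicit.
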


\begin{proof}
This is an immediate consequence of Proposition~\ref{prop:S-irrelevance:in:terms_of:binary} and Theorem~\ref{theo:lineartwosided}.
\end{proof}

\subsection{S-irrelevance with respect to lower prevision models}\label{sec:lowerprevisions:events}
Next, we consider a coherent lower prevision~\(\lowprev\) on~\(\gbls\) and study the implications of S-irrelevance with respect to the binary choice model \(\rejectset[\lowprev]\). 
For any two events~\(\aevt\) and~\(\bevt\), Equation~\eqref{eq:Kfromlowprev} then clearly implies that \(\aevt\) is S-irrelevant to~\(\bevt\) with respect to~\(\rejectset[\lowprev]\) if and only if
\begin{equation*}
\lowprev(\ind{\co{\aevt}}\gbl+\epsilon)>0\text{ or }\lowprev(-\ind{\aevt}\gbl+\epsilon)>0
\text{ for all \(\gbl\in\gbls[\bevt]\) and all \(\epsilon>0\)}, 
\end{equation*} 
or equivalently, since \(\lowprev\) is constant additive~[\ref{ax:lowprev:constantadditive}],
\begin{equation}\label{eq:S-irrelevance:lowprev}
\lowprev(\ind{\co{\aevt}}\gbl)\geq0\text{ or }\lowprev(-\ind{\aevt}\gbl)\geq0
\text{ for all \(\gbl\in\gbls[\bevt]\)}.
\end{equation} 

Here too, we set out to investigate how this particular instance of S-irrelevance relates to the usual independence condition. 
A first important result is that in the context of lower previsions, S-irrelevance implies independence in the usual sense for every dominating linear prevision.

\begin{proposition}\label{prop:loweronesided}
Consider any two events \(\aevt,\bevt\subseteq\pspace\) and any coherent lower prevision~\(\lowprev\) on~\(\gbls\), and let \(\aevt\) be S-irrelevant to~\(\bevt\) with respect to~\(\rejectset[\lowprev]\). 
Then \(\aevt\) and \(\bevt\) are independent with respect to all \(\linprev\in\linprevs(\lowprev)\).
\end{proposition}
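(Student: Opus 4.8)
The plan is to reduce the statement, via Theorem~\ref{theo:lineartwosided}, to a simple domination argument. Concretely, I would show that S-irrelevance with respect to the lower prevision model \(\rejectset[\lowprev]\) propagates to S-irrelevance with respect to each of the dominating linear prevision models \(\rejectset[\linprev]\), \(\linprev\in\linprevs(\lowprev)\); Theorem~\ref{theo:lineartwosided} then immediately upgrades this to independence of \(\aevt\) and \(\bevt\) with respect to every such \(\linprev\), which is exactly the claim.

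First I would fix an arbitrary \(\linprev\in\linprevs(\lowprev)\) and appeal to the reformulation in Equation~\eqref{eq:S-irrelevance:linprev}: it suffices to check that for every \(\gbl\in\gbls[\bevt]\) we have \(\linprev(\ind{\co{\aevt}}\gbl)\geq0\) or \(\linprev(-\ind{\aevt}\gbl)\geq0\). By the hypothesis and the corresponding reformulation in Equation~\eqref{eq:S-irrelevance:lowprev}, we know that for every such \(\gbl\), \(\lowprev(\ind{\co{\aevt}}\gbl)\geq0\) or \(\lowprev(-\ind{\aevt}\gbl)\geq0\). The key observation is then that these two disjuncts are \emph{aligned}: since \(\linprev\) dominates \(\lowprev\) pointwise on \(\gbls\), we have \(\linprev(\ind{\co{\aevt}}\gbl)\geq\lowprev(\ind{\co{\aevt}}\gbl)\) and \(\linprev(-\ind{\aevt}\gbl)\geq\lowprev(-\ind{\aevt}\gbl)\), so whichever of the two \(\lowprev\)-inequalities holds for this \(\gbl\), the matching \(\linprev\)-inequality holds as well. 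Hence \eqref{eq:S-irrelevance:linprev} is verified for \(\linprev\), i.e.\ \(\aevt\) is S-irrelevant to \(\bevt\) with respect to \(\rejectset[\linprev]\).

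Finally, since this holds for every \(\linprev\in\linprevs(\lowprev)\), Theorem~\ref{theo:lineartwosided} (the equivalence of (i) and (iii) there) yields that \(\aevt\) and \(\bevt\) are independent with respect to each \(\linprev\in\linprevs(\lowprev)\), completing the proof.

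I do not anticipate a genuine obstacle here: once the right reformulations (Equations~\eqref{eq:S-irrelevance:lowprev} and~\eqref{eq:S-irrelevance:linprev}) and Theorem~\ref{theo:lineartwosided} are in hand, the argument is essentially a one-line monotonicity estimate. The only point that warrants a moment's care is that passing from \(\lowprev\) to a dominating \(\linprev\) preserves each nonnegativity statement rather than endangering it; this is immediate from \(\lowprev\leq\linprev\) and uses nothing about the structure of \(\linprevs(\lowprev)\) beyond domination, and in particular no conjugacy or envelope property.
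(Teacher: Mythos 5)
Your proof is correct and is essentially the paper's own argument: both rest on the domination \(\lowprev\leq\linprev\) preserving the disjunction in criterion~\eqref{eq:S-irrelevance:lowprev}/\eqref{eq:S-irrelevance:linprev}, followed by the equivalence of S-irrelevance and independence for linear previsions from Theorem~\ref{theo:lineartwosided}. The only difference is presentational: the paper runs the same implication in contrapositive form (assuming \emph{ex absurdo} that some dominating \(\linprev\) fails independence and pushing both strict negativities down to \(\lowprev\)), whereas you argue directly.
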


\begin{proof}
Assume \emph{ex absurdo} that there is some \(\linprev\in\linprevs(\lowprev)\) with respect to which \(\aevt\) and \(\bevt\) are not independent. 
It then follows from Theorem~\ref{theo:lineartwosided} that, with respect to the corresponding \(\rejectset[\linprev]\), \(\aevt\) is not S-irrelevant to~\(\bevt\), implying that there is some \(\gbl\in\gbls[\bevt]\) such that \(\linprev(\ind{\co{\aevt}}\gbl)<0\) and \(\linprev(-\ind{\aevt}\gbl)<0\).
Since \(\linprev\) dominates \(\lowprev\), this immediately implies that also \(\lowprev(\ind{\co{\aevt}}\gbl)<0\) and \(\lowprev(-\ind{\aevt}\gbl)<0\).
Hence, \(\aevt\) is not S-irrelevant to~\(\bevt\) with respect to~\(\rejectset[\lowprev]\), a contradiction.
\end{proof}

Given this result, and drawing inspiration from Theorem~\ref{theo:lineartwosided}, one might be led to think that independence of the dominating linear previsions is not only necessary, but also sufficient for S-irrelevance. 
This is not the case, though; the non-linearity of \(\lowprev\) makes for a slightly more involved picture. 
To arrive at a condition that is both necessary and sufficient, we start by introducing a notion of triviality for events: we say that an event \(\aevt\) is \emph{trivial} with respect to~\(\lowprev\) if (either) \(\uppprev(\aevt)=0\) or \(\uppprev(\co{\aevt})=0\). 
The following result shows that this notion of triviality yields a sufficient condition for S-independence, and hence also S-irrelevance.

\begin{proposition}\label{prop:triviality:implies:S:independence}
If the events \(\aevt\) or \(\bevt\) are trivial with respect to the coherent lower prevision~\(\lowprev\) on~\(\gbls\), then \(\aevt\) and \(\bevt\) are S-independent with respect to~\(\rejectset[\lowprev]\).
\end{proposition}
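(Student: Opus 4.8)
The plan is to use symmetry to reduce to one clean case, isolate the effect of a null event on $\lowprev$ in a small lemma, and then read off both halves of S-independence from the simplified criterion~\eqref{eq:S-irrelevance:lowprev}. For the reduction: since S-independence is by definition symmetric in $\aevt$ and $\bevt$, I may assume it is $\aevt$ that is trivial. Triviality of $\aevt$ means $\uppprev(\taevt)=0$ for some $\taevt\in\set{\aevt,\co{\aevt}}$, and Proposition~\ref{prop:symmetry:for:S-irrelevance} guarantees that $\aevt$ and $\bevt$ are S-independent with respect to~$\rejectset[\lowprev]$ if and only if $\taevt$ and $\bevt$ are. After renaming, I may therefore assume throughout that $\uppprev(\aevt)=0$.

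The key lemma I would then establish is: if $\uppprev(\aevt)=0$, then $\lowprev(\ind{\aevt}\altgbltoo)=\uppprev(\ind{\aevt}\altgbltoo)=0$ for every $\altgbltoo\in\gbls$, and consequently $\lowprev(\altgbl+\ind{\aevt}\altgbltoo)=\lowprev(\altgbl)$ for all $\altgbl,\altgbltoo\in\gbls$. For the first part, note that $-\sup\abs{\altgbltoo}\ind{\aevt}\leq\ind{\aevt}\altgbltoo\leq\sup\abs{\altgbltoo}\ind{\aevt}$ pointwise; monotonicity~[\ref{ax:lowprev:monotonicity}], positive homogeneity~[\ref{ax:lowprev:homo}] and conjugacy of~$\uppprev$ then give $\uppprev(\ind{\aevt}\altgbltoo)\leq\sup\abs{\altgbltoo}\uppprev(\aevt)=0$ and $\lowprev(\ind{\aevt}\altgbltoo)\geq-\sup\abs{\altgbltoo}\uppprev(\aevt)=0$, which together with $\lowprev\leq\uppprev$~[\ref{ax:lowprev:infandsup}] forces both to vanish. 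The second part follows by inserting $\lowprev(\ind{\aevt}\altgbltoo)=\uppprev(\ind{\aevt}\altgbltoo)=0$ into the mixed-additivity chain~[\ref{ax:lowprev:mixed:additivity}], namely $\lowprev(\altgbl)+\lowprev(\ind{\aevt}\altgbltoo)\leq\lowprev(\altgbl+\ind{\aevt}\altgbltoo)\leq\lowprev(\altgbl)+\uppprev(\ind{\aevt}\altgbltoo)$: the part of a gamble supported on~$\aevt$ is invisible to~$\lowprev$.

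Finally I would verify both irrelevances via criterion~\eqref{eq:S-irrelevance:lowprev}. That $\aevt$ is S-irrelevant to~$\bevt$ is immediate: for any $\gbl\in\gbls[\bevt]$ the second disjunct holds, since $\lowprev(-\ind{\aevt}\gbl)=\lowprev(\ind{\aevt}(-\gbl))=0$ by the lemma. For the reverse direction, I write a generic $\gbl\in\gbls[\aevt]$ as $\gbl=\lambda\ind{\aevt}+\mu\ind{\co{\aevt}}$ and use $\ind{\co{\aevt}}=1-\ind{\aevt}$ to get $\ind{\co{\bevt}}\gbl=\mu\ind{\co{\bevt}}+\ind{\aevt}\bigl[(\lambda-\mu)\ind{\co{\bevt}}\bigr]$; the lemma then gives $\lowprev(\ind{\co{\bevt}}\gbl)=\lowprev(\mu\ind{\co{\bevt}})=\mu\lowprev(\co{\bevt})$, which is nonnegative as soon as $\mu\geq0$. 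When $\mu<0$, the analogous rewriting $-\ind{\bevt}\gbl=-\mu\ind{\bevt}+\ind{\aevt}\bigl[(\mu-\lambda)\ind{\bevt}\bigr]$ gives $\lowprev(-\ind{\bevt}\gbl)=-\mu\lowprev(\bevt)\geq0$. In either case one of the two disjuncts of~\eqref{eq:S-irrelevance:lowprev} holds, so $\bevt$ is S-irrelevant to~$\aevt$, which completes the argument.

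The only genuinely delicate point is the direction ``$\bevt$ is S-irrelevant to~$\aevt$'': there the test gambles live in~$\gbls[\aevt]$ and hence depend directly on the null event~$\aevt$, so $\lowprev(\ind{\co{\bevt}}\gbl)$ cannot be controlled by monotonicity alone. One must first peel off the genuinely informative $\ind{\co{\bevt}}$-part and only then apply the lemma to the $\aevt$-supported remainder, being careful that $\lowprev$ is merely superadditive and positively homogeneous rather than linear; everything else is routine.
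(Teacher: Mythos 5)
Your proof is correct and follows essentially the same route as the paper's: the same observation that gambles supported on an event of zero upper probability are invisible to \(\lowprev\) (your key lemma is exactly Lemma~\ref{lem:upper:probability:zero} specialised to \(\uppprev(\aevt)=0\)), the same symmetry reductions via Proposition~\ref{prop:symmetry:for:S-irrelevance}, and the same case split on the sign of \(\mu\) in the delicate direction. The only cosmetic difference is which symmetry you exploit first: the paper fixes the direction of irrelevance to be proved and varies which event is trivial, whereas you fix \(\uppprev(\aevt)=0\) and prove both directions of S-independence; the resulting computations are mirror images of each other.
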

\noindent
Our proof for this result, as well as that of Theorem~\ref{theo:lower:equivalentdefinitions} further on, makes use of the following simple technical lemma, which basically states that the lower and upper prevision of a gamble don't depend on the values of that gamble on any event with zero upper probability.

\begin{lemma}\label{lem:upper:probability:zero}
Consider any event \(\cevt\subseteq\pspace\) such that \(\uppprev(\cevt)=0\).
Then \(\lowprev(\altgbl\ind{\cevt}+\altgbltoo)=\lowprev(\altgbltoo)\) and \(\uppprev(\altgbl\ind{\cevt}+\altgbltoo)=\uppprev(\altgbltoo)\) for all gambles \(\altgbl\) and \(\altgbltoo\) on~\(\pspace\).    
\end{lemma}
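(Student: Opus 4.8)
The plan is to exploit the two basic properties of $\uppprev$ that are available in the excerpt, namely subadditivity of the upper prevision (the rightmost chain of inequalities in~\ref{ax:lowprev:mixed:additivity}, together with monotonicity~\ref{ax:lowprev:monotonicity}) and positive homogeneity~\ref{ax:lowprev:homo}, to squeeze $\uppprev(\altgbl\ind{\cevt})$ and $\uppprev(-\altgbl\ind{\cevt})$ to $0$, and then transfer this to the lower prevision via the conjugacy relation $\lowprev(\cdot)=-\uppprev(-\cdot)$. First I would fix a bound $M\coloneqq\sup\abs{\altgbl}$, so that $-M\ind{\cevt}\leq\altgbl\ind{\cevt}\leq M\ind{\cevt}$. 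By monotonicity~\ref{ax:lowprev:monotonicity} and positive homogeneity~\ref{ax:lowprev:homo}, $\uppprev(\altgbl\ind{\cevt})\leq\uppprev(M\ind{\cevt})=M\uppprev(\cevt)=0$ and likewise $\uppprev(-\altgbl\ind{\cevt})\leq M\uppprev(\cevt)=0$; combined with $0=\lowprev(\cevt)\le\uppprev(\cevt)$ isn't quite enough for the lower bound, so I would instead note $\uppprev(\altgbl\ind{\cevt})\geq\lowprev(\altgbl\ind{\cevt})=-\uppprev(-\altgbl\ind{\cevt})\geq 0$. Hence $\uppprev(\altgbl\ind{\cevt})=0$ and, by the same reasoning applied to $-\altgbl$, also $\uppprev(-\altgbl\ind{\cevt})=0$, whence $\lowprev(\altgbl\ind{\cevt})=0$ as well.

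Next I would upgrade these equalities at $\altgbltoo=0$ to the general statement by a standard sandwiching argument using~\ref{ax:lowprev:mixed:additivity}. Writing $\altgbltoo=(\altgbl\ind{\cevt}+\altgbltoo)-\altgbl\ind{\cevt}$ and $\altgbl\ind{\cevt}+\altgbltoo=\altgbltoo+\altgbl\ind{\cevt}$, the mixed-additivity chain gives
\begin{equation*}
\lowprev(\altgbltoo)+\lowprev(\altgbl\ind{\cevt})
\leq\lowprev(\altgbl\ind{\cevt}+\altgbltoo)
\leq\lowprev(\altgbltoo)+\uppprev(\altgbl\ind{\cevt}),
\end{equation*}
and since both flanking terms equal $\lowprev(\altgbltoo)$ by the previous paragraph, we get $\lowprev(\altgbl\ind{\cevt}+\altgbltoo)=\lowprev(\altgbltoo)$. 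The same argument, with $\uppprev$ in the roles played by $\lowprev$ and using the rightmost three inequalities of~\ref{ax:lowprev:mixed:additivity}, yields $\uppprev(\altgbl\ind{\cevt}+\altgbltoo)=\uppprev(\altgbltoo)$; alternatively one can deduce the $\uppprev$-statement from the $\lowprev$-statement by conjugacy, replacing $\altgbl$ by $-\altgbl$ and $\altgbltoo$ by $-\altgbltoo$.

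I expect no real obstacle here: the only point requiring a little care is getting the \emph{lower} bound $\uppprev(\altgbl\ind{\cevt})\geq 0$ (and symmetrically), since the hypothesis only controls $\uppprev(\cevt)$ from above and monotonicity alone pins down the wrong side; the fix is the conjugacy detour $\uppprev(\altgbl\ind{\cevt})=-\lowprev(-\altgbl\ind{\cevt})$ together with $\lowprev(-\altgbl\ind{\cevt})\leq\uppprev(-\altgbl\ind{\cevt})\leq M\uppprev(\cevt)=0$. Everything else is a routine application of~\ref{ax:lowprev:homo}, \ref{ax:lowprev:monotonicity}, \ref{ax:lowprev:mixed:additivity} and conjugacy.
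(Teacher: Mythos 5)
Your proof is correct and follows essentially the same route as the paper's: both are sandwiching arguments that combine \ref{ax:lowprev:mixed:additivity}, \ref{ax:lowprev:monotonicity}, \ref{ax:lowprev:homo} and conjugacy with the fact that gambles supported on~\(\cevt\) have zero lower and upper prevision. The only (immaterial) difference is that you first isolate \(\lowprev(\altgbl\ind{\cevt})=\uppprev(\altgbl\ind{\cevt})=0\) as a separate step, whereas the paper folds this in by bounding \(\altgbl\ind{\cevt}\) between \(\ind{\cevt}\inf\altgbl\) and \(\ind{\cevt}\sup\altgbl\) inside the sum.
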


\begin{proof}
It follows from conjugacy that we need only prove the equality for lower previsions. To see that it holds for lower previsions, observe that the inequality \(\ind{\cevt}\inf\altgbl\leq\altgbl\ind{\cevt}\leq\ind{\cevt}\sup\altgbl\) and the coherence of~\(\lowprev\) [use~\ref{ax:lowprev:mixed:additivity} and~\ref{ax:lowprev:monotonicity}] indeed lead to
\begin{equation*}
\lowprev(\altgbltoo)
=\lowprev(\ind{\cevt}\inf\altgbl)+\lowprev(\altgbltoo)
\leq\lowprev(\altgbl\ind{\cevt}+\altgbltoo)
\leq\uppprev(\ind{\cevt}\sup\altgbl)+\lowprev(\altgbltoo)
=\lowprev(\altgbltoo),
\end{equation*}
where the equalities hold because for all real \(\lambda\), also by coherence [use~\ref{ax:lowprev:homo}], \(\lowprev(\lambda\ind{\cevt})=\uppprev(\lambda\ind{\cevt})=0\) whenever \(\lowprev(\cevt)=\uppprev(\cevt)=0\) [that \(\lowprev(\cevt)=0\) follows from \(\uppprev(\cevt)=0\) and~\ref{ax:lowprev:infandsup}].
\end{proof}

\begin{proof}\hspace{-1pt}{\bf{\itshape{of Proposition~\ref{prop:triviality:implies:S:independence}}}}
Assume that \(\aevt\) or \(\bevt\) are trivial with respect to~\(\lowprev\), then it suffices to prove that \(\aevt\) is S-irrelevant to~\(\bevt\).
So consider any \(\gbl\coloneqq\lambda\ind{\bevt}+\mu\ind{\co{\bevt}}\), with \((\lambda,\mu)\in\reals^2\).
Then we have to show that \(\lowprev(\ind{\aevt}\gbl)\geq0\) or \(\lowprev(-\ind{\co{\aevt}}\gbl)\geq0\).
Due to the symmetry, it suffices to consider the following two possible cases.

The first case is that \(\uppprev(\aevt)=0\).
Lemma~\ref{lem:upper:probability:zero} then guarantees that \(\lowprev(\ind{E}\gbl)=\lowprev(0)=0\) [use~\ref{ax:lowprev:infandsup}].

The second case is that \(\uppprev(\bevt)=0\).
Considering that
\begin{equation*}
\ind{\aevt}\gbl=\lambda\ind{\aevt}\ind{\bevt}+\mu\ind{\aevt}\ind{\co{\bevt}}
\text{ and }
-\ind{\co{\aevt}}\gbl=-\lambda\ind{\co{\aevt}}\ind{\bevt}-\mu\ind{\co{\aevt}}\ind{\co{\bevt}},
\end{equation*}
we now infer from Lemma~\ref{lem:upper:probability:zero} that
\begin{equation*}
\lowprev(\ind{\aevt}\gbl)=\lowprev(\mu\ind{\aevt}\ind{\co{\bevt}})
\text{ and }
\lowprev(-\ind{\co{\aevt}}\gbl)=\lowprev(-\mu\ind{\co{\aevt}}\ind{\co{\bevt}}).
\end{equation*}
There are now two possibilities.
If \(\mu\geq0\), then we infer [use~\ref{ax:lowprev:homo} and~\ref{ax:lowprev:infandsup}] from the first equality that \(\lowprev(\ind{\aevt}\gbl)=\lowprev(\mu\ind{\aevt}\ind{\co{\bevt}})=\mu\lowprev(\ind{\aevt}\ind{\co{\bevt}})\geq0\).
If \(\mu\leq0\), then we infer [again use~\ref{ax:lowprev:homo} and~\ref{ax:lowprev:infandsup}] from the second equality that \(\lowprev(-\ind{\co{\aevt}}\gbl)=\lowprev(-\mu\ind{\co{\aevt}}\ind{\co{\bevt}})=(-\mu)\lowprev(\ind{\co{\aevt}}\ind{\co{\bevt}})\geq0\).
\end{proof}

So triviality is a sufficient condition for S-irrelevance and S-independence.
The following crucial proposition shows, on the other hand, that whenever \(\aevt\) is not trivial, S-irrelevance of~\(\aevt\) to~\(\bevt\) implies precision for gambles on~\(\bevt\).

\begin{proposition}\label{prop:lowerprecise}
Consider any two events \(\aevt,\bevt\subseteq\pspace\) and any coherent lower prevision~\(\lowprev\) on~\(\gbls\).
If \(\aevt\) is S-irrelevant to~\(\bevt\) with respect to~\(\rejectset[\lowprev]\), then \(\aevt\) is trivial or
\begin{equation*}
\lowprev(\bevt)=\uppprev(\bevt)
\text{ and hence also }
\lowprev(\gbl)=\uppprev(\gbl)\,\text{ for all \(\gbl\in\gbls[\bevt]\).}
\end{equation*}
\end{proposition}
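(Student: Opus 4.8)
The plan is to descend from $\lowprev$ to its representing linear previsions by means of Proposition~\ref{prop:loweronesided}, use the convexity of $\linprevs(\lowprev)$ to force $\linprev(\aevt)$ to be \emph{constant} on that set, and then split into two cases according to whether that constant lies on the boundary of $[0,1]$ or strictly inside it.

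I would argue \emph{ex absurdo}. Assume that $\aevt$ is S-irrelevant to $\bevt$ with respect to $\rejectset[\lowprev]$, yet $\aevt$ is not trivial and $\lowprev(\bevt)<\uppprev(\bevt)$. By Proposition~\ref{prop:loweronesided}, every $\linprev\in\linprevs(\lowprev)$ makes $\aevt$ and $\bevt$ independent, that is, $\linprev(\aevt\cap\bevt)=\linprev(\aevt)\linprev(\bevt)$. Since the envelopes in~\eqref{eq:lowerandupperenvelope} are attained, I can pick $\linprev[1],\linprev[2]\in\linprevs(\lowprev)$ with $\linprev[1](\bevt)=\lowprev(\bevt)<\uppprev(\bevt)=\linprev[2](\bevt)$. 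For every $s\in[0,1]$ the convex combination $\linprev[s]\coloneqq(1-s)\linprev[1]+s\linprev[2]$ again lies in $\linprevs(\lowprev)$, hence also satisfies $\linprev[s](\aevt\cap\bevt)=\linprev[s](\aevt)\linprev[s](\bevt)$. Expanding both sides of this identity, substituting $\linprev[i](\aevt\cap\bevt)=\linprev[i](\aevt)\linprev[i](\bevt)$ for $i=1,2$, and simplifying, everything cancels except $s(1-s)\bigl(\linprev[1](\aevt)-\linprev[2](\aevt)\bigr)\bigl(\linprev[1](\bevt)-\linprev[2](\bevt)\bigr)=0$, which must then hold for all $s\in(0,1)$; since $\linprev[1](\bevt)\neq\linprev[2](\bevt)$ this yields $\linprev[1](\aevt)=\linprev[2](\aevt)$. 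Running the same computation with an arbitrary $\linprev\in\linprevs(\lowprev)$ in the role of whichever of $\linprev[1],\linprev[2]$ has a $\bevt$-value different from $\linprev(\bevt)$, I obtain that $\linprev(\aevt)$ equals this same value for every $\linprev\in\linprevs(\lowprev)$; write $\alpha$ for it. By~\eqref{eq:lowerandupperenvelope} this means $\lowprev(\aevt)=\uppprev(\aevt)=\alpha$.

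Now the case split. If $\alpha=0$ then $\uppprev(\aevt)=0$, and if $\alpha=1$ then $\uppprev(\co{\aevt})=1-\lowprev(\aevt)=0$; either way $\aevt$ is trivial, contradicting our standing assumption, so $\alpha\in(0,1)$. At this point I would feed a carefully chosen gamble into the S-irrelevance criterion~\eqref{eq:S-irrelevance:lowprev}: fix any $t$ with $\lowprev(\bevt)<t<\uppprev(\bevt)$ and take $\gbl\coloneqq\ind{\bevt}-t\in\gbls[\bevt]$. Because every $\linprev\in\linprevs(\lowprev)$ satisfies $\linprev(\aevt)=\alpha$, $\linprev(\co{\aevt})=1-\alpha$, $\linprev(\aevt\cap\bevt)=\alpha\linprev(\bevt)$ and $\linprev(\co{\aevt}\cap\bevt)=(1-\alpha)\linprev(\bevt)$, a direct computation using~\eqref{eq:lowerandupperenvelope} gives $\lowprev(\ind{\co{\aevt}}\gbl)=(1-\alpha)\bigl(\lowprev(\bevt)-t\bigr)<0$ and $\lowprev(-\ind{\aevt}\gbl)=\alpha\bigl(t-\uppprev(\bevt)\bigr)<0$. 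This contradicts~\eqref{eq:S-irrelevance:lowprev}, so we conclude that $\lowprev(\bevt)=\uppprev(\bevt)$. The displayed consequence is then immediate: $\lowprev(\bevt)=\uppprev(\bevt)$ forces $\linprev(\bevt)=\lowprev(\bevt)$ for every $\linprev\in\linprevs(\lowprev)$, so $\linprev(\lambda\ind{\bevt}+\mu\ind{\co{\bevt}})=\lambda\lowprev(\bevt)+\mu\bigl(1-\lowprev(\bevt)\bigr)$ takes the same value for all such $\linprev$, and~\eqref{eq:lowerandupperenvelope} then gives $\lowprev(\gbl)=\uppprev(\gbl)$ for every $\gbl\in\gbls[\bevt]$.

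The main obstacle — and the reason the statement is not a one-liner — is that independence of all the dominating linear previsions is necessary but, as the paper stresses just after Proposition~\ref{prop:loweronesided}, \emph{not} sufficient for S-irrelevance. The convexity argument alone only pins $\linprev(\aevt)$ down to a constant; disposing of the residual case where that constant lies strictly between $0$ and $1$ genuinely needs the non-linear content of~\eqref{eq:S-irrelevance:lowprev} rather than mere independence of the representing previsions, and the gamble $\ind{\bevt}-t$ with $t$ strictly between $\lowprev(\bevt)$ and $\uppprev(\bevt)$ is exactly what exposes the incompatibility.
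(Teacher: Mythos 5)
Your proof is correct, but it takes a genuinely different route from the paper's. The paper proves an intermediate Lemma (its Lemma~\ref{lem:atleastonezero}) stating that any two dominating linear previsions that disagree on \(\bevt\) must \emph{both} assign zero probability to the same element of \(\set{\aevt,\co{\aevt}}\); this is extracted by feeding the gambles \(\pm(\ind{\bevt}-\kappa)\) (with \(\kappa\) the midpoint of the two \(\bevt\)-values) into the S-irrelevance criterion~\eqref{eq:S-irrelevance:lowprev} twice and combining the resulting disjunctions. You instead exploit the \emph{convexity} of \(\linprevs(\lowprev)\), which the paper's proof never touches: since independence of \(\aevt\) and \(\bevt\) under a convex mixture of two independent previsions forces \(s(1-s)(\linprev[1](\aevt)-\linprev[2](\aevt))(\linprev[1](\bevt)-\linprev[2](\bevt))=0\), the marginal probability of \(\aevt\) must be a single constant \(\alpha\) across all of \(\linprevs(\lowprev)\) as soon as \(\bevt\) is imprecise; the boundary cases \(\alpha\in\set{0,1}\) give triviality, and the interior case is killed by a single application of~\eqref{eq:S-irrelevance:lowprev} to \(\ind{\bevt}-t\). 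Both arguments ultimately rest on Proposition~\ref{prop:loweronesided} and on probing the criterion with a gamble \(\ind{\bevt}-c\) for \(c\) strictly between \(\lowprev(\bevt)\) and \(\uppprev(\bevt)\), but yours localises the use of S-irrelevance to one invocation and replaces the paper's case analysis over zero probabilities by the cleaner structural observation that independence is not preserved under mixing unless a marginal is degenerate. The trade-off is that your argument leans on convexity of the credal set (stated in the paper just before Equation~\eqref{eq:lowerandupperenvelope}), whereas the paper's only needs the envelope property; on the other hand, your route yields the slightly stronger intermediate fact that \(\lowprev(\aevt)=\uppprev(\aevt)\) whenever \(\bevt\) is imprecise, which the paper obtains only in the degenerate form \(\uppprev(\taevt)=0\). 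Your closing remark correctly identifies why independence of the dominating previsions alone cannot suffice and where the genuinely non-binary content of the criterion enters.
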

\noindent
Our proof for this result makes use of two technical lemmas. 
The first of these two is fairly straightforward; it establishes that if the probability of an event is precise---meaning that its lower and upper probabilities coincide---then every gamble on the occurrence of this event has a precise prevision---meaning that its lower and upper prevision coincide.

\begin{lemma}\label{lem:Fpreciseimpliesgambleprecise}
Consider any event \(\bevt\subseteq\pspace\) and any coherent lower prevision~\(\lowprev\) on~\(\gbls\). 
If \(\smash{\lowprev(\bevt)=\uppprev(\bevt)}\), then also \(\smash{\lowprev(\gbl)=\uppprev(\gbl)}\) for all \(\gbl\in\gbls[\bevt]\).
\end{lemma}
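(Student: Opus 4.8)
The plan is to reduce an arbitrary gamble in $\gbls[\bevt]$ to a scalar multiple of the indicator $\ind{\bevt}$ plus a constant, and then exploit positive homogeneity and conjugacy of $\lowprev$ together with the hypothesis $\lowprev(\bevt)=\uppprev(\bevt)$. Concretely, any $\gbl\in\gbls[\bevt]$ can be written as $\gbl=\lambda\ind{\bevt}+\mu\ind{\co{\bevt}}=(\lambda-\mu)\ind{\bevt}+\mu$, so by constant additivity~[\ref{ax:lowprev:constantadditive}] we have $\lowprev(\gbl)=\lowprev((\lambda-\mu)\ind{\bevt})+\mu$ and $\uppprev(\gbl)=\uppprev((\lambda-\mu)\ind{\bevt})+\mu$. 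Hence it suffices to show that $\lowprev(a\ind{\bevt})=\uppprev(a\ind{\bevt})$ for every $a\in\reals$.

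First I would treat the case $a\geq0$. By positive homogeneity~[\ref{ax:lowprev:homo}]—which transfers to $\uppprev$ through conjugacy, since for $c\geq0$ one has $\uppprev(c\gbl)=-\lowprev(-c\gbl)=-c\lowprev(-\gbl)=c\uppprev(\gbl)$—we get $\lowprev(a\ind{\bevt})=a\lowprev(\bevt)$ and $\uppprev(a\ind{\bevt})=a\uppprev(\bevt)$, and these two numbers coincide by the assumption $\lowprev(\bevt)=\uppprev(\bevt)$.

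Next I would treat the case $a<0$, writing $a=-\abs{a}$ with $\abs{a}>0$. Here conjugacy does the work directly: $\lowprev(-\abs{a}\ind{\bevt})=-\uppprev(\abs{a}\ind{\bevt})=-\abs{a}\uppprev(\bevt)$ and $\uppprev(-\abs{a}\ind{\bevt})=-\lowprev(\abs{a}\ind{\bevt})=-\abs{a}\lowprev(\bevt)$, so once more the hypothesis $\lowprev(\bevt)=\uppprev(\bevt)$ makes them equal. Combining the two cases with the constant-additivity step above yields $\lowprev(\gbl)=\uppprev(\gbl)$ for all $\gbl\in\gbls[\bevt]$.

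The argument is essentially a short computation, so I do not expect a genuine obstacle; the only point requiring a little care is that positive homogeneity~[\ref{ax:lowprev:homo}] is stated for nonnegative scalars only, which is exactly why the proof splits on the sign of the coefficient $\lambda-\mu$ and invokes conjugacy in the negative case rather than homogeneity.
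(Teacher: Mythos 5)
Your proof is correct, but it takes a different route from the paper's. You argue directly from the axioms: after rewriting \(\gbl=(\lambda-\mu)\ind{\bevt}+\mu\), you use constant additivity~[\ref{ax:lowprev:constantadditive}], positive homogeneity~[\ref{ax:lowprev:homo}] and conjugacy to reduce everything to \(\lowprev(\bevt)=\uppprev(\bevt)\), splitting on the sign of \(\lambda-\mu\). The paper instead invokes the envelope representation of Equation~\eqref{eq:lowerandupperenvelope}: since every \(\linprev\in\linprevs(\lowprev)\) satisfies \(\lowprev(\bevt)\leq\linprev(\bevt)\leq\uppprev(\bevt)\), the hypothesis forces \(\linprev(\bevt)=\alpha\) for all dominating linear previsions, whence \(\linprev(\gbl)=\lambda\alpha+\mu(1-\alpha)\) is the same for all of them, and the min and max coincide. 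Your version is more elementary---it needs only the coherence axioms and conjugacy, not the lower-envelope theorem, which rests on a Hahn--Banach-type argument---at the cost of a case split; the paper's version is shorter once the envelope machinery is in place and fits the style of the surrounding proofs, which work with \(\linprevs(\lowprev)\) throughout. One cosmetic remark: \ref{ax:lowprev:homo} is stated for strictly positive scalars, so the case \(\lambda-\mu=0\) formally needs \(\lowprev(0)=\uppprev(0)=0\), which follows from~\ref{ax:lowprev:infandsup}; this is trivial and does not affect the validity of your argument.
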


\begin{proof}
Since \(\gbl\in\gbls[\bevt]\), we know that there are \(\lambda,\mu\in\reals\) such that \(\gbl=\lambda\ind{\bevt}+\mu\ind{\co{\bevt}}\).
Let \(\smash{\alpha\coloneqq\lowprev(\bevt)=\uppprev(\bevt)}\). 
For all \(\linprev\in\linprevs(\lowprev)\), we then have that
\begin{equation*}
\linprev(\gbl)
=\linprev(\lambda\ind{\bevt}+\mu\ind{\co{\bevt}})
=\lambda\linprev(\ind{\bevt})+\mu\linprev(\ind{\co{\bevt}})
=\lambda\linprev(\bevt)+\mu\linprev(\co{\bevt})
=\lambda\alpha+\mu(1-\alpha).
\end{equation*}
The result therefore follows from Equation~\eqref{eq:lowerandupperenvelope}.
\end{proof}
\noindent 
The second lemma on which our proof for Proposition~\ref{prop:lowerprecise} depends, is more involved and less intuitive. 
It states that if \(\aevt\) is S-irrelevant to \(\bevt\) for some lower prevision, then any two linear previsions that dominate this lower prevision but disagree on the probability of \(\bevt\) must either both assign zero probability to \(\aevt\), or both assign zero probability to \(\co{\aevt}\).

\begin{lemma}\label{lem:atleastonezero}
Consider any two events \(\aevt,\bevt\subseteq\pspace\) and any coherent lower prevision~\(\lowprev\) on~\(\gbls\). 
Let \(\aevt\) be S-irrelevant to~\(\bevt\) with respect to~\(\rejectset[\lowprev]\).
Then for any \(\linprev[1],\linprev[2]\in\linprevs(\lowprev)\) such that \(\linprev[1](\bevt)\neq\linprev[2](\bevt)\), there is some \(\taevt\in\set{\aevt,\co{\aevt}}\) such that \(\linprev[1](\taevt)=\linprev[2](\taevt)=0\).
\end{lemma}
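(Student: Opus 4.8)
The plan is to unfold the S-irrelevance criterion~\eqref{eq:S-irrelevance:lowprev} into an elementary statement about two real parameters, using Proposition~\ref{prop:loweronesided}, and then to finish by a short case analysis.

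First I would fix \(\linprev[1],\linprev[2]\in\linprevs(\lowprev)\) with \(\linprev[1](\bevt)\neq\linprev[2](\bevt)\) and invoke Proposition~\ref{prop:loweronesided} to learn that \(\aevt\) and \(\bevt\) are independent with respect to each \(\linprev[i]\). Writing \(\alpha_i\coloneqq\linprev[i](\aevt)\) and \(\beta_i\coloneqq\linprev[i](\bevt)\) and combining additivity with this independence, one gets, for any \(\gbl=\lambda\ind{\bevt}+\mu\ind{\co{\bevt}}\in\gbls[\bevt]\),
\begin{equation*}
\linprev[i](\ind{\aevt}\gbl)=\alpha_i t_i
\quad\text{and}\quad
\linprev[i](\ind{\co{\aevt}}\gbl)=(1-\alpha_i)t_i,
\qquad\text{where }t_i\coloneqq\lambda\beta_i+\mu(1-\beta_i).
\end{equation*}
Since each \(\linprev[i]\) dominates \(\lowprev\), the inequality \(\lowprev(\ind{\co{\aevt}}\gbl)\geq0\) forces \((1-\alpha_i)t_i\geq0\) for \(i=1,2\), while \(\lowprev(-\ind{\aevt}\gbl)\geq0\) forces \(\linprev[i](-\ind{\aevt}\gbl)=-\alpha_i t_i\geq0\), that is, \(\alpha_i t_i\leq0\), for \(i=1,2\). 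So S-irrelevance of \(\aevt\) to \(\bevt\) with respect to \(\rejectset[\lowprev]\)---which by~\eqref{eq:S-irrelevance:lowprev} states that \(\lowprev(\ind{\co{\aevt}}\gbl)\geq0\) or \(\lowprev(-\ind{\aevt}\gbl)\geq0\) for all \(\gbl\in\gbls[\bevt]\)---implies that for every \((\lambda,\mu)\in\reals^2\),
\begin{equation*}
\bigl[(1-\alpha_1)t_1\geq0\text{ and }(1-\alpha_2)t_2\geq0\bigr]
\quad\text{or}\quad
\bigl[\alpha_1 t_1\leq0\text{ and }\alpha_2 t_2\leq0\bigr].
\end{equation*}
The key observation is that the linear map \((\lambda,\mu)\mapsto(t_1,t_2)\) has determinant \(\beta_1-\beta_2\neq0\), hence is a bijection of \(\reals^2\), so this disjunction in fact holds for \emph{every} \((t_1,t_2)\in\reals^2\).

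It then remains to show that this forces \(\alpha_1=\alpha_2=0\) or \(\alpha_1=\alpha_2=1\), after which the conclusion follows: in the first case \(\taevt\coloneqq\aevt\) works, and in the second \(\taevt\coloneqq\co{\aevt}\) does, since then \(\linprev[i](\co{\aevt})=1-\alpha_i=0\). To get there I would first rule out \(\alpha_1\in(0,1)\): taking \(t_1=1\) (with \(t_2\) arbitrary) makes the second bracketed condition fail, because \(\alpha_1 t_1>0\), so the first one must hold and thus \((1-\alpha_2)t_2\geq0\) for all \(t_2\in\reals\), forcing \(\alpha_2=1\); taking \(t_1=-1\) instead makes the first bracketed condition fail and forces \(\alpha_2=0\)---a contradiction. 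By symmetry \(\alpha_2\notin(0,1)\), so \(\alpha_1,\alpha_2\in\set{0,1}\); plugging \((t_1,t_2)=(-1,1)\) into the disjunction then rules out \((\alpha_1,\alpha_2)=(0,1)\), and \((t_1,t_2)=(1,-1)\) rules out \((\alpha_1,\alpha_2)=(1,0)\), leaving only \((0,0)\) and \((1,1)\).

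I expect the reduction step to be the only part requiring care: correctly unfolding \(\lowprev(\ind{\co{\aevt}}\gbl)\geq0\) and \(\lowprev(-\ind{\aevt}\gbl)\geq0\) in terms of the dominating previsions \(\linprev[1],\linprev[2]\), and checking that the substitution \((\lambda,\mu)\leftrightarrow(t_1,t_2)\) is invertible precisely when \(\linprev[1](\bevt)\neq\linprev[2](\bevt)\). Proposition~\ref{prop:loweronesided} is what makes this clean, since it lets both \(\linprev[i](\ind{\aevt}\gbl)\) and \(\linprev[i](\ind{\co{\aevt}}\gbl)\) factor through the single linear functional \(t_i\) of \((\lambda,\mu)\); once the problem is recast for arbitrary \((t_1,t_2)\in\reals^2\), the remaining case analysis is entirely routine.
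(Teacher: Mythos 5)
Your proof is correct, and it reaches the conclusion by a somewhat different route than the paper, so a comparison is worth making. Both arguments rest on the same pillar---Proposition~\ref{prop:loweronesided} gives independence of \(\aevt\) and \(\bevt\) under each \(\linprev[i]\), which (together with Proposition~\ref{prop:symmetry:for:independence}, which you use implicitly when factorising \(\linprev[i](\ind{\aevt}\gbl)\) and \(\linprev[i](\ind{\co{\aevt}}\gbl)\) over the four atoms) lets every test gamble in \(\gbls[\bevt]\) be reduced to the scalars \(\alpha_i=\linprev[i](\aevt)\) and \(t_i=\linprev[i](\gbl)\). Where you diverge is in the endgame. The paper picks the two hand-tailored gambles \(\pm(\ind{\bevt}-\kappa)\) with \(\kappa\) the midpoint of \(\linprev[1](\bevt)\) and \(\linprev[2](\bevt)\)---in your coordinates, exactly the points \((t_1,t_2)=(-\delta,\delta)\) and \((\delta,-\delta)\)---derives from each an ex absurdo disjunction, and finishes with a propositional distributivity step. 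You instead observe that the linear map \((\lambda,\mu)\mapsto(t_1,t_2)\) has determinant \(\linprev[1](\bevt)-\linprev[2](\bevt)\neq0\), so the universally quantified disjunction holds over all of \(\reals^2\), and then run an exhaustive case analysis (first forcing \(\alpha_1,\alpha_2\in\set{0,1}\) using the rays \(t_1=\pm1\), then eliminating the mixed pairs). Your version is slightly longer but makes transparent \emph{why} only the degenerate pairs \((0,0)\) and \((1,1)\) survive, and it exposes precisely where the hypothesis \(\linprev[1](\bevt)\neq\linprev[2](\bevt)\) enters (invertibility of the reparametrisation); the paper's version is shorter because its two test points already suffice once the Boolean algebra is done. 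All the individual steps you give---the domination argument transferring \(\lowprev(\cdot)\geq0\) to each \(\linprev[i]\), the determinant computation, and the four-case elimination---check out.
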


\begin{proof}
Recall from Proposition~\ref{prop:loweronesided} that the assumptions imply in particular that \(\aevt\) and~\(\bevt\) are independent with respect to both \(\linprev[1]\) and \(\linprev[2]\).

We will assume without loss of generality that \(\linprev[1](\bevt)<\linprev[2](\bevt)\), and therefore that \(\delta\coloneqq(\linprev[2](\bevt)-\linprev[1](\bevt))/2>0\). 
In the other case, simply reverse the roles of~\(\linprev[1]\) and~\(\linprev[2]\) and repeat the argument.

If we let \(\kappa\coloneqq(\linprev[1](\bevt)+\linprev[2](\bevt))/2\), then \(\linprev[1](\bevt)<\kappa<\linprev[2](\bevt)\).
So if we now let \(\gbl\coloneqq\ind{\bevt}-\kappa\in\gbls[\bevt]\), then \(\linprev[1](\gbl)=-\delta<0<\delta=\linprev[2](\gbl)\). 

First, assume \emph{ex absurdo} that \(\linprev[1](\co{\aevt})>0\) and \(\linprev[2](\aevt)>0\). 
Then
\begin{equation*}
\linprev[1](\ind{\co{\aevt}}\gbl)
=\linprev[1](\co{\aevt}\cap\bevt)-\kappa\linprev[1](\co{\aevt})
=\linprev[1](\co{\aevt})\linprev[1](\bevt)-\kappa\linprev[1](\co{\aevt})
=-\delta\linprev[1](\co{\aevt})<0
\end{equation*}
and similarly
\begin{equation*}
\linprev[2](-\ind{\aevt}\gbl)
=-\linprev[2](\aevt\cap\bevt)+\kappa\linprev[2](\aevt)
=-\linprev[2](\aevt)\linprev[2](\bevt)+\kappa\linprev[2](\aevt)
=-\delta\linprev[2](\aevt)<0,
\end{equation*}
where we have used the independence of~\(\aevt\) and \(\bevt\) with respect to both \(\linprev[1]\) and~\(\linprev[2]\), together with Proposition~\ref{prop:symmetry:for:independence} for the second case.
This implies that also
\begin{equation*}
\lowprev(\ind{\co{\aevt}}\gbl)<0
\text{ and }
\lowprev(-\ind{\aevt}\gbl)<0,
\end{equation*}
contradicting the assumption that \(\aevt\) is S-irrelevant to~\(\bevt\) with respect to~\(\rejectset[\lowprev]\); see criterion~\eqref{eq:S-irrelevance:lowprev}. 
We therefore conclude that \(\linprev[1](\co{\aevt})=0\) or \(\linprev[2](\aevt)=0\).

Similarly, assume \emph{ex absurdo} that \(\linprev[1](\aevt)>0\) and \(\linprev[2](\co{\aevt})>0\).
Then
\begin{equation*}
\linprev[1](\ind{\aevt}\gbl)
=\linprev[1](\aevt\cap\bevt)-\kappa\linprev[1](\aevt)
=\linprev[1](\aevt)\linprev[1](\bevt)-\kappa\linprev[1](\aevt)
=-\delta\linprev[1](\aevt)<0
\end{equation*}
and also
\begin{equation*}
\linprev[2](-\ind{\co{\aevt}}\gbl)
=-\linprev[2](\co{\aevt}\cap\bevt)+\kappa\linprev[2](\co{\aevt})
=-\linprev[2](\co{\aevt})\linprev[2](\bevt)+\kappa\linprev[2](\co{\aevt})
=-\delta\linprev[2](\co{\aevt})<0,
\end{equation*}
where we have again used the independence of~\(\aevt\) and \(\bevt\) with respect to both \(\linprev[1]\) and~\(\linprev[2]\), together with Proposition~\ref{prop:symmetry:for:independence} for the second case.
It follows that 
\begin{equation*}
\lowprev(\ind{\aevt}\gbl)<0
\text{ and }
\lowprev(-\ind{\co{\aevt}}\gbl)<0,
\end{equation*}
contradicting the assumption that \(\aevt\) is S-irrelevant to~\(\bevt\) with respect to~\(\rejectset[\lowprev]\); see criterion~\eqref{eq:S-irrelevance:lowprev} for the gamble \(-\gbl\) this time. 
We therefore conclude that \(\linprev[1](\aevt)=0\) or \(\linprev[2](\co{\aevt})=0\).

In summary, we have so far found that
\begin{equation*}
(\text{\(\linprev[1](\co{\aevt})=0\) or \(\linprev[2](\aevt)=0\)})
\text{ and }
(\text{\(\linprev[1](\aevt)=0\) or \(\linprev[2](\co{\aevt})=0\)}).
\end{equation*}
After applying the distributivity of `and' over `or', and removing two contradictions, we see that this is equivalent to
\begin{equation*}
(\text{\(\linprev[1](\co{\aevt})=0\) and \(\linprev[2](\co{\aevt})=0\)})
\text{ or }
(\text{\(\linprev[1](\aevt)=0\) and \(\linprev[2](\aevt)=0\)}),
\end{equation*}
as claimed in the statement of the lemma.
\end{proof}

\begin{proof}\hspace{-1pt}{\bf{\itshape{of Proposition~\ref{prop:lowerprecise}}}}
Assume that \(\aevt\) is S-irrelevant to~\(\bevt\) with respect to~\(\smash{\rejectset[\lowprev]}\) and that \(\lowprev(\bevt)<\uppprev(\bevt)\). 
We prove that this implies that either \(\smash{\uppprev(\aevt)=0}\) or \(\smash{\uppprev(\co{\aevt})=0}\). 
The rest of the statement then follows from Lemma~\ref{lem:Fpreciseimpliesgambleprecise}.

Since \(\lowprev(\bevt)<\uppprev(\bevt)\), there are \(\linprev[1],\linprev[2]\in\linprevs(\lowprev)\) such that \(\linprev[1](\bevt)\neq\linprev[2](\bevt)\). 
Since \(\aevt\) is S-irrelevant to~\(\bevt\) with respect to~\(\rejectset[\lowprev]\), we know from Lemma~\ref{lem:atleastonezero} that there is some \(\taevt\in\set{\aevt,\co{\aevt}}\) such that \(\linprev[1](\taevt)=\linprev[2](\taevt)=0\).

Consider now any \(\linprev\in\linprevs(\lowprev)\). 
Since \(\linprev[1](\bevt)\neq\linprev[2](\bevt)\), there is at least one \(k\in\set{1,2}\) such that \(\linprev[k](\bevt)\neq\linprev(\bevt)\). 
Since \(\linprev,\linprev[k]\in\linprevs(\lowprev)\) and since \(\aevt\) is S-irrelevant to~\(\bevt\) with respect to~\(\smash{\rejectset[\lowprev]}\), Lemma~\ref{lem:atleastonezero} tells us that there is some \(\taevt[k]\in\set{\aevt,\co{\aevt}}\) such that \(\linprev(\taevt[k])=\linprev[k](\taevt[k])=0\). 
Assume \emph{ex absurdo} that \(\taevt[k]\neq\taevt\). 
Since \(\taevt[k]\) and \(\taevt\) belong to \(\set{\aevt,\co{\aevt}}\) and both \(\linprev[k](\taevt)=0\) and \(\linprev[k](\taevt[k])=0\), this would imply that \(\linprev[k](\aevt)=\linprev[k](\co{\aevt})=0\), a contradiction because these probabilities must sum to one.
So we find that \(\taevt[k]=\taevt\), and therefore also that \(\linprev(\taevt)=\linprev(\taevt[k])=0\). 
Since this is true for any \(\linprev\in\linprevs(\lowprev)\), we conclude that \(\uppprev(\taevt)=0\).
\end{proof}

We can now combine Propositions~\ref{prop:loweronesided}, \ref{prop:triviality:implies:S:independence} and~\ref{prop:lowerprecise} into the following result, which is the counterpart of Theorem~\ref{theo:lineartwosided} for lower prevision models. 
It yields a necessary and sufficient condition for S-irrelevance and S-independence, expressed in terms of triviality, precision and an interval version of the factorisation property. 
To state this factorisation condition, we adopt \(\smash{\lowuppprev(\altgbl)}\) as a shorthand notation for the interval \([\lowprev(\altgbl),\uppprev(\altgbl)]\) and employ the so-called \emph{interval product} \([a,b]\odot c\) of a real interval \([a,b]\) with a real number \(c\), defined as
\begin{equation*}
[a,b]\odot c\coloneqq[\min\set{ac,bc},\max\set{ac,bc}].
\end{equation*}
We also adopt the convention that a singleton is identified with its unique element, which allows us to write \(\lowuppprev(\altgbl)=\linprev(\altgbl)\) as a shorthand for \(\lowprev(\altgbl)=\uppprev(\altgbl)=\vcentcolon\linprev(\altgbl)\).

\begin{theorem}\label{theo:lowertwosided}
Consider any two events \(\aevt,\bevt\subseteq\pspace\) and any coherent lower prevision~\(\lowprev\) on~\(\gbls\).
Then \(\aevt\) is S-irrelevant to~\(\bevt\) with respect to~\(\rejectset[\lowprev]\) if and only if \(\aevt\) is trivial or
\begin{equation}
\label{eq:lower:twosided:irrelevance}
\lowuppprev(\altgbl)=\linprev(\altgbl)\text{ and }\lowuppprev(\gbl\altgbl)=\lowuppprev(\gbl)\odot\linprev(\altgbl)
\text{ for all \(\gbl\in\gbls[\aevt]\) and \(\altgbl\in\gbls[\bevt]\).}\\
\end{equation}
Similarly, \(\aevt\) and \(\bevt\) are S-independent if and only if \(\aevt\) or \(\bevt\) are trivial or
\begin{multline}\label{eq:lower:twosided:independence}
\text{\(\lowuppprev(\gbl)=\linprev(\gbl)\), \(\lowuppprev(\altgbl)=\linprev(\altgbl)\) and}\\
\lowuppprev(\gbl\altgbl)=\linprev(\gbl)\linprev(\altgbl)
\text{ for all \(\gbl\in\gbls[\aevt]\) and \(\altgbl\in\gbls[\bevt]\).}
\end{multline}
\end{theorem}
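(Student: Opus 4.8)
The plan is to prove the S-irrelevance equivalence first and then obtain the S-independence statement by symmetrising. For the ``if'' direction, I would split on whether \(\aevt\) is trivial. If it is, Proposition~\ref{prop:triviality:implies:S:independence} already gives S-independence, hence in particular that \(\aevt\) is S-irrelevant to \(\bevt\). If instead the factorisation~\eqref{eq:lower:twosided:irrelevance} holds, I would verify the criterion~\eqref{eq:S-irrelevance:lowprev} directly: for each \(\altgbl\in\gbls[\bevt]\) one must show \(\lowprev(\ind{\co{\aevt}}\altgbl)\geq0\) or \(\lowprev(-\ind{\aevt}\altgbl)\geq0\). The hypothesis forces \(\lowprev(\altgbl)=\uppprev(\altgbl)\); write \(\linprev(\altgbl)\) for this common value. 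If \(\linprev(\altgbl)\geq0\), instantiate~\eqref{eq:lower:twosided:irrelevance} at \(\gbl=\ind{\co{\aevt}}\in\gbls[\aevt]\); using \(\lowprev(\co{\aevt})\geq0\) together with \(\linprev(\altgbl)\geq0\), the relevant endpoint of the interval product is \(\lowprev(\ind{\co{\aevt}}\altgbl)=\lowprev(\co{\aevt})\linprev(\altgbl)\geq0\). If \(\linprev(\altgbl)\leq0\), rewrite \(-\ind{\aevt}\altgbl=\ind{\aevt}(-\altgbl)\) with \(-\altgbl\in\gbls[\bevt]\) and \(\linprev(-\altgbl)\geq0\), and instantiate at \(\gbl=\ind{\aevt}\) to obtain \(\lowprev(-\ind{\aevt}\altgbl)=\lowprev(\aevt)\linprev(-\altgbl)\geq0\). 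Either way~\eqref{eq:S-irrelevance:lowprev} holds.

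For the ``only if'' direction, assume \(\aevt\) is S-irrelevant to \(\bevt\) and not trivial. Proposition~\ref{prop:lowerprecise} immediately supplies \(\lowuppprev(\altgbl)=\linprev(\altgbl)\) for all \(\altgbl\in\gbls[\bevt]\)---the first half of~\eqref{eq:lower:twosided:irrelevance}---and in particular the value \(\linprev(\altgbl)\) is the same for every \(\linprev\in\linprevs(\lowprev)\). It remains to establish the product formula. By Proposition~\ref{prop:loweronesided} (and Proposition~\ref{prop:symmetry:for:independence} for the complements), \(\aevt\) and \(\bevt\) are independent with respect to every \(\linprev\in\linprevs(\lowprev)\); expanding a generic \(\gbl=a\ind{\aevt}+b\ind{\co{\aevt}}\in\gbls[\aevt]\) against \(\altgbl=c\ind{\bevt}+d\ind{\co{\bevt}}\in\gbls[\bevt]\) over the four cells and collecting terms then gives \(\linprev(\gbl\altgbl)=\linprev(\gbl)\linprev(\altgbl)\) for each such \(\linprev\). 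Taking the lower envelope~\eqref{eq:lowerandupperenvelope} of both sides and pulling the constant \(\linprev(\altgbl)\) (independent of the choice of \(\linprev\)) out of the minimum---which replaces \(\lowprev(\gbl)\) by \(\uppprev(\gbl)\) exactly when \(\linprev(\altgbl)<0\)---yields \(\lowprev(\gbl\altgbl)=\min\set{\lowprev(\gbl)\linprev(\altgbl),\uppprev(\gbl)\linprev(\altgbl)}\), and conjugation gives the matching expression for \(\uppprev(\gbl\altgbl)\). Together these are exactly \(\lowuppprev(\gbl\altgbl)=\lowuppprev(\gbl)\odot\linprev(\altgbl)\).

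The S-independence claim then follows formally. If \(\aevt\) or \(\bevt\) is trivial we land in the trivial branch of~\eqref{eq:lower:twosided:independence} (and Proposition~\ref{prop:triviality:implies:S:independence} independently confirms S-independence). Otherwise, applying the just-proved S-irrelevance equivalence in both directions gives \(\lowuppprev(\altgbl)=\linprev(\altgbl)\), \(\lowuppprev(\gbl)=\linprev(\gbl)\) and \(\lowuppprev(\gbl\altgbl)=\lowuppprev(\gbl)\odot\linprev(\altgbl)\); since \(\lowuppprev(\gbl)\) is now a singleton, the interval product collapses to \(\linprev(\gbl)\linprev(\altgbl)\), which is~\eqref{eq:lower:twosided:independence}. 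Conversely, if~\eqref{eq:lower:twosided:independence} holds then both \(\lowuppprev(\gbl)\) and \(\lowuppprev(\altgbl)\) are singletons with \(\lowuppprev(\gbl\altgbl)=\linprev(\gbl)\linprev(\altgbl)\), so both one-sided factorisations of the form~\eqref{eq:lower:twosided:irrelevance} hold---\(\odot\) against a singleton being ordinary multiplication---whence both S-irrelevances, and thus S-independence.

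The step I expect to be the main obstacle is the passage, inside the ``only if'' direction, from ``every dominating linear prevision factorises'' to the interval identity \(\lowuppprev(\gbl\altgbl)=\lowuppprev(\gbl)\odot\linprev(\altgbl)\). Two points there need care: that \(\linprev(\altgbl)\) is genuinely constant over \(\linprevs(\lowprev)\)---this is exactly where the non-triviality of \(\aevt\) enters, via Proposition~\ref{prop:lowerprecise}---and the sign bookkeeping when that constant is moved through the minimisation and maximisation, which is precisely what the interval product \(\odot\) is designed to encode. The rest is routine: the triviality cases are handed to Proposition~\ref{prop:triviality:implies:S:independence}, the identity \(\linprev(\gbl\altgbl)=\linprev(\gbl)\linprev(\altgbl)\) is a short expansion over the cells \(\aevt\cap\bevt,\dots,\co{\aevt}\cap\co{\bevt}\) using independence, and the ``if'' direction is a direct check of~\eqref{eq:S-irrelevance:lowprev}.
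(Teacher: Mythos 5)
Your proposal is correct and follows essentially the same route as the paper's proof: triviality is delegated to Proposition~\ref{prop:triviality:implies:S:independence}, necessity combines Propositions~\ref{prop:lowerprecise} and~\ref{prop:loweronesided} with an envelope computation over \(\linprevs(\lowprev)\) (the sign bookkeeping being exactly the interval product), sufficiency is a direct check of criterion~\eqref{eq:S-irrelevance:lowprev} via the case split on the sign of \(\linprev(\altgbl)\), and S-independence is obtained by symmetrising. The only cosmetic difference is that you deduce the independence-necessity formally from the two one-sided irrelevance equivalences, whereas the paper redoes the envelope argument directly; both are fine.
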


\begin{proof}
We begin with the first statement.
For necessity, assume that \(\aevt\) is non-trivial and S-irrelevant to~\(\bevt\). 
Consider any \(\gbl\in\gbls[\aevt]\) and \(\altgbl\in\gbls[\bevt]\).
Proposition~\ref{prop:lowerprecise} then guarantees that \(\lowprev(\altgbl)=\uppprev(\altgbl)\eqqcolon\linprev(\altgbl)\).
Consider now any \(\altlinprev\in\linprevs(\lowprev)\), then on the one hand \(\altlinprev(\altgbl)=\linprev(\altgbl)\) and on the other hand \(\altlinprev(\gbl\altgbl)=\altlinprev(\gbl)\altlinprev(\altgbl)=\altlinprev(\gbl)\linprev(\altgbl)\) by Proposition~\ref{prop:loweronesided}.
Hence by taking infima and suprema over all \(\altlinprev\in\linprevs(\lowprev)\) on both sides, we get that
\begin{equation*}
\lowprev(\gbl\altgbl)=
\begin{cases}
\lowprev(\gbl)\linprev(\altgbl)&\text{if \(\linprev(\altgbl)\geq0\)}\\
\uppprev(\gbl)\linprev(\altgbl)&\text{if \(\linprev(\altgbl)\leq0\)}
\end{cases}
\text{ and }
\uppprev(\gbl\altgbl)=
\begin{cases}
\uppprev(\gbl)\linprev(\altgbl)&\text{if \(\linprev(\altgbl)\geq0\)}\\
\lowprev(\gbl)\linprev(\altgbl)&\text{if \(\linprev(\altgbl)\leq0\)},
\end{cases}
\end{equation*}
which can indeed be summarised as \(\lowuppprev(\gbl\altgbl)=\lowuppprev(\gbl)\odot\linprev(\altgbl)\).

Next, we address sufficiency. 
Since we know from Proposition~\ref{prop:triviality:implies:S:independence} that the triviality of~\(\aevt\) implies its S-irrelevance to \(\bevt\), we can assume without loss of generality that Equation~\eqref{eq:lower:twosided:irrelevance} holds, and prove that \(\aevt\) is S-irrelevant to~\(\bevt\).
Consider any \(\gbl\in\gbls[\bevt]\). 
Equation~\eqref{eq:lower:twosided:irrelevance} then tells us that \(\smash{\lowuppprev(\gbl)=\linprev(\gbl)}\) and \(\smash{\lowuppprev(-\gbl)=\linprev(-\gbl)}\). 
Furthermore, since \(\lowprev(\gbl)=-\uppprev(-\gbl)\), we find that \(\linprev(\gbl)=-\linprev(-\gbl)\). 
We now consider two cases: \(\linprev(\gbl)\geq0\) and \(\linprev(\gbl)\leq0\). 
If \(\linprev(\gbl)\geq0\), we infer from Equation~\eqref{eq:lower:twosided:irrelevance} that \(\smash{\lowprev(\ind{\co{\aevt}}\gbl)=\lowprev(\co{\aevt})\linprev(\gbl)\geq0}\). 
If \(\linprev(\gbl)\leq0\), then \(\linprev(-\gbl)=-\linprev(\gbl)\geq0\), so it follows from Equation~\eqref{eq:lower:twosided:irrelevance} that \(\smash{\lowprev(-\ind{\aevt}\gbl)=\lowprev(\ind{\aevt}(-\gbl))=\lowprev(\aevt)\linprev(-\gbl)\geq0}\). 
We conclude that in all cases, \(\lowprev(\ind{\co{\aevt}}\gbl)\geq0\) or \(\lowprev(-\ind{\aevt}\gbl)\geq0\). 
Since this is true for every \(\gbl\in\gbls[\bevt]\), it follows that, indeed, \(\aevt\) is S-irrelevant to~\(\bevt\) with respect to~\(\rejectset[\lowprev]\).

Next, we turn to the second statement.
For necessity, assume that \(\aevt\) and \(\bevt\) are non-trivial and S-independent.
Proposition~\ref{prop:lowerprecise} then guarantees that \(\smash{\lowprev(\gbl)=\uppprev(\gbl)\eqqcolon\linprev(\gbl)}\) and \(\smash{\lowprev(\altgbl)=\uppprev(\altgbl)\eqqcolon\linprev(\altgbl)}\) for all \(\gbl\in\gbls[\aevt]\) and \(\altgbl\in\gbls[\bevt]\).
For all \(\altlinprev\in\linprevs(\lowprev)\), Proposition~\ref{prop:loweronesided} then guarantees that \(\altlinprev(\gbl\altgbl)=\altlinprev(\gbl)\altlinprev(\altgbl)=\linprev(\gbl)\linprev(\altgbl)\), and therefore, by taking infima and suprema over all \(\altlinprev\in\linprevs(\lowprev)\) on both sides, we get that, indeed, \(\smash{\lowprev(\gbl\altgbl)=\uppprev(\gbl\altgbl)=\linprev(\gbl)\linprev(\altgbl)}\).

We now turn to sufficiency. 
Since we know from Proposition~\ref{prop:triviality:implies:S:independence} that the triviality of~\(\aevt\) or \(\bevt\) implies their S-independence, we can assume without loss of generality that Equation~\eqref{eq:lower:twosided:independence} holds, and prove that \(\aevt\) and \(\bevt\) are S-independent. 
Since Equation~\eqref{eq:lower:twosided:independence} implies Equation~\eqref{eq:lower:twosided:irrelevance}, the S-irrelevance of~\(\aevt\) to \(\bevt\) follows from the first part of this theorem. 
Since Equation~\eqref{eq:lower:twosided:independence} is symmetric in~\(\aevt\) and \(\bevt\), the S-irrelevance of~\(\bevt\) to \(\aevt\) follows in exactly the same way. 
Hence, we find that, indeed, \(\aevt\) and \(\bevt\) are S-independent with respect to~\(\rejectset[\lowprev]\).
\end{proof}

In combination with Proposition~\ref{prop:S-irrelevance:in:terms_of:binary}, this Theorem~\ref{theo:lowertwosided} also yields characterisations for S-irrelevance and S-independence for Archimedean sets of desirable option sets~\(\rejectset\), in terms of their representing lower previsions. 
In particular, we see that in the absence of triviality, S-independence implies precision and factorisation for (products of) gambles on~\(\aevt\) and \(\bevt\), \emph{without the need for imposing mixingness}. 
In the remainder of this section, we seek to exclude the trivial cases by considering a new potential property for events with respect to a coherent set of desirable option sets~\(\rejectset\).

We start by introducing the notion of credibility: we say that \(\aevt\) is \emph{credible} with respect to a coherent set of desirable option sets~\(\rejectset\) whenever 
\begin{equation*}
(\exists\epsilon>0)\set{\ind{\aevt}-\epsilon}\in\rejectset,
\end{equation*}
meaning that our subject is willing to bet on~\(\aevt\) at some positive---but possibly very small---betting rate~\(\epsilon\). 
For an Archimedean \(\rejectset\), this is equivalent to the lower probability of~\(\aevt\) being strictly bounded below by~\(\epsilon\) for all representing lower previsions.

\begin{proposition}\label{prop:credible:in:terms:of:lowprev}
If the set of desirable option sets~\(\rejectset\) is Archimedean, an event \(\aevt\subseteq\pspace\) is credible with respect to~\(\rejectset\) if and only if there is some real \(\epsilon>0\) such that \(\lowprev(\aevt)>\epsilon\) for all \(\lowprev\in\cohlowprevs(\rejectset)\).
\end{proposition}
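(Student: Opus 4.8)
The plan is to prove this by unwinding three definitions in turn---credibility, Archimedeanity (Definition~\ref{def:archimedeanity}), and the sets of desirable option sets $\rejectset[\lowprev]$ and $\rejectset[\mathscr{P}]$ from Equations~\eqref{eq:Kfromlowprev} and~\eqref{eq:Kfromsetsoflower}---and then to pass between $\lowprev(\ind{\aevt}-\epsilon)$ and $\lowprev(\aevt)-\epsilon$ using the constant additivity property~\ref{ax:lowprev:constantadditive}. Credibility asserts that the singleton option set $\set{\ind{\aevt}-\epsilon}$ lies in $\rejectset$ for some $\epsilon>0$, and for a singleton option set the membership conditions in Equations~\eqref{eq:Kfromlowprev} and~\eqref{eq:Kfromsetsoflower} collapse to statements about the sign of $\lowprev(\ind{\aevt}-\epsilon)$, which equals $\lowprev(\aevt)-\epsilon$. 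So the whole argument is really a chain of equivalences connecting $\set{\ind{\aevt}-\epsilon}\in\rejectset$ to ``$\lowprev(\aevt)>\epsilon$ for the relevant $\lowprev$''.

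For the `only if' direction, I would fix $\epsilon>0$ with $\set{\ind{\aevt}-\epsilon}\in\rejectset$ and take an arbitrary $\lowprev\in\cohlowprevs(\rejectset)$. By the definition of $\cohlowprevs(\rejectset)$ in Definition~\ref{def:archimedeanity} we have $\rejectset\subseteq\rejectset[\lowprev]$, so $\set{\ind{\aevt}-\epsilon}\in\rejectset[\lowprev]$, and by Equation~\eqref{eq:Kfromlowprev} this forces $\lowprev(\ind{\aevt}-\epsilon)>0$, hence $\lowprev(\aevt)>\epsilon$ by~\ref{ax:lowprev:constantadditive}. Since the same $\epsilon$ works for every $\lowprev\in\cohlowprevs(\rejectset)$, this is exactly the claimed condition.

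For the `if' direction, suppose some $\epsilon>0$ satisfies $\lowprev(\aevt)>\epsilon$ for all $\lowprev\in\cohlowprevs(\rejectset)$. Since $\rejectset$ is Archimedean, fix a non-empty $\mathscr{P}\subseteq\cohlowprevs$ with $\rejectset=\rejectset[\mathscr{P}]$. From $\rejectset=\rejectset[\mathscr{P}]=\bigcap_{\lowprev\in\mathscr{P}}\rejectset[\lowprev]$ it follows that $\rejectset\subseteq\rejectset[\lowprev]$ for every $\lowprev\in\mathscr{P}$, so $\mathscr{P}\subseteq\cohlowprevs(\rejectset)$; hence $\lowprev(\aevt)>\epsilon$, and therefore $\lowprev(\ind{\aevt}-\epsilon)=\lowprev(\aevt)-\epsilon>0$ by~\ref{ax:lowprev:constantadditive}, for every $\lowprev\in\mathscr{P}$. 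By Equation~\eqref{eq:Kfromsetsoflower} this says precisely that $\set{\ind{\aevt}-\epsilon}\in\rejectset[\mathscr{P}]=\rejectset$, so $\aevt$ is credible, witnessed by the same $\epsilon$.

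I do not expect a real obstacle: the proof is a sequence of definitional unfoldings together with one application of~\ref{ax:lowprev:constantadditive}. The only point requiring a moment's care is the observation that any witnessing set $\mathscr{P}$ for Archimedeanity satisfies $\mathscr{P}\subseteq\cohlowprevs(\rejectset)$---equivalently, that $\rejectset=\rejectset[\cohlowprevs(\rejectset)]$---which is what lets the quantifier over $\cohlowprevs(\rejectset)$ in the statement communicate with the quantifier over $\mathscr{P}$ hidden inside Equation~\eqref{eq:Kfromsetsoflower}.
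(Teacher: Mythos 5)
Your proof is correct and follows essentially the same route as the paper's: both reduce membership of the singleton \(\set{\ind{\aevt}-\epsilon}\) in \(\rejectset[\lowprev]\) to \(\lowprev(\aevt)>\epsilon\) via constant additivity~\ref{ax:lowprev:constantadditive}, and both rest on the identity \(\rejectset=\bigcap\cset{\rejectset[\lowprev]}{\lowprev\in\cohlowprevs(\rejectset)}\). The only difference is that the paper cites this identity directly (it is part of Definition~\ref{def:archimedeanity}'s claim that \(\cohlowprevs(\rejectset)\) is the largest representing set), whereas you re-derive it by showing that any witnessing \(\mathscr{P}\) is contained in \(\cohlowprevs(\rejectset)\)---a harmless elaboration of the same argument.
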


\begin{proof}
Consider any \(\epsilon>0\). 
Then for any \(\lowprev\in\cohlowprevs(\rejectset)\), \(\set{\ind{\aevt}-\epsilon}\in\rejectset[\lowprev]\) if and only if \(\lowprev(\ind{\aevt}-\epsilon)>0\), or equivalently---since lower previsions are constant additive [\ref{ax:lowprev:constantadditive}]---\(\lowprev(\ind{\aevt})>\epsilon\). 
The result is therefore an immediate consequence of the fact that \(\rejectset=\bigcap\cset{\rejectset[\lowprev]}{\lowprev\in\cohlowprevs(\rejectset)}\).
\end{proof}

We now say that an event~\(\aevt\) is \emph{credibly indeterminate} with respect to a coherent set of desirable option sets~\(\rejectset\) if \(\aevt\) and \(\co{\aevt}\) are both credible with respect to~\(\rejectset\), meaning that our subject is willing to bet both on and against \(\aevt\) at some positive betting rate. 
This condition of credible indeterminacy, when combined with S-independence, allows us to infer both precision and factorisation for every representing lower prevision of an Archimedean set of desirable option sets~\(\rejectset\), even without our having to impose mixingness!

\begin{theorem}\label{theo:loweronesided:K}
Consider any two events \(\aevt,\bevt\subseteq\pspace\) and an Archimedean set of desirable option sets~\(\rejectset\). 
If \(\aevt\) is credibly indeterminate and S-irrelevant to~\(\bevt\) with respect to~\(\rejectset\), then for all \(\lowprev\in\cohlowprevs(\rejectset)\):
\begin{equation*}
\lowuppprev(\altgbl)=\linprev(\altgbl)\text{ and }\lowuppprev(\gbl\altgbl)=\lowuppprev(\gbl)\odot\linprev(\altgbl)
\text{ for all \(\gbl\in\gbls[\aevt]\) and \(\altgbl\in\gbls[\bevt]\).}\\
\end{equation*}
Similarly, if \(\aevt\) and \(\bevt\) are credibly indeterminate and S-independent with respect to~\(\rejectset\), then for all \(\lowprev\in\cohlowprevs(\rejectset)\):
\begin{equation*}
\lowuppprev(\gbl)=\linprev(\gbl),\lowuppprev(\altgbl)=\linprev(\altgbl)\text{ and }
\lowuppprev(\gbl\altgbl)=\linprev(\gbl)\linprev(\altgbl)
\text{ for all \(\gbl\in\gbls[\aevt]\) and \(\altgbl\in\gbls[\bevt]\).}
\end{equation*}
\end{theorem}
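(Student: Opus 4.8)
The plan is to reduce the statement to the binary lower-prevision case already settled in Theorem~\ref{theo:lowertwosided}, using Proposition~\ref{prop:S-irrelevance:in:terms_of:binary} to pass from the Archimedean model~\(\rejectset\) to each of its representing lower previsions~\(\lowprev\in\cohlowprevs(\rejectset)\). The one thing standing between those two results and the desired conclusion is the triviality disjunct in Theorem~\ref{theo:lowertwosided}, so the actual content of the proof is to show that credible indeterminacy of~\(\aevt\) (respectively~\(\bevt\)) with respect to~\(\rejectset\) forces~\(\aevt\) (respectively~\(\bevt\)) to be non-trivial with respect to \emph{every}~\(\lowprev\in\cohlowprevs(\rejectset)\).

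First I would establish this non-triviality. If \(\aevt\) is credibly indeterminate with respect to~\(\rejectset\), then \(\aevt\) and \(\co{\aevt}\) are both credible, so by Proposition~\ref{prop:credible:in:terms:of:lowprev} there are reals \(\epsilon,\epsilon'>0\) with \(\lowprev(\aevt)>\epsilon\) and \(\lowprev(\co{\aevt})>\epsilon'\) for every \(\lowprev\in\cohlowprevs(\rejectset)\). Since \(\uppprev(\aevt)\geq\lowprev(\aevt)>0\) and \(\uppprev(\co{\aevt})\geq\lowprev(\co{\aevt})>0\) by~\ref{ax:lowprev:infandsup}, the event~\(\aevt\) is not trivial with respect to any such~\(\lowprev\); the same argument applies to~\(\bevt\).

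For the first claim, Proposition~\ref{prop:S-irrelevance:in:terms_of:binary} turns S-irrelevance of~\(\aevt\) to~\(\bevt\) with respect to the Archimedean~\(\rejectset\) into S-irrelevance of~\(\aevt\) to~\(\bevt\) with respect to~\(\rejectset[\lowprev]\) for every \(\lowprev\in\cohlowprevs(\rejectset)\). Fixing such a~\(\lowprev\) and invoking the non-triviality just established, Theorem~\ref{theo:lowertwosided} yields Equation~\eqref{eq:lower:twosided:irrelevance} for this~\(\lowprev\), which is precisely the stated conclusion. For the second claim, S-independence is S-irrelevance in both directions, so the same application of Proposition~\ref{prop:S-irrelevance:in:terms_of:binary} (used once in each direction, as in its proof) gives S-independence of~\(\aevt\) and~\(\bevt\) with respect to each~\(\rejectset[\lowprev]\); since both~\(\aevt\) and~\(\bevt\) are non-trivial with respect to~\(\lowprev\), the second part of Theorem~\ref{theo:lowertwosided} gives Equation~\eqref{eq:lower:twosided:independence}. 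I do not anticipate a serious obstacle here: the result is essentially a corollary obtained by chaining earlier results, and the only point that genuinely needs care is the short argument ruling out the triviality disjunct via the strict lower-probability bounds coming from credible indeterminacy.
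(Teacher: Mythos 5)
Your proposal is correct and follows essentially the same route as the paper's own proof: reduce to the binary models \(\rejectset[\lowprev]\) via Proposition~\ref{prop:S-irrelevance:in:terms_of:binary}, apply Theorem~\ref{theo:lowertwosided}, and rule out the triviality disjunct by observing that credible indeterminacy together with Proposition~\ref{prop:credible:in:terms:of:lowprev} and \ref{ax:lowprev:infandsup} forces \(\uppprev(\taevt)>0\) for both \(\taevt\in\set{\aevt,\co{\aevt}}\). No gaps.
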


\begin{proof}
Due to Proposition~\ref{prop:S-irrelevance:in:terms_of:binary} and Theorem~\ref{theo:lowertwosided}, it clearly suffices to show that the credible indeterminacy of an event \(\aevt\) implies that \(\aevt\) is non-trivial with respect to every \(\lowprev\in\cohlowprevs(\rejectset)\). 
So assume that \(\aevt\) is credibly indeterminate with respect to~\(\rejectset\), meaning that \(\taevt\) is credible for each \(\taevt\in\set{\aevt,\co{\aevt}}\). 
For any \(\lowprev\in\cohlowprevs(\rejectset)\), it then follows from Proposition~\ref{prop:credible:in:terms:of:lowprev} that there is some \(\epsilon>0\) such that \(\lowprev(\taevt)>\epsilon\). 
Hence, since \(\smash{\uppprev(\taevt)\geq\lowprev(\taevt)>\epsilon>0}\), we see that \(\aevt\) is indeed non-trivial with respect to~\(\lowprev\).
\end{proof}

\section{S-Irrelevance for variables}\label{sec:S-irrelevance:for:variables}
Let us now extend the discussion from events to variables.
We still assume that our subject's preferences are modelled by a set of desirable option sets~\(\rejectset\), where the possible options are the gambles on the possibility space~\(\pspace\).

We will follow the often used device of representing a variable~\(\zvar\) as a map that is defined on the possibility space~\(\pspace\):\footnote{Although it is related to what we call a variable in spirit, we will refrain from using the term `random variable', as that is typically associated with precise and countable additive probability models, and typically comes with a measurability requirement.}
\begin{equation*}
\zvar\colon\pspace\to\zvalues\colon\omega\mapsto\zvar(\omega),
\end{equation*}
where we denote by~\(\zvalues\) the set of possible values of the variable \(\zvar\).
The idea behind this device is that since our subject is uncertain about the value that \(\omega\) assumes in~\(\pspace\), she will typically also be uncertain about the value assumed by~\(\zvar(\omega)\) in~\(\zvalues\).

If we want to talk about decisions involving the value of the variable \(\zvar\), we need to consider uncertain rewards whose value depends only on the value of~\(\zvar\), or more specifically, gambles on~\(\pspace\) of the type
\begin{equation*}
\altgbltoo(\zvar)\coloneqq\altgbltoo\circ\zvar\colon\pspace\to\reals\colon\omega\mapsto\altgbltoo(\zvar(\omega)),
\end{equation*}
where \(\altgbltoo\) is any gamble on~\(\zvalues\).

In particular, with any \(\aevt\subseteq\zvalues\), we can associate the indicator (gamble) \(\ind{\aevt}\) on~\(\zvalues\), which corresponds to a gamble \(\ind{\aevt}(\zvar)\) on~\(\pspace\).
Since \(\ind{\aevt}(\zvar)=\ind{\aevt}\circ\zvar=\ind{\zvar^{-1}(\aevt)}\), we see that \(\ind{\aevt}(\zvar)\) is an indicator on \(\pspace\). The event \(\smash{\zvar^{-1}(\aevt)\subseteq\pspace}\) that it indicates, corresponds to the proposition `\(\zvar\in\aevt\)'.

\subsection{Defining S-irrelevance for variables}
To see what an assessment of S-irrelevance might mean for variables, we consider two variables~\(\xvar\) and~\(\yvar\), which we model as maps on the possibility space~\(\pspace\):
\begin{equation*}
\xvar\colon\pspace\to\xvalues\text{ and }\yvar\colon\pspace\to\yvalues,
\end{equation*} 
where the respective non-empty sets~\(\xvalues\) and~\(\yvalues\) are the sets of possible values for~\(\xvar\) and~\(\yvar\).
Seidenfeld's `independence' requirement can then be extended straightforwardly from events to variables as follows:
\begin{quote}
When two variables, \(\xvar\) and \(\yvar\), are `independent' then it is not reasonable to \emph{spend resources} in order to use the observed value of one of them, say \(\xvar\), to choose between options that depend solely on the value of the other variable, \(\yvar\).
\end{quote}
As before for events, we recognise the essentially asymmetrical nature of this requirement, and will try to formulate a requirement of S-irrelevance of~\(\xvar\) to \(\yvar\).
Because we care about the operational meaning of our criterion, we will allow the variables~\(\xvar\) and~\(\yvar\) to assume infinitely many values, but want to keep our observations of the values of~\(\xvar\) and the choices between gambles on~\(\yvar\) finitary.
Consequently, observing the value of~\(\xvar\) will be modelled by choosing a \emph{finite} partition~\(\partition\) of the set \(\xvalues\), and finding out which event \(\aevt\in\partition\) in that partition obtains, meaning that \(\xvar\in\aevt\).
For each such possible observation~\(\aevt\), we consider a gamble \(s_{\aevt}\colon\yvalues\to\reals\) on the value of~\(\yvar\), which we will assume to be \emph{finite-valued}---simple---in accordance with our finitary approach.\footnote{An engaged reader will be able to verify further on that, despite our insistence on a finitary approach, this doesn't really matter from a mathematical point of view. In particular, none of our proofs---even the ones that establish \emph{sufficient} conditions for S-irrelevance or S-independence for variables---will actually require the restriction that the gambles \(s_E\)---or \(s_G\)---should be simple.}
Let us denote by~\(\sgbls[\yvalues]\) the set of all simple gambles on~\(\yvalues\).

In summary, our subject needs to choose between the gambles \(s_{\aevt}\), \(\aevt\in\partition\) on the value of~\(\yvar\), or in other words, between the gambles \(s_{\aevt}(\yvar)\coloneqq s_{\aevt}\circ \yvar\) on the possibility space~\(\pspace\).
Observing the value of~\(\xvar\) to choose between these gambles then corresponds to the composite gamble \(\sum_{\cevt\in\partition}\ind{\cevt}(\xvar)s_{\cevt}(\yvar)\) on the possibility space~\(\pspace\).
The purport of Seidenfeld's requirement is that when our subject judges \(\xvar\) to be irrelevant to~\(\yvar\), she should reject the composite option~\(\sum_{\cevt\in\partition}\ind{\cevt}(\xvar)s_{\cevt}(\yvar)-\epsilon\) from the set of options
\begin{equation*}
\cset{s_{\aevt}(\yvar)}{\aevt\in\partition}\cup\set[\bigg]{\sum_{\cevt\in\partition}\ind{\cevt}(\xvar)s_{\cevt}(\yvar)-\epsilon},
\end{equation*}
for all real \(\epsilon>0\).
Following the discussion in Sections~\ref{sec:choice-functions} and~\ref{sec:S-irrelevance:for:events}, and Equation~\eqref{eq:choiceandrejectionfromK} in particular, this means that the option set
\begin{multline*}
\cset[\bigg]{s_{\aevt}(\yvar)-\sum_{\cevt\in\partition}\ind{\cevt}(\xvar)s_{\cevt}(\yvar)+\epsilon}{\aevt\in\partition}\\
\begin{aligned}
&=\cset[\bigg]{\sum_{\cevt\in\partition}\ind{\cevt}(\xvar)s_{\aevt}(\yvar)-\sum_{\cevt\in\partition}\ind{\cevt}(\xvar)s_{\cevt}(\yvar)+\epsilon}{\aevt\in\partition}\\
&=\cset[\bigg]{\sum_{\cevt\in\partition\setminus\set{\aevt}}\ind{\cevt}(\xvar)[s_{\aevt}(\yvar)-s_{\cevt}(\yvar)]+\epsilon}{\aevt\in\partition}
\end{aligned}
\end{multline*}
must be desirable for our subject.
This leads to the following definitions.

\begin{definition}[S-irrelevance and S-independence for variables]\label{def:S-irrelevance:for:variables}
Consider two variables~\(\xvar\) and~\(\yvar\) and a coherent set of desirable option sets~\(\rejectset\).
We say that \(\xvar\) is \emph{S-irrelevant} to \(\yvar\) with respect to~\(\rejectset\) if
\begin{multline}\label{eq:simplifiedphysirr:for:variables}
\cset[\bigg]{\sum_{\cevt\in\partition\setminus\set{\aevt}}\ind{\cevt}(\xvar)[s_{\aevt}(\yvar)-s_{\cevt}(\yvar)]+\epsilon}{\aevt\in\partition}\in\rejectset\\
\text{ for all finite partitions \(\partition\) of~\(\xvalues\), all \(s_{\aevt}\in\sgbls[\yvalues]\) and all \(\epsilon>0\)}.
\end{multline}
We say that \(\xvar\) and \(\yvar\) are \emph{S-independent} with respect to~\(\rejectset\) if \(\xvar\) is S-irrelevant to~\(\yvar\) and \(\yvar\) is S-irrelevant to~\(\xvar\).
\end{definition}

Interestingly, the assessments for variables imply similar assessments for events.

\begin{proposition}\label{prop:variableimpliesevents}
Consider two variables~\(\xvar\) and~\(\yvar\) and a coherent set of desirable option sets~\(\rejectset\).
If \(\xvar\) is S-irrelevant to~\(\yvar\) with respect to~\(\rejectset\), then for all \(\aevt\subseteq\xvalues\) and \(\bevt\subseteq\yvalues\), the event~\(\xvar^{-1}(\aevt)\) is S-irrelevant to the event~\(\yvar^{-1}(\bevt)\) with respect to~\(\rejectset\). 
Similarly, if \(\xvar\) and \(\yvar\) are S-independent with respect to~\(\rejectset\), then for all \(\aevt\subseteq\xvalues\) and \(\bevt\subseteq\yvalues\), the events \(\xvar^{-1}(\aevt)\) and \(\yvar^{-1}(\bevt)\) are S-independent with respect to~\(\rejectset\). 
\end{proposition}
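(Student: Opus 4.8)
The plan is to obtain the event-level assessment of S-irrelevance as a single, well-chosen instance of the variable-level one. Fix $\aevt\subseteq\xvalues$ and $\bevt\subseteq\yvalues$, and abbreviate $E\coloneqq\xvar^{-1}(\aevt)\subseteq\pspace$ and $F\coloneqq\yvar^{-1}(\bevt)\subseteq\pspace$. Throughout I would lean on the bookkeeping identities $\ind{\cevt}(\xvar)=\ind{\xvar^{-1}(\cevt)}$ and $\xvar^{-1}(\co{\aevt})=\co{\xvar^{-1}(\aevt)}$ (and their analogues for $\yvar$), which give in particular $\ind{E}=\ind{\aevt}(\xvar)$, $\ind{\co{E}}=\ind{\co{\aevt}}(\xvar)$, $\ind{F}=\ind{\bevt}(\yvar)$ and $\ind{\co{F}}=\ind{\co{\bevt}}(\yvar)$. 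By Definition~\ref{def:S-irrelevance}, showing that $E$ is S-irrelevant to $F$ amounts to showing that $\set{\ind{\co{E}}\gbl+\epsilon,-\ind{E}\gbl+\epsilon}\in\rejectset$ for all $\gbl\in\gbls[F]$ and all $\epsilon>0$. Since any $\gbl\in\gbls[F]$ can be written as $\gbl=\lambda\ind{F}+\mu\ind{\co{F}}$ with $\lambda,\mu\in\reals$, the first observation is that $\gbl=t(\yvar)$ for the simple gamble $t\coloneqq\lambda\ind{\bevt}+\mu\ind{\co{\bevt}}\in\sgbls[\yvalues]$.

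Next I would dispose of the degenerate cases $\aevt=\emptyset$ and $\aevt=\xvalues$. In both, $E$ is $\emptyset$ or $\pspace$, so one of $\ind{E}$, $\ind{\co{E}}$ is the zero gamble and $\set{\ind{\co{E}}\gbl+\epsilon,-\ind{E}\gbl+\epsilon}\supseteq\set{\epsilon}$; since $\inf\epsilon=\epsilon>0$, axiom~\ref{ax:rejects:pos} gives $\set{\epsilon}\in\rejectset$, whence axiom~\ref{ax:rejects:mono} yields the desired membership. These cases hold for every coherent~$\rejectset$ and do not even use the S-irrelevance hypothesis.

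In the remaining case $\emptyset\neq\aevt\neq\xvalues$, the collection $\partition\coloneqq\set{\aevt,\co{\aevt}}$ is a genuine two-block finite partition of~$\xvalues$, and I would invoke the assumed S-irrelevance of~$\xvar$ to~$\yvar$ for this $\partition$, assigning to the block~$\aevt$ the simple gamble $t$ and to the block~$\co{\aevt}$ the simple gamble~$0$. Unwinding the option set in~\eqref{eq:simplifiedphysirr:for:variables} for this choice, the term indexed by~$\aevt$ reduces (since $\partition\setminus\set{\aevt}=\set{\co{\aevt}}$) to $\ind{\co{\aevt}}(\xvar)[t(\yvar)-0]+\epsilon=\ind{\co{E}}\gbl+\epsilon$, and the term indexed by~$\co{\aevt}$ reduces to $\ind{\aevt}(\xvar)[0-t(\yvar)]+\epsilon=-\ind{E}\gbl+\epsilon$. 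Hence the left-hand side of~\eqref{eq:simplifiedphysirr:for:variables} is exactly $\set{\ind{\co{E}}\gbl+\epsilon,-\ind{E}\gbl+\epsilon}$, which therefore lies in~$\rejectset$; as $\gbl\in\gbls[F]$ and $\epsilon>0$ were arbitrary, $E$ is S-irrelevant to~$F$, which is the first claim. The S-independence statement then follows at once by applying the first claim twice: once as stated, and once with the roles of~$\xvar$ and~$\yvar$ (and of~$\aevt$ and~$\bevt$) interchanged.

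I do not expect a genuine obstacle here: the whole argument is essentially a dictionary translation, and the only points demanding a little care are the two degenerate partitions treated above, and keeping the indicator identities $\ind{\cevt}(\xvar)=\ind{\xvar^{-1}(\cevt)}$ and $\xvar^{-1}(\co{\aevt})=\co{\xvar^{-1}(\aevt)}$ straight when identifying the produced option set with $\set{\ind{\co{E}}\gbl+\epsilon,-\ind{E}\gbl+\epsilon}$.
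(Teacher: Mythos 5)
Your proof is correct and follows essentially the same route as the paper's: write the gamble on \(\yvar^{-1}(\bevt)\) as \(t(\yvar)\) for a simple gamble \(t\) on \(\yvalues\), instantiate the variable-level assessment with the partition \(\set{\aevt,\co{\aevt}}\) and the choices \(s_{\aevt}=t\), \(s_{\co{\aevt}}=0\), and unwind the resulting option set. Your separate treatment of the degenerate cases \(\aevt=\emptyset\) and \(\aevt=\xvalues\), where \(\set{\aevt,\co{\aevt}}\) fails to be a genuine partition and one falls back on axioms \ref{ax:rejects:pos} and \ref{ax:rejects:mono}, is a small extra care that the paper leaves implicit.
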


\begin{proof}
It clearly suffices to give the proof for S-irrelevance.
So consider any \(\aevt\subseteq\xvalues\) and \(\bevt\subseteq\yvalues\) and any gamble \(\smash{\gbl\in\gbls[\yvar^{-1}(\bevt)](\pspace)}\).
Then \(\gbl\) is completely characterised by the real values \(\gbli[\bevt]\) and \(\gbli[\co{\bevt}]\) it assumes on \(\yvar^{-1}(\bevt)\) and \(\co{(\yvar^{-1}(\bevt))}=\yvar^{-1}(\co{\bevt})\), respectively: \(\gbl=\gbli[\bevt]\ind{\yvar^{-1}(\bevt)}+\gbli[\co{\bevt}]\ind{\yvar^{-1}(\co{\bevt})}\).
If we define the simple gamble \(\altgbl\coloneqq\gbli[\bevt]\ind{\bevt}+\gbli[\co{\bevt}]\ind{\co{\bevt}}\in\sgbls[\yvalues]\), then clearly \(\gbl=\altgbl(\yvar)\). 
We now consider the partition \(\partition\coloneqq\set{\aevt,\co{\aevt}}\) of \(\xvalues\) and the corresponding simple gambles \(s_{\aevt}\coloneqq\altgbl\) and \(s_{\co{\aevt}}\coloneqq0\) on \(\yvalues\).  
Then the S-irrelevance of \(\xvar\) to \(\yvar\) implies in particular that for all \(\epsilon>0\),
\begin{multline*}
\rejectset\ni\set[\big]{\ind{\co{\aevt}}(\xvar)[s_{\aevt}(\yvar)-s_{\co{\aevt}}(\yvar)]+\epsilon,
\ind{\aevt}(\xvar)[s_{\co{\aevt}}(\yvar)-s_{\aevt}(\yvar)]+\epsilon}\\
=\set[\big]{\ind{\co{\aevt}}(\xvar)g(\yvar)+\epsilon,-
\ind{\aevt}(\xvar)g(\yvar)+\epsilon}
=\set[\big]{\ind{\xvar^{-1}(\co{\aevt})}\gbl+\epsilon,
-\ind{\xvar^{-1}(\aevt)}\gbl+\epsilon}.
\end{multline*}
Since \(\xvar^{-1}(\co{\aevt})=\co{(\xvar^{-1}(\aevt))}\), this indeed tells us that \(\xvar^{-1}(\aevt)\) is S-irrelevant to~\(\yvar^{-1}(\bevt)\) with respect to~\(\rejectset\).
\end{proof}

A straightforward generalisation, {\itshape mutatis mutandis}, of Proposition~\ref{prop:S-irrelevance:in:terms_of:binary:variables} indicates that S-irrelevance and S-independence of variables with respect to Archimedean (and mixing) models are completely determined by the corresponding notions for their dominating binary models.

\begin{proposition}\label{prop:S-irrelevance:in:terms_of:binary:variables}
Consider two variables \(\xvar\) and \(\yvar\) and a set of desirable option sets~\(\rejectset\).
If \(\rejectset\) is Archimedean, then \(\xvar\) is S-irrelevant to~\(\yvar\) with respect to~\(\rejectset\) if and only if \(\xvar\) is S-irrelevant to~\(\yvar\) with respect to~\(\rejectset[\lowprev]\) for all \(\lowprev\in\cohlowprevs(\rejectset)\).
Similarly, if \(\rejectset\) is Archimedean and mixing, then \(\xvar\) is S-irrelevant to~\(\yvar\) with respect to~\(\rejectset\) if and only if \(\xvar\) is S-irrelevant to~\(\yvar\) with respect to~\(\rejectset[\linprev]\) for all \(\linprev\in\linprevs(\rejectset)\).
\end{proposition}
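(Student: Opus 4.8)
The plan is to follow, essentially verbatim, the proof of Proposition~\ref{prop:S-irrelevance:in:terms_of:binary}, since S-irrelevance for variables, as set out in Definition~\ref{def:S-irrelevance:for:variables}, is once more nothing but the assertion that a certain family of option sets belongs to~\(\rejectset\): one such option set for each finite partition~\(\partition\) of~\(\xvalues\), each assignment of simple gambles \(s_{\aevt}\in\sgbls[\yvalues]\) to the blocks \(\aevt\in\partition\), and each \(\epsilon>0\). No other feature of~\(\rejectset\) is invoked by the definition, and since \(\rejectset[\lowprev]\) and \(\rejectset[\linprev]\) are themselves coherent, Definition~\ref{def:S-irrelevance:for:variables} applies to them verbatim as well. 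The proof therefore reduces to a quantifier-interchange argument, once we recall how an Archimedean~\(\rejectset\) decomposes.

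First I would recall that, when \(\rejectset\) is Archimedean, Definition~\ref{def:archimedeanity} together with Equation~\eqref{eq:Kfromsetsoflower} gives \(\rejectset=\bigcap\cset{\rejectset[\lowprev]}{\lowprev\in\cohlowprevs(\rejectset)}\), so that membership of any option set in~\(\rejectset\) amounts to its membership in every \(\rejectset[\lowprev]\) with \(\lowprev\in\cohlowprevs(\rejectset)\). Applying this to each of the option sets in Equation~\eqref{eq:simplifiedphysirr:for:variables}, and then interchanging the (universal) quantifier over \(\lowprev\in\cohlowprevs(\rejectset)\) with the (universal) quantifiers over~\(\partition\), over the~\(s_{\aevt}\) and over~\(\epsilon\), I would conclude that \(\xvar\) is S-irrelevant to~\(\yvar\) with respect to~\(\rejectset\) if and only if \(\xvar\) is S-irrelevant to~\(\yvar\) with respect to~\(\rejectset[\lowprev]\) for every \(\lowprev\in\cohlowprevs(\rejectset)\), which settles the Archimedean case. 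For the mixing case, I would then invoke Theorem~\ref{thm:archimedean:and:mixing}, which yields \(\cohlowprevs(\rejectset)=\linprevs(\rejectset)\), so that the condition just obtained is literally the same as S-irrelevance of~\(\xvar\) to~\(\yvar\) with respect to~\(\rejectset[\linprev]\) for every \(\linprev\in\linprevs(\rejectset)\). The statements about S-independence then follow at once by applying the S-irrelevance equivalence to both \(\xvar\) to \(\yvar\) and \(\yvar\) to \(\xvar\) and taking the conjunction.

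I do not expect any genuine obstacle here: the argument is a faithful transcription of the event-level proof, and the only point deserving a moment's care is the observation that Definition~\ref{def:S-irrelevance:for:variables} is stated for an arbitrary coherent set of desirable option sets, and hence legitimately instantiates at the coherent models \(\rejectset[\lowprev]\) and \(\rejectset[\linprev]\); once that is noted, the quantifier interchange closes the proof. One could also remark, as the footnote accompanying Definition~\ref{def:S-irrelevance:for:variables} anticipates, that nothing in the argument actually uses the simplicity of the gambles~\(s_{\aevt}\), but this is not needed for the statement as formulated.
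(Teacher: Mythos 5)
Your proposal is correct and follows essentially the same route as the paper's own proof: unpack Archimedeanity as \(\rejectset=\bigcap\cset{\rejectset[\lowprev]}{\lowprev\in\cohlowprevs(\rejectset)}\), apply this to the option sets in Definition~\ref{def:S-irrelevance:for:variables}, interchange the universal quantifiers, and invoke Theorem~\ref{thm:archimedean:and:mixing} for the mixing case. The remark about S-independence is harmless but not part of the stated proposition.
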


\begin{proof}
If we combine Definitions~\ref{def:archimedeanity} and~\ref{def:S-irrelevance:for:variables}, we see that \(\xvar\) is S-irrelevant to~\(\yvar\) with respect to an Archimedean~\(\rejectset\) if and only if
\begin{multline*}
\cset[\bigg]{\sum_{\cevt\in\partition\setminus\set{\aevt}}\ind{\cevt}(\xvar)[s_{\aevt}(\yvar)-s_{\cevt}(\yvar)]+\epsilon}{\aevt\in\partition}\in\bigcap\cset{\rejectset[\lowprev]}{\lowprev\in\cohlowprevs(\rejectset)}\\
\text{ for all finite partitions \(\partition\) of~\(\xvalues\), all \(s_{\aevt}\in\sgbls[\yvalues]\) and all \(\epsilon>0\)}.
\end{multline*}
which proves the statement for Archimedeanity.
If we also impose mixingness on~\(\rejectset\), the second statement follows at once from the first and the fact that then, according to Theorem~\ref{thm:archimedean:and:mixing}, \(\cohlowprevs(\rejectset)=\linprevs(\rejectset)\).
\end{proof}

\subsection{S-irrelevance for variables with respect to lower prevision models}\label{sec:irrelevance:variables:lower:previsions}
Because of Proposition~\ref{prop:S-irrelevance:in:terms_of:binary:variables}, it will be useful in this section to consider and study S-irrelevance with respect to the binary model~\(\rejectset[\lowprev]\) associated with a coherent lower prevision~\(\lowprev\) on~\(\gbls\). 
Clearly, \(\xvar\) will be S-irrelevant to~\(\yvar\) with respect to~\(\rejectset[\lowprev]\) if and only if
\begin{multline}\label{eq:S-irrelevance:lowprev:for:variables}
\max\cset[\bigg]{\lowprev\group[\bigg]{\sum_{\cevt\in\partition\setminus\set{\aevt}}\ind{\cevt}(\xvar)[s_{\aevt}(\yvar)-s_{\cevt}(\yvar)]}}{\aevt\in\partition}\geq0\\
\text{ for all finite partitions \(\partition\) of~\(\xvalues\) and all \(s_{\aevt}\in\sgbls[\yvalues]\)}.
\end{multline}

As a first step, we recall from Proposition~\ref{prop:variableimpliesevents} that S-irrelevance for variables implies S-irrelevance for events.
This allows us to apply some of the results in Section~\ref{sec:S-irrelevance:for:events} and generalise them to the present context of variables. 
To get our feet wet, we look at Proposition~\ref{prop:loweronesided}, whose generalisation requires a notion of independent variables for linear previsions.

\begin{definition}[Independent variables with respect to a linear prevision]\label{def:independenceforvariables}
We call two variables \(\xvar\) and \(\yvar\) \emph{independent} with respect to a linear prevision~\(\linprev\) on \(\gbls\) if for all \(\aevt\subseteq\xvalues\) and \(\bevt\subseteq\yvalues\), the events \(\xvar^{-1}(\aevt)\) and \(\yvar^{-1}(\bevt)\) are independent with respect to~\(\linprev\).
\end{definition}
\noindent Its characterisation in terms of gambles on~\(\xvar\) and~\(\yvar\) follows a fairly standard argument based on the uniform density of the simple gambles for the set of all gambles. 
We include the simple proof for the sake of completeness.

\begin{proposition}\label{prop:independenceforvariables}
For any linear prevision \(\linprev\), two variables \(\xvar\) and \(\yvar\) are independent with respect to~\(\linprev\) if and only if
\begin{equation*}
\linprev(\gbl\/(\xvar)\altgbl(\yvar))=\linprev(\gbl(\xvar))\linprev(\altgbl(\yvar))
\text{ for all gambles \(\gbl\) on~\(\xvalues\) and all gambles \(\altgbl\) on~\(\yvalues\)}.
\end{equation*}
\end{proposition}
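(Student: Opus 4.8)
The plan is to prove the two implications separately; the ``if'' direction is immediate, and the ``only if'' direction reduces, via the uniform density of simple gambles, to a straightforward bilinearity computation followed by a limiting argument.

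For the ``if'' direction I would specialise the assumed factorisation to indicators: taking \(\gbl=\ind{\aevt}\) on~\(\xvalues\) and \(\altgbl=\ind{\bevt}\) on~\(\yvalues\) for arbitrary \(\aevt\subseteq\xvalues\) and \(\bevt\subseteq\yvalues\), and using \(\ind{\aevt}(\xvar)=\ind{\xvar^{-1}(\aevt)}\), \(\ind{\bevt}(\yvar)=\ind{\yvar^{-1}(\bevt)}\) and \(\ind{\aevt}(\xvar)\ind{\bevt}(\yvar)=\ind{\xvar^{-1}(\aevt)\cap\yvar^{-1}(\bevt)}\), the hypothesis becomes exactly \(\linprev(\xvar^{-1}(\aevt)\cap\yvar^{-1}(\bevt))=\linprev(\xvar^{-1}(\aevt))\linprev(\yvar^{-1}(\bevt))\), i.e.\ the independence of those two events with respect to~\(\linprev\). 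Since \(\aevt\) and \(\bevt\) are arbitrary, \(\xvar\) and \(\yvar\) are then independent by Definition~\ref{def:independenceforvariables}.

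For the ``only if'' direction, assume \(\xvar\) and \(\yvar\) are independent with respect to~\(\linprev\). First I would establish the factorisation for \emph{simple} gambles: writing \(\gbl=\sum_{i=1}^m a_i\ind{\aevt[i]}\) with \(\aevt[1],\dots,\aevt[m]\) a partition of~\(\xvalues\) and \(\altgbl=\sum_{j=1}^n b_j\ind{\bevt[j]}\) with \(\bevt[1],\dots,\bevt[n]\) a partition of~\(\yvalues\), one has \(\gbl(\xvar)\altgbl(\yvar)=\sum_{i,j}a_ib_j\ind{\xvar^{-1}(\aevt[i])\cap\yvar^{-1}(\bevt[j])}\), so the linearity of~\(\linprev\) [\ref{ax:linprev:homo} and~\ref{ax:linprev:additive}] together with the assumed independence of the events \(\xvar^{-1}(\aevt[i])\) and \(\yvar^{-1}(\bevt[j])\) gives \(\linprev(\gbl(\xvar)\altgbl(\yvar))=\sum_{i,j}a_ib_j\linprev(\xvar^{-1}(\aevt[i]))\linprev(\yvar^{-1}(\bevt[j]))\), which factors as \(\linprev(\gbl(\xvar))\linprev(\altgbl(\yvar))\) by linearity once more.

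To pass to arbitrary gambles \(\gbl\) on~\(\xvalues\) and \(\altgbl\) on~\(\yvalues\), I would pick simple \(\gbli[k]\to\gbl\) and \(\altgbli[k]\to\altgbl\) uniformly on~\(\xvalues\) and~\(\yvalues\) respectively; composition with \(\xvar\) and~\(\yvar\) preserves uniform convergence since \(\sup_\pspace\abs{\gbli[k](\xvar)-\gbl(\xvar)}\leq\sup_{\xvalues}\abs{\gbli[k]-\gbl}\), and, the approximating sequences being uniformly bounded, also \(\gbli[k](\xvar)\altgbli[k](\yvar)\to\gbl(\xvar)\altgbl(\yvar)\) uniformly on~\(\pspace\). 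Since linear previsions satisfy the uniform-convergence property~\ref{ax:lowprev:uniformconvergence}, letting \(k\to\infty\) in the identity \(\linprev(\gbli[k](\xvar)\altgbli[k](\yvar))=\linprev(\gbli[k](\xvar))\linprev(\altgbli[k](\yvar))\) established in the previous step yields the factorisation in full generality. There is no genuine obstacle here; the only point requiring a little care is the uniform convergence of the \emph{products} \(\gbli[k](\xvar)\altgbli[k](\yvar)\), for which the (automatic) uniform boundedness of the sequences \(\gbli[k]\) and \(\altgbli[k]\) is exactly what is needed.
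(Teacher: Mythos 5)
Your proof is correct and follows essentially the same route as the paper's: indicators for the ``if'' direction, and for the ``only if'' direction a bilinear expansion over partitions for simple gambles followed by a uniform-continuity limiting argument. If anything, you spell out the convergence of the products \(\gbli[k](\xvar)\altgbli[k](\yvar)\) more carefully than the paper does, which simply invokes uniform density of simple gambles and~\ref{ax:lowprev:uniformconvergence} without comment.
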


\begin{proof}
The sufficiency part of the proof is immediate: it suffices to let \(f\coloneqq\ind{\aevt}\) and \(g\coloneqq\ind{\bevt}\).
For the necessity part, we assume that \(\xvar\) and \(\yvar\) are independent with respect to~\(\linprev\) and consider any gamble \(\gbl\) on \(\xvalues\) and \(\altgbl\) on \(\yvalues\). 
We need to prove that \(\linprev(\gbl(\xvar)\altgbl(\yvar))=\linprev(\gbl(\xvar))\linprev(\altgbl(\yvar))\). 
It suffices to give the proof for simple---finite valued---gambles~\(\gbl\) and~\(\altgbl\), because any gamble is a uniform limit of simple gambles, and because coherent lower previsions, and hence also linear previsions, are guaranteed to be uniformly continuous [see~\ref{ax:lowprev:uniformconvergence}].
We may therefore assume that \(\gbl=\sum_{\aevt\in\partition[\xvalues]}\gbli[\aevt]\ind{\aevt}\) for some partition~\(\partition[\xvalues]\) of~\(\xvalues\) and some choice of the real numbers~\(\gbli[\aevt]\).
Similarly, we may assume that \(\altgbl=\sum_{\bevt\in\partition[\yvalues]}\altgbli[\bevt]\ind{\bevt}\) for some partition~\(\partition[\xvalues]\) of~\(\yvalues\) and some choice of the real numbers~\(\altgbli[\bevt]\).
But then
\begin{align*}
\linprev(\gbl(\xvar)\altgbl(\yvar))
&=\linprev\group[\bigg]{\sum_{\aevt\in\partition[\xvalues]}\sum_{\bevt\in\partition[\yvalues]}\gbli[\aevt]\altgbli[\bevt]\ind{\aevt}(\xvar)\ind{\bevt}(\yvar)}\\
&=\sum_{\aevt\in\partition[\xvalues]}\sum_{\bevt\in\partition[\yvalues]}\gbli[\aevt]\altgbli[\bevt]\linprev(\ind{\aevt}(\xvar)\ind{\bevt}(\yvar))\\
&=\sum_{\aevt\in\partition[\xvalues]}\sum_{\bevt\in\partition[\yvalues]}\gbli[\aevt]\altgbli[\bevt]\linprev(\ind{\aevt}(\xvar))\linprev(\ind{\bevt}(\yvar))\\
&=\sum_{\aevt\in\partition[\xvalues]}\gbli[\aevt]\linprev(\ind{\aevt}(\xvar))\sum_{\bevt\in\partition[\yvalues]}\altgbli[\bevt]\linprev(\ind{\bevt}(\yvar))\\
&=\linprev\group[\bigg]{\sum_{\aevt\in\partition[\xvalues]}\gbli[\aevt]\ind{\aevt}(\xvar)}
\linprev\group[\bigg]{\sum_{\bevt\in\partition[\yvalues]}\altgbli[\bevt]\ind{\bevt}(\yvar)}
=\linprev(\gbl(\xvar))\linprev(\altgbl(\yvar)).
\end{align*}
where the second and fifth equality follow from coherence [use~\ref{ax:linprev:homo} and~\ref{ax:linprev:additive}], and the crucial third equality follows from the assumption that \(\xvar\) and \(\yvar\) are independent with respect to~\(\linprev\).
\end{proof}

With independence out of the way, we can now address, as announced, the generalisation of Proposition~\ref{prop:loweronesided} from events to variables.

\begin{proposition}\label{prop:loweronesided:variables}
Consider any coherent lower prevision~\(\lowprev\) on \(\gbls\), and two variables \(\xvar\) and \(\yvar\). 
If for all \(\aevt\subseteq\xvalues\) and \(\bevt\subseteq\yvalues\), \(\xvar^{-1}(\aevt)\) is S-irrelevant to~\(\yvar^{-1}(\bevt)\) with respect to~\(\rejectset[\lowprev]\), then \(\xvar\) and \(\yvar\) are independent with respect to all \(\linprev\in\linprevs(\lowprev)\).
\end{proposition}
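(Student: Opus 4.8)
The plan is to reduce this statement about variables entirely to the event-level result Proposition~\ref{prop:loweronesided}, which has already been established. The bridge is the observation, recorded in Section~\ref{sec:S-irrelevance:for:variables}, that for any \(\aevt\subseteq\xvalues\) the map \(\ind{\xvar^{-1}(\aevt)}=\ind{\aevt}(\xvar)\) is an ordinary indicator on~\(\pspace\), so that \(\xvar^{-1}(\aevt)\) is a bona fide event on~\(\pspace\) to which the earlier results apply; and likewise for \(\yvar^{-1}(\bevt)\).

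First I would fix an arbitrary \(\linprev\in\linprevs(\lowprev)\) together with arbitrary subsets \(\aevt\subseteq\xvalues\) and \(\bevt\subseteq\yvalues\). By hypothesis \(\xvar^{-1}(\aevt)\) is S-irrelevant to~\(\yvar^{-1}(\bevt)\) with respect to~\(\rejectset[\lowprev]\), so Proposition~\ref{prop:loweronesided}, applied with its generic events taken to be \(\xvar^{-1}(\aevt)\) and \(\yvar^{-1}(\bevt)\), yields that these two events are independent with respect to~\(\linprev\), that is, \(\linprev(\xvar^{-1}(\aevt)\cap\yvar^{-1}(\bevt))=\linprev(\xvar^{-1}(\aevt))\linprev(\yvar^{-1}(\bevt))\).

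Since \(\aevt\subseteq\xvalues\) and \(\bevt\subseteq\yvalues\) were arbitrary, Definition~\ref{def:independenceforvariables} then tells us directly that \(\xvar\) and \(\yvar\) are independent with respect to~\(\linprev\); and since \(\linprev\) was an arbitrary element of~\(\linprevs(\lowprev)\), this holds for all such \(\linprev\), which is exactly the claim. There is essentially no obstacle here: all the real content sits in Proposition~\ref{prop:loweronesided} (and, through it, in the linear-prevision analysis of Theorem~\ref{theo:lineartwosided}), and the only point that needs a word of care is making the identification of \(\xvar^{-1}(\aevt)\) and \(\yvar^{-1}(\bevt)\) with genuine events on~\(\pspace\) explicit, so that Proposition~\ref{prop:loweronesided} is genuinely applicable.
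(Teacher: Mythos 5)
Your proposal is correct and follows essentially the same route as the paper: fix \(\linprev\in\linprevs(\lowprev)\) and \(\aevt\subseteq\xvalues\), \(\bevt\subseteq\yvalues\), apply Proposition~\ref{prop:loweronesided} to the events \(\xvar^{-1}(\aevt)\) and \(\yvar^{-1}(\bevt)\), and conclude via Definition~\ref{def:independenceforvariables}. The only cosmetic difference is that the paper closes by invoking Proposition~\ref{prop:independenceforvariables} (the gamble-level characterisation of independence), whereas you appeal directly to the event-level definition, which suffices for the statement as phrased.
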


\begin{proof}
Consider any \(\linprev\in\linprevs(\lowprev)\) and any \(\aevt\subseteq\xvalues\) and \(\bevt\subseteq\yvalues\). 
Then since \(\xvar^{-1}(\aevt)\) is S-irrelevant to~\(\yvar^{-1}(\bevt)\) with respect to~\(\rejectset[\lowprev]\), it follows from Proposition~\ref{prop:loweronesided} that \(\xvar^{-1}(\aevt)\) and \(\yvar^{-1}(\bevt)\) are independent with respect to~\(\linprev\).
Now use Proposition~\ref{prop:independenceforvariables}.
\end{proof}

To extend Proposition~\ref{prop:lowerprecise} to the context of variables, we first need to introduce concepts of triviality and precision for variables.

We begin with precision.
It makes perfect sense to call the lower prevision~\(\lowdis[\zvar]\) on the set \(\gbls(\zvalues)\) of all gambles on~\(\zvalues\), defined by
\begin{equation*}
\lowdis[\zvar](\altgbltoo)\coloneqq\lowprev(\altgbltoo\circ\zvar)\text{ for all gambles \(\altgbltoo\) on~\(\zvalues\)}
\end{equation*}
the (lower) \emph{distribution} of the variable \(\zvar\) with respect to the coherent lower prevision~\(\lowprev\).
We will say that a variable \(\zvar\colon\pspace\to\zvalues\) has a \emph{precise distribution} with respect to~\(\lowprev\) if the gamble \(\altgbltoo(\zvar)=\altgbltoo\circ\zvar\) on~\(\pspace\) has a precise prevision~\(\lowprev(\altgbltoo\circ\zvar)=\uppprev(\altgbltoo\circ\zvar)\eqqcolon\dis[\zvar](\altgbltoo)\) for all gambles \(\altgbltoo\) on~\(\zvalues\), or in other words if the distribution~\(\lowdis[\zvar]\) of~\(\zvar\) with respect to~\(\lowprev\) is a linear prevision, then denoted by~\(\dis[\zvar]\).

This notion of precision can also be expressed in terms of events.

\begin{proposition}\label{prop:precise:distribution}
A variable \(\zvar\colon\pspace\to\zvalues\) has a precise distribution with respect to a coherent lower prevision~\(\lowprev\) if and only if the events \(\zvar^{-1}(\aevt)\) have a precise probability with respect to~\(\lowprev\) for all \(\aevt\subseteq\zvalues\).
\end{proposition}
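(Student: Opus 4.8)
The plan is to prove the two implications separately, treating necessity as an immediate specialisation of the definition and sufficiency by a two-stage argument: first for simple gambles on~\(\zvalues\), using the envelope representation~\eqref{eq:lowerandupperenvelope}, and then for arbitrary gambles by a uniform-continuity limiting argument. For the \emph{only if} direction, I would apply the defining property of a precise distribution to the particular gambles \(\altgbltoo=\ind{\aevt}\) with \(\aevt\subseteq\zvalues\); since \(\ind{\aevt}\circ\zvar=\ind{\zvar^{-1}(\aevt)}\), the resulting equality \(\lowprev(\ind{\aevt}\circ\zvar)=\uppprev(\ind{\aevt}\circ\zvar)\) says exactly that \(\zvar^{-1}(\aevt)\) has a precise probability with respect to~\(\lowprev\), and nothing more is needed.

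For the \emph{if} direction, assume every \(\zvar^{-1}(\aevt)\) with \(\aevt\subseteq\zvalues\) has a precise probability, and fix a gamble \(\altgbltoo\) on~\(\zvalues\); the aim is to show \(\lowprev(\altgbltoo\circ\zvar)=\uppprev(\altgbltoo\circ\zvar)\). I would first dispose of the case where \(\altgbltoo\) is simple, say \(\altgbltoo=\sum_{k=1}^{n}a_k\ind{\aevt[k]}\) for some finite partition \(\set{\aevt[1],\dots,\aevt[n]}\) of~\(\zvalues\) and reals \(a_k\), so that \(\altgbltoo\circ\zvar=\sum_{k=1}^{n}a_k\ind{\zvar^{-1}(\aevt[k])}\). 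For every \(\linprev\in\linprevs(\lowprev)\), linearity gives \(\linprev(\altgbltoo\circ\zvar)=\sum_{k=1}^{n}a_k\linprev(\zvar^{-1}(\aevt[k]))\); and since, by Equation~\eqref{eq:lowerandupperenvelope}, the coinciding numbers \(\lowprev(\zvar^{-1}(\aevt[k]))\) and \(\uppprev(\zvar^{-1}(\aevt[k]))\) are respectively the minimum and the maximum of \(\linprev(\zvar^{-1}(\aevt[k]))\) over \(\linprev\in\linprevs(\lowprev)\), the value \(\linprev(\zvar^{-1}(\aevt[k]))\) does not depend on~\(\linprev\), for each~\(k\). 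Hence \(\linprev(\altgbltoo\circ\zvar)\) is constant over \(\linprevs(\lowprev)\), and applying~\eqref{eq:lowerandupperenvelope} once more yields \(\lowprev(\altgbltoo\circ\zvar)=\uppprev(\altgbltoo\circ\zvar)\). This is essentially a several-events refinement of the mechanism underlying Lemma~\ref{lem:Fpreciseimpliesgambleprecise}.

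Finally, I would lift this to an arbitrary bounded \(\altgbltoo\) by approximation: choose simple gambles \(\altgbltoo_m\) on~\(\zvalues\) with \(\sup\abs{\altgbltoo_m-\altgbltoo}\to0\), observe that \(\sup\abs{\altgbltoo_m\circ\zvar-\altgbltoo\circ\zvar}\leq\sup\abs{\altgbltoo_m-\altgbltoo}\to0\) since precomposition with~\(\zvar\) cannot increase the uniform distance between gambles, and then invoke property~\ref{ax:lowprev:uniformconvergence} (and, by conjugacy, its analogue for~\(\uppprev\)) to pass to the limit in the equalities \(\lowprev(\altgbltoo_m\circ\zvar)=\uppprev(\altgbltoo_m\circ\zvar)\) established in the previous step. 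This gives \(\lowprev(\altgbltoo\circ\zvar)=\uppprev(\altgbltoo\circ\zvar)\) for every gamble \(\altgbltoo\) on~\(\zvalues\), i.e.\ \(\zvar\) has a precise distribution with respect to~\(\lowprev\). The argument is otherwise routine; the only point that needs a little care is this last limiting step, which hinges on the uniform continuity of coherent lower previsions.
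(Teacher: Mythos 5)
Your proof is correct, and its overall decomposition is exactly the paper's: necessity by specialising to indicators, sufficiency by first handling simple gambles and then passing to arbitrary gambles via uniform approximation and \ref{ax:lowprev:uniformconvergence}. The only genuine divergence is in how the simple-gamble case is settled. You argue through the dual representation: by Equation~\eqref{eq:lowerandupperenvelope}, precision of each \(\zvar^{-1}(\aevt[k])\) forces \(\linprev(\zvar^{-1}(\aevt[k]))\) to be constant over \(\linprev\in\linprevs(\lowprev)\), hence so is \(\linprev(\altgbltoo\circ\zvar)\), and the envelope property closes the gap; this is indeed the same mechanism as Lemma~\ref{lem:Fpreciseimpliesgambleprecise} extended to a finite partition. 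The paper instead stays entirely on the primal side: after reducing without loss of generality to non-negative coefficients via constant additivity~\ref{ax:lowprev:constantadditive}, it sandwiches \(\lowprev(s\circ\zvar)\) between \(\sum_k s_k\lowprev(\zvar^{-1}(\aevt[k]))\) and \(\uppprev(s\circ\zvar)\) using super-additivity of \(\lowprev\) and sub-additivity of \(\uppprev\) [\ref{ax:lowprev:homo}, \ref{ax:lowprev:superadditive}, \ref{ax:lowprev:mixed:additivity}], concluding by \ref{ax:lowprev:infandsup}. Your route is arguably more transparent and needs no sign normalisation, but it leans on the non-trivial lower-envelope theorem; the paper's argument is self-contained at the level of the coherence axioms. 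Both are valid, and your limiting step (including the observation that precomposition with \(\zvar\) does not increase uniform distance, and the conjugate continuity of \(\uppprev\)) is handled with the right care.
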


\begin{proof}
Necessity is immediate, so we concentrate on sufficiency.
We assume that the events \(\zvar^{-1}(\aevt)\) have a precise probability for all \(\aevt\subseteq\zvalues\), and prove that the gambles \(\altgbltoo(\zvar)=\altgbltoo\circ\zvar\) have a precise prevision for all gambles \(\altgbltoo\) on~\(\zvalues\).
Since a coherent lower prevision is uniformly continuous [see~\ref{ax:lowprev:uniformconvergence}], and since all gambles are uniform limits of simple gambles, it suffices to give the proof for simple gambles \(s=\sum_{k=1}^ns_k\ind{\aevt[k]}\), where the \(\aevt[k]\) constitute a partition of~\(\zvalues\) and the \(s_k\in\reals\).
We may assume without loss of generality that \(s\) is non-negative, so all \(s_k\geq0\), due to the constant additivity [\ref{ax:lowprev:constantadditive}] of a coherent lower prevision.
Hence, indeed,
\begin{align*}
\lowprev(s\circ\zvar)
&=\lowprev\group[\bigg]{\sum_{k=1}^ns_k\ind{\zvar^{-1}(\aevt[k])}}
\geq\sum_{k=1}^ns_k\lowprev(\zvar^{-1}(\aevt[k]))\\
&=\sum_{k=1}^ns_k\uppprev(\zvar^{-1}(\aevt[k]))
\geq\uppprev\group[\bigg]{\sum_{k=1}^ns_k\ind{\zvar^{-1}(\aevt[k])}}
=\uppprev(s\circ\zvar),
\end{align*}
where the first inequality follows from the super-linearity of the coherent lower prevision~\(\lowprev\) [combine~\ref{ax:lowprev:superadditive} and~\ref{ax:lowprev:homo}], the second equality from the assumption, and the second inequality from the sub-linearity of the coherent upper prevision~\(\smash{\uppprev}\) [combine~\ref{ax:lowprev:mixed:additivity} and~\ref{ax:lowprev:homo}].
\end{proof}

Let us call a variable \(\zvar\colon\pspace\to\zvalues\) \emph{trivial} with respect to a coherent lower prevision~\(\lowprev\) if \(\zvar^{-1}(\cevt)\) is trivial with respect to~\(\lowprev\) for all subsets \(\cevt\subseteq\zvalues\), meaning that \(\smash{\uppdis[\zvar](\cevt)=0}\) or \(\smash{\uppdis[\zvar](\co{\cevt})=0}\) for all \(\cevt\subseteq\zvalues\).
The distribution~\(\lowdis[\zvar]\) is then clearly precise on all events, and therefore also a linear prevision~\(\dis[\zvar]\) on all gambles [use Proposition~\ref{prop:precise:distribution}].
It is the---degenerate---linear prevision given by
\begin{equation*}
\dis[\zvar](\altgbltoo)
=\sup_{\aevt\in\ultrafilter{\zvar}}\inf_{z\in\aevt}\altgbltoo(z)
=\inf_{\aevt\in\ultrafilter{\zvar}}\sup_{z\in\aevt}\altgbltoo(z)
\text{ for all gambles \(\altgbltoo\) on~\(\zvalues\)},
\end{equation*}
where the collection of \emph{practically certain events}
\begin{equation*}
\ultrafilter{\zvar}
\coloneqq\cset{\cevt\subseteq\zvalues}{\lowprev(\zvar^{-1}(\cevt))=1}
=\cset{\cevt\subseteq\zvalues}{\lowdis[\zvar](\cevt)=1}
\end{equation*}
is an ultrafilter of events on~\(\zvalues\); see for instance \cite[Section~5.5]{troffaes2013:lp} and \cite[Sections~2.9.8 and~3.2.6]{walley1991}.
If the ultrafilter \(\ultrafilter{\zvar}\) is \emph{fixed}, meaning that \(\bigcap\ultrafilter{\zvar}=\set{z_o}\) for some \(z_o\in\zvalues\), then \(\dis[\zvar](\altgbltoo)=\altgbltoo(z_o)\), so all probability mass of the precise distribution~\(\dis[\zvar]\) is concentrated in~\(z_o\).
The only other possibility is that the ultrafilter \(\ultrafilter{\zvar}\) is \emph{free}, meaning that \(\bigcap\ultrafilter{\zvar}=\emptyset\), and then typically all probability mass will lie infinitesimally close to some \(z_o\) in~\(\zvalues\), or to some `point on the boundary' of~\(\zvalues\).
In both cases, this represents a model for our subject's certainty that \(\zvar\) assumes a fixed value; see also the extensive discussion in Section~5.5.5 of \cite{troffaes2013:lp}.

Proposition~\ref{prop:lowerprecise} now generalises fairly easily from events to variables.

\begin{proposition}\label{prop:lowerprecise:variables}
Consider a coherent lower prevision~\(\lowprev\) and two variables \(\xvar\) and \(\yvar\). 
If for all \(\aevt\subseteq\xvalues\) and \(\bevt\subseteq\yvalues\), \(\xvar^{-1}(\aevt)\) is S-irrelevant to~\(\yvar^{-1}(\bevt)\) with respect to~\(\rejectset[\lowprev]\), then \(\xvar\) is trivial with respect to~\(\lowprev\) or \(\yvar\) has a precise distribution with respect to~\(\lowprev\).
\end{proposition}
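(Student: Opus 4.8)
The plan is to argue by contraposition on the disjunction in the conclusion: I assume that \(\yvar\) does \emph{not} have a precise distribution with respect to~\(\lowprev\), and I show that then \(\xvar\) must be trivial with respect to~\(\lowprev\). First I would invoke Proposition~\ref{prop:precise:distribution} in its contrapositive form: if \(\yvar\) fails to have a precise distribution with respect to~\(\lowprev\), then there is at least one event \(\bevt\subseteq\yvalues\) whose preimage fails to have a precise probability, i.e.\ \(\lowprev(\yvar^{-1}(\bevt))<\uppprev(\yvar^{-1}(\bevt))\). The key point is that this \(\bevt\) can be fixed once and for all, independently of what happens on the \(\xvar\)-side.

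Next I would fix an arbitrary \(\aevt\subseteq\xvalues\). By the hypothesis of the proposition, \(\xvar^{-1}(\aevt)\) is S-irrelevant to~\(\yvar^{-1}(\bevt)\) with respect to~\(\rejectset[\lowprev]\). Applying Proposition~\ref{prop:lowerprecise} to the two events \(\xvar^{-1}(\aevt)\) and \(\yvar^{-1}(\bevt)\) in~\(\pspace\), we get that either \(\xvar^{-1}(\aevt)\) is trivial with respect to~\(\lowprev\), or \(\lowprev(\yvar^{-1}(\bevt))=\uppprev(\yvar^{-1}(\bevt))\). The second alternative is ruled out by the choice of~\(\bevt\), so we are forced to conclude that \(\xvar^{-1}(\aevt)\) is trivial with respect to~\(\lowprev\).

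Since \(\aevt\subseteq\xvalues\) was arbitrary, this establishes that \(\xvar^{-1}(\aevt)\) is trivial with respect to~\(\lowprev\) for every \(\aevt\subseteq\xvalues\), which is precisely the definition of \(\xvar\) being trivial with respect to~\(\lowprev\). I do not expect any real obstacle here: all the substantive content is already contained in Propositions~\ref{prop:lowerprecise} and~\ref{prop:precise:distribution}, and the present statement is essentially an assembly of those two facts with the definitions of triviality and precision for variables. The only things to keep an eye on are that Proposition~\ref{prop:precise:distribution} is used in contrapositive form, and that the witnessing event \(\bevt\) is selected \emph{before} quantifying over \(\aevt\), so that a single \(\bevt\) works uniformly for all \(\aevt\subseteq\xvalues\).
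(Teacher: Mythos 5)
Your proof is correct and uses exactly the same ingredients as the paper's: Proposition~\ref{prop:precise:distribution} to reduce precision of the distribution of \(\yvar\) to precision of the probabilities of the events \(\yvar^{-1}(\bevt)\), and Proposition~\ref{prop:lowerprecise} applied to the event pair \(\xvar^{-1}(\aevt)\), \(\yvar^{-1}(\bevt)\). The only (immaterial) difference is which disjunct you negate: the paper assumes \(\xvar\) is non-trivial, fixes a single non-trivial witness \(\xvar^{-1}(\aevt)\), and lets \(\bevt\) range, whereas you assume \(\yvar\) is imprecise, fix a single imprecise witness \(\yvar^{-1}(\bevt)\), and let \(\aevt\) range.
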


\begin{proof}
Assume that \(\xvar\) is not trivial with respect to~\(\lowprev\). 
We then need to show that \(\yvar\) has a precise distribution with respect to~\(\lowprev\). 
Due to Proposition~\ref{prop:precise:distribution}, it suffices to consider any \(\bevt\subseteq\yvalues\) and prove that \(\smash{\lowprev(\yvar^{-1}(\bevt))=\uppprev(\yvar^{-1}(\bevt))}\).

Since \(\xvar\) is not trivial with respect to~\(\lowprev\), there is some \(\aevt\subseteq\xvalues\) such that the event \(\xvar^{-1}(\aevt)\) is not trivial with respect to~\(\lowprev\). 
Since \(\xvar\) is S-irrelevant to~\(\yvar\) with respect to~\(\rejectset[\lowprev]\), we also know that \(\xvar^{-1}(\aevt)\) is S-irrelevant to~\(\yvar^{-1}(\bevt)\) with respect to~\(\rejectset[\lowprev]\), by Proposition~\ref{prop:variableimpliesevents}. 
It therefore follows from Proposition~\ref{prop:lowerprecise} that \(\smash{\lowprev(\yvar^{-1}(\bevt))=\uppprev(\yvar^{-1}(\bevt))}\), as required.
\end{proof}

We showed in Proposition~\ref{prop:variableimpliesevents} that S-irrelevance for variables implies S-irrelevance for the corresponding families of events. 
It turns out that for sets of desirable option sets that are Archimedean, these notions are equivalent; see Theorem~\ref{theo:lower:equivalentdefinitions:K} further on. 
We start out by establishing this result for binary sets of desirable option sets of the form~\(\rejectset[\lowprev]\), using the results in Propositions~\ref{prop:variableimpliesevents},~\ref{prop:independenceforvariables},~\ref{prop:loweronesided:variables} and~\ref{prop:lowerprecise:variables}.

\begin{theorem}\label{theo:lower:equivalentdefinitions}
Consider a coherent lower prevision~\(\lowprev\) on \(\gbls\), and two variables~\(\xvar\) and~\(\yvar\). 
Then \(\xvar\) is S-irrelevant to~\(\yvar\) with respect to~\(\rejectset[\lowprev]\) if and only if, for all \(\aevt\subseteq\xvalues\) and \(\bevt\subseteq\yvalues\), \(\smash{\xvar^{-1}(\aevt)}\) is S-irrelevant to~\(\smash{\yvar^{-1}(\bevt)}\) with respect to~\(\rejectset[\lowprev]\).
\end{theorem}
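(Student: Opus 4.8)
The plan is to establish the two implications separately. The ``only if'' part is immediate: it is exactly Proposition~\ref{prop:variableimpliesevents} applied to the coherent set of desirable option sets~\(\rejectset[\lowprev]\).

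For the ``if'' part, assume that \(\xvar^{-1}(\aevt)\) is S-irrelevant to~\(\yvar^{-1}(\bevt)\) with respect to~\(\rejectset[\lowprev]\) for all \(\aevt\subseteq\xvalues\) and \(\bevt\subseteq\yvalues\). This is precisely the hypothesis of Propositions~\ref{prop:loweronesided:variables} and~\ref{prop:lowerprecise:variables}, which I would invoke at once: then \(\xvar\) and \(\yvar\) are independent with respect to every \(\linprev\in\linprevs(\lowprev)\), and moreover either \(\xvar\) is trivial with respect to~\(\lowprev\) or \(\yvar\) has a precise distribution with respect to~\(\lowprev\). It remains to verify the S-irrelevance criterion~\eqref{eq:S-irrelevance:lowprev:for:variables}, so I fix a finite partition~\(\partition\) of~\(\xvalues\) and simple gambles \(s_{\aevt}\in\sgbls[\yvalues]\) for \(\aevt\in\partition\), and write
\[
\altgbltooi[\aevt]\coloneqq\sum_{\cevt\in\partition\setminus\set{\aevt}}\ind{\cevt}(\xvar)[s_{\aevt}(\yvar)-s_{\cevt}(\yvar)]=s_{\aevt}(\yvar)-t,
\]
with \(t\coloneqq\sum_{\cevt\in\partition}\ind{\cevt}(\xvar)s_{\cevt}(\yvar)\); the goal is to show that \(\max_{\aevt\in\partition}\lowprev(\altgbltooi[\aevt])\geq0\), and I would do this by dealing with the two alternatives in turn.

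If \(\xvar\) is trivial with respect to~\(\lowprev\), then its distribution is the \(\set{0,1}\)-valued finitely additive probability~\(\dis[\xvar]\) [combine Proposition~\ref{prop:precise:distribution} with the discussion preceding it], so exactly one block \(\aevt^\ast\in\partition\) has \(\lowprev(\xvar^{-1}(\aevt^\ast))=\dis[\xvar](\aevt^\ast)=1\), whence \(\uppprev(\co{(\xvar^{-1}(\aevt^\ast))})=0\). Since every summand of~\(\altgbltooi[\aevt^\ast]\) carries a factor~\(\ind{\cevt}(\xvar)\) with \(\cevt\neq\aevt^\ast\), the gamble~\(\altgbltooi[\aevt^\ast]\) is zero on~\(\xvar^{-1}(\aevt^\ast)\), hence equals \(\altgbltooi[\aevt^\ast]\ind{\co{(\xvar^{-1}(\aevt^\ast))}}\), so Lemma~\ref{lem:upper:probability:zero} — applied with the zero-upper-probability event \(\co{(\xvar^{-1}(\aevt^\ast))}\) — gives \(\lowprev(\altgbltooi[\aevt^\ast])=\lowprev(0)=0\) [use~\ref{ax:lowprev:infandsup}], and the maximum is non-negative. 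If instead \(\yvar\) has a precise distribution, then \(\lowprev(s_{\cevt}(\yvar))=\uppprev(s_{\cevt}(\yvar))\eqqcolon v_{\cevt}\) for every \(\cevt\in\partition\); pick \(\aevt^\ast\in\partition\) with \(v_{\aevt^\ast}=\max_{\cevt\in\partition}v_{\cevt}\). For any \(\linprev\in\linprevs(\lowprev)\), dominance forces \(\linprev(s_{\cevt}(\yvar))=v_{\cevt}\), and the independence of~\(\xvar\) and~\(\yvar\) with respect to~\(\linprev\) [through Proposition~\ref{prop:independenceforvariables}] yields \(\linprev(t)=\sum_{\cevt\in\partition}\linprev(\xvar^{-1}(\cevt))v_{\cevt}\leq v_{\aevt^\ast}\), because the weights \(\linprev(\xvar^{-1}(\cevt))\) are non-negative and sum to one. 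Hence \(\linprev(\altgbltooi[\aevt^\ast])=v_{\aevt^\ast}-\linprev(t)\geq0\) for every \(\linprev\in\linprevs(\lowprev)\), so Equation~\eqref{eq:lowerandupperenvelope} gives \(\lowprev(\altgbltooi[\aevt^\ast])\geq0\) and again the maximum is non-negative.

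The main obstacle is recognising that the two cases genuinely cannot be merged. In the ``precise~\(\yvar\)'' argument the maximising block~\(\aevt^\ast\) does not depend on the choice of~\(\linprev\in\linprevs(\lowprev)\), which is exactly what lets the minimum over~\(\linprevs(\lowprev)\) pass through and keeps \(\lowprev(\altgbltooi[\aevt^\ast])\) non-negative; if \(\yvar\) were not precise the optimal block would vary with~\(\linprev\) and this max--min exchange would break down, so the triviality alternative supplied by Proposition~\ref{prop:lowerprecise:variables} is indispensable. The only remaining care is the bookkeeping in the trivial case: isolating the unique ``practically certain'' block of the partition and using Lemma~\ref{lem:upper:probability:zero} to discard the contributions of all the others.
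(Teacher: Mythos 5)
Your proof is correct and follows essentially the same route as the paper's: necessity via Proposition~\ref{prop:variableimpliesevents}, and sufficiency by splitting into the case where \(\xvar\) is trivial (disposed of with Lemma~\ref{lem:upper:probability:zero} applied to the negligible part of the partition) and the case where \(\yvar\) has a precise distribution (disposed of by factorising through the dominating linear previsions and bounding the convex combination \(\sum_{\cevt}\linprev(\xvar^{-1}(\cevt))\dis[\yvar](s_{\cevt})\) by its largest term). The only cosmetic differences are that you select the maximising block \(\aevt^\ast\) up front instead of bounding every block and then maximising, and that you apply Lemma~\ref{lem:upper:probability:zero} once to the complement of the practically certain block rather than term by term as the paper does via Lemma~\ref{lem:triviality:and:partitions}.
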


Besides on the mentioned propositions, our proof for this theorem also depends on two lemmas: Lemma~\ref{lem:upper:probability:zero} from before, and the following simple consequence of triviality for variables.

\begin{lemma}\label{lem:triviality:and:partitions}
Consider any variable \(\zvar\colon\pspace\to\zvalues\) and any finite partition~\(\partition\) of~\(\zvalues\).
Assume that \(\zvar\) is trivial with respect to a coherent lower prevision~\(\lowprev\).
Then there is some \(\aevt[o]\in\partition\) such that \(\smash{\uppprev(\zvar^{-1}(\aevt))=0}\) for all \(\aevt\in\partition\setminus\set{\aevt[o]}\). 
\end{lemma}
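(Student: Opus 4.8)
The plan is to exploit the defining feature of triviality for events: for every block $\aevt$ of the partition~$\partition$, the triviality of $\zvar^{-1}(\aevt)$ with respect to~$\lowprev$ means that $\uppprev(\zvar^{-1}(\aevt))=0$ or $\uppprev(\zvar^{-1}(\co{\aevt}))=0$, where I use that $\co{\zvar^{-1}(\aevt)}=\zvar^{-1}(\co{\aevt})$. The strategy is then to split the blocks of~$\partition$ into those $\aevt$ with $\uppprev(\zvar^{-1}(\aevt))=0$ and the remaining ones, and to show that at most one block can lie in the latter class.

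First I would show that \emph{at most one} block has strictly positive upper probability. Suppose, towards a contradiction, that there are two distinct blocks $\aevt[1]\neq\aevt[2]$ in~$\partition$ with $\uppprev(\zvar^{-1}(\aevt[1]))>0$ and $\uppprev(\zvar^{-1}(\aevt[2]))>0$. Triviality applied to $\aevt[1]$ then forces $\uppprev(\zvar^{-1}(\co{\aevt[1]}))=0$. But the blocks of a partition are pairwise disjoint, so $\aevt[2]\subseteq\co{\aevt[1]}$ and hence $\zvar^{-1}(\aevt[2])\subseteq\zvar^{-1}(\co{\aevt[1]})$; monotonicity of the upper prevision [\ref{ax:lowprev:monotonicity}] then gives $\uppprev(\zvar^{-1}(\aevt[2]))\leq\uppprev(\zvar^{-1}(\co{\aevt[1]}))=0$, a contradiction. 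Next — this part is not strictly needed for the stated claim, but it makes the picture cleaner and matches the ultrafilter intuition recalled just before the lemma — I would observe that \emph{at least one} block has strictly positive upper probability: if $\uppprev(\zvar^{-1}(\aevt))=0$ for every $\aevt\in\partition$, then, since the events $\zvar^{-1}(\aevt)$, $\aevt\in\partition$, form a finite partition of~$\pspace$, we have $\ind{\pspace}=\sum_{\aevt\in\partition}\ind{\zvar^{-1}(\aevt)}$, so iterating subadditivity of the upper prevision [\ref{ax:lowprev:mixed:additivity}] yields $1=\uppprev(\ind{\pspace})\leq\sum_{\aevt\in\partition}\uppprev(\zvar^{-1}(\aevt))=0$, where $\uppprev(\ind{\pspace})=1$ follows from \ref{ax:lowprev:infandsup}; this is absurd. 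Combining the two observations, exactly one block $\aevt[o]\in\partition$ has positive upper probability, and hence $\uppprev(\zvar^{-1}(\aevt))=0$ for all $\aevt\in\partition\setminus\set{\aevt[o]}$, which is the assertion of the lemma. (Alternatively, one could invoke directly that $\ultrafilter{\zvar}$ is an ultrafilter of events on~$\zvalues$, since an ultrafilter contains exactly one block of any finite partition; the argument above is merely an explicit unpacking of this fact.)

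There is no genuine obstacle here; the only point requiring a little care is to make sure that the upper-prevision facts used — monotonicity, finite subadditivity, and $\uppprev(\ind{\pspace})=1$ — are indeed among the coherence properties listed earlier, which they are ([\ref{ax:lowprev:infandsup}], [\ref{ax:lowprev:monotonicity}], [\ref{ax:lowprev:mixed:additivity}]).
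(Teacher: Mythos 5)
Your proof is correct and follows essentially the same route as the paper's: triviality forces the complement of any block with strictly positive upper probability to have zero upper probability, and monotonicity [\ref{ax:lowprev:monotonicity}] then kills all the other blocks, which are subsets of that complement. The extra observation that at least one block must have positive upper probability is correct but, as you note, not needed --- the paper simply treats the all-zero case as trivially done.
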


\begin{proof}
If \(\uppprev(\zvar^{-1}(\aevt))=0\) for all \(\aevt\in\partition\), then we are done.
Without loss of generality, we may therefore assume that there is at least one \(\aevt[o]\in\partition\) such that \(\smash{\uppprev(\zvar^{-1}(\aevt[o]))}>0\). 
Hence, since the triviality of~\(\zvar\) implies the triviality of~\(\aevt[o]\), it must be that \(\uppprev(\zvar^{-1}(\co{\aevt[o]}))=0\). 
Consider now any \(\aevt\in\partition\setminus\set{\aevt[o]}\). 
Since \(\aevt\) and \(\aevt[o]\) are disjoint, we have that \(\aevt\subseteq\co{\aevt[o]}\). 
It therefore follows from coherence [use~\ref{ax:lowprev:infandsup} and~\ref{ax:lowprev:monotonicity}] that \(\smash{0\leq\uppprev(\zvar^{-1}(\aevt))\leq\uppprev(\zvar^{-1}(\co{\aevt[o]}))=0}\), so \(\smash{\uppprev(\zvar^{-1}(\aevt))=0}\).
\end{proof}

\begin{proof}\hspace{-1pt}{\bf{\itshape{of Theorem~\ref{theo:lower:equivalentdefinitions}}}}
Necessity is immediate from Proposition~\ref{prop:variableimpliesevents}, so it remains to prove sufficiency.
So let us assume that \(\xvar^{-1}(\aevt)\) is S-irrelevant to~\(\yvar^{-1}(\bevt)\) with respect to~\(\rejectset[\lowprev]\), for all \(\aevt\subseteq\xvalues\) and \(\bevt\subseteq\yvalues\). 
We need to prove that \(\xvar\) is S-irrelevant to~\(\yvar\) with respect to~\(\rejectset[\lowprev]\). 
We will do this using the criterion~\eqref{eq:S-irrelevance:lowprev:for:variables}, and consider, to this end, any partition~\(\partition\) of~\(\xvalues\) and any choice of simple gambles \(s_{\aevt}\in\sgbls[\yvalues]\) for \(\aevt\in\partition\). 
There are now two possible cases.

The first case we consider is that \(\xvar\) is trivial with respect to~\(\lowprev\). 
We then infer from Lemma~\ref{lem:triviality:and:partitions} that there is some \(\aevt[o]\in\partition\) such that \(\smash{\uppprev(\xvar^{-1}(\aevt))=0}\) for all \(\aevt\in\partition\setminus\set{\aevt[o]}\). 
Repeated invocation of Lemma~\ref{lem:upper:probability:zero} then guarantees that 
\begin{equation*}
\lowprev\group[\bigg]{\sum_{\cevt\in\partition\setminus\set{\aevt[o]}}\ind{\cevt}(\xvar)[s_{\aevt[o]}(\yvar)-s_{\cevt}(\yvar)]}=0,
\end{equation*}
so this case is dealt with.

Next, we consider the case where \(\xvar\) is not trivial with respect to~\(\lowprev\). 
In that case, it follows from the assumption and Proposition~\ref{prop:lowerprecise:variables} that \(\yvar\) has a precise distribution \(\dis[\yvar]\) with respect to~\(\lowprev\). 
For any \(\aevt\in\partition\) and any \(\altlinprev\in\linprevs(\lowprev)\), we then get that
\begin{align*}
\altlinprev\group[\bigg]{\sum_{\cevt\in\partition\setminus\set{\aevt}}\ind{\cevt}(\xvar)[s_{\aevt}(\yvar)-s_{\cevt}(\yvar)]}
&=\altlinprev\group[\bigg]{s_{\aevt}(\yvar)-\sum_{\cevt\in\partition}\ind{\cevt}(\xvar)s_{\cevt}(\yvar)}\\
&=\altlinprev\group{s_{\aevt}(\yvar)}-\sum_{\cevt\in\partition}\altlinprev\group[\big]{\ind{\cevt}(\xvar)s_{\cevt}(\yvar)}\\
&=\altlinprev\group{s_{\aevt}(\yvar)}-\sum_{\cevt\in\partition}\altlinprev\group[\big]{\ind{\cevt}(\xvar)}\altlinprev\group[\big]{s_{\cevt}(\yvar)}\\
&=\dis[\yvar](s_{\aevt})-\sum_{\cevt\in\partition}\altlinprev(\ind{\cevt}(\xvar))\dis[\yvar](s_{\cevt}),
\end{align*}
where the crucial third equality follows from Propositions~\ref{prop:loweronesided:variables} and~\ref{prop:independenceforvariables}.
Hence, for any \(\aevt\in\partition\):
\begin{align*}
\lowprev\group[\bigg]{\sum_{\cevt\in\partition\setminus\set{\aevt}}\ind{\cevt}(\xvar)[s_{\aevt}(\yvar)-s_{\cevt}(\yvar)]}
&=\min_{\altlinprev\in\linprevs(\lowprev)}
\group[\bigg]{\dis[\yvar](s_{\aevt})-\sum_{\cevt\in\partition}\altlinprev(\ind{\cevt}(\xvar))\dis[\yvar](s_{\cevt})}\\
&=\dis[\yvar](s_{\aevt})-\max_{\altlinprev\in\linprevs(\lowprev)}\sum_{\cevt\in\partition}\altlinprev(\ind{\cevt}(\xvar))\dis[\yvar](s_{\cevt})\\
&\geq\dis[\yvar](s_{\aevt})-\max_{\cevt\in\partition}\dis[\yvar](s_{\cevt}),
\end{align*}
where the inequality holds because \(\sum_{\cevt\in\partition}\altlinprev(\ind{\cevt}(\xvar))\dis[\yvar](s_{\cevt})\) is a convex combination of the terms \(\dis[\yvar](s_{\cevt})\), \(\cevt\in\partition\), and is therefore dominated by their maximum.
This tells us that
\begin{equation*}
\max_{\aevt\in\partition}\lowprev\group[\bigg]{\sum_{\cevt\in\partition\setminus\set{\aevt}}\ind{\cevt}(\xvar)[s_{\aevt}(\yvar)-s_{\cevt}(\yvar)]}
\geq\max_{\aevt\in\partition}\dis[\yvar](s_{\aevt})-\max_{\cevt\in\partition}\dis[\yvar](s_{\cevt})
=0,
\end{equation*}
as required.
\end{proof}

Theorem~\ref{theo:lower:equivalentdefinitions} generalises easily to general Archimedean sets of desirable option sets because for those, S-irrelevance can be expressed in terms of the representing lower previsions; see Propositions~\ref{prop:S-irrelevance:in:terms_of:binary} and~\ref{prop:S-irrelevance:in:terms_of:binary:variables}. 
This yields the following simple characterisation of S-irrelevance for variables in terms of S-irrelevance for events. 
It provides an {\itshape ex post} justification for our having focused on the latter first, and for having paid so much attention to it in Section~\ref{sec:S-irrelevance:for:events}.

\begin{theorem}\label{theo:lower:equivalentdefinitions:K}
Let \(\rejectset\) be an Archimedean set of desirable option sets and consider two variables \(\xvar\) and \(\yvar\). 
Then \(\xvar\) is S-irrelevant to~\(\yvar\) with respect to~\(\rejectset\) if and only if, for all \(\aevt\subseteq\xvalues\) and \(\bevt\subseteq\yvalues\), \(\xvar^{-1}(\aevt)\) is S-irrelevant to~\(\yvar^{-1}(\bevt)\) with respect to~\(\rejectset\).
\end{theorem}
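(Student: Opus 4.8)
The plan is to bootstrap the claim from the binary version already proved in Theorem~\ref{theo:lower:equivalentdefinitions}, using the fact that for Archimedean sets of desirable option sets, both S-irrelevance for variables and S-irrelevance for events are controlled by the dominating binary lower-prevision models. Necessity is immediate, since it is precisely the content of Proposition~\ref{prop:variableimpliesevents}, so all the work is in sufficiency.

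For sufficiency, I would argue as follows. Assume that \(\xvar^{-1}(\aevt)\) is S-irrelevant to \(\yvar^{-1}(\bevt)\) with respect to \(\rejectset\) for all \(\aevt\subseteq\xvalues\) and \(\bevt\subseteq\yvalues\). Fix an arbitrary \(\lowprev\in\cohlowprevs(\rejectset)\). Since \(\rejectset\) is Archimedean, Proposition~\ref{prop:S-irrelevance:in:terms_of:binary} applied to the events \(\xvar^{-1}(\aevt)\) and \(\yvar^{-1}(\bevt)\) tells us that our assumption implies, in particular, that \(\xvar^{-1}(\aevt)\) is S-irrelevant to \(\yvar^{-1}(\bevt)\) with respect to \(\rejectset[\lowprev]\), for all \(\aevt\subseteq\xvalues\) and \(\bevt\subseteq\yvalues\). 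Theorem~\ref{theo:lower:equivalentdefinitions} then yields that \(\xvar\) is S-irrelevant to \(\yvar\) with respect to \(\rejectset[\lowprev]\).

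Since \(\lowprev\in\cohlowprevs(\rejectset)\) was arbitrary, we conclude that \(\xvar\) is S-irrelevant to \(\yvar\) with respect to \(\rejectset[\lowprev]\) for every \(\lowprev\in\cohlowprevs(\rejectset)\), which---because \(\rejectset\) is Archimedean---is equivalent, by Proposition~\ref{prop:S-irrelevance:in:terms_of:binary:variables}, to \(\xvar\) being S-irrelevant to \(\yvar\) with respect to \(\rejectset\). This completes the sufficiency direction, and hence the proof. The argument is essentially a matter of correctly threading three earlier results together; the only point requiring a moment's care is the interchange of the universal quantifier over \(\lowprev\in\cohlowprevs(\rejectset)\) with the universal quantifier over the pairs of events \(\aevt,\bevt\), which is harmless. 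I do not expect a genuine obstacle here, since all the substantive work---in particular the delicate handling of triviality and of precise distributions---was already carried out in the proof of Theorem~\ref{theo:lower:equivalentdefinitions}.
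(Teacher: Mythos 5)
Your proof is correct and follows exactly the route the paper takes: the paper's own proof is simply ``Immediate from Theorem~\ref{theo:lower:equivalentdefinitions} and Propositions~\ref{prop:S-irrelevance:in:terms_of:binary} and~\ref{prop:S-irrelevance:in:terms_of:binary:variables}'', and your argument spells out precisely that reduction, with the harmless interchange of the two universal quantifiers handled correctly. Citing Proposition~\ref{prop:variableimpliesevents} for necessity is a perfectly valid (and slightly more direct) shortcut, since it holds for any coherent set of desirable option sets.
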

\begin{proof}
Immediate from Theorem~\ref{theo:lower:equivalentdefinitions} and
Propositions~\ref{prop:S-irrelevance:in:terms_of:binary} and~\ref{prop:S-irrelevance:in:terms_of:binary:variables}.
\end{proof}

The characterisation of S-irrelevance for variables in terms of S-irrelevance for events in Theorem~\ref{theo:lower:equivalentdefinitions} also leads to a fairly easily proven generalisation of Proposition~\ref{prop:triviality:implies:S:independence}, describing the implications of triviality.

\begin{proposition}\label{prop:triviality:implies:S:independence:variables}
If the variables \(\xvar\) or \(\yvar\) are trivial with respect to a coherent lower prevision~\(\lowprev\), then \(\xvar\) and \(\yvar\) are S-independent with respect to~\(\rejectset[\lowprev]\).
\end{proposition}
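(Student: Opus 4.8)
The plan is to deduce this directly from the event-level result Proposition~\ref{prop:triviality:implies:S:independence}, by routing it through the reduction of S-irrelevance for variables to S-irrelevance for the induced families of events that Theorem~\ref{theo:lower:equivalentdefinitions} provides. Since S-independence is symmetric in its two arguments, I would first treat, without loss of generality, the case where \(\xvar\) is trivial with respect to~\(\lowprev\); the case where \(\yvar\) is trivial follows by the identical argument with the roles of \(\xvar\) and \(\yvar\) interchanged.

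Assuming \(\xvar\) is trivial, the definition of triviality for variables immediately gives that \(\xvar^{-1}(\aevt)\) is trivial with respect to~\(\lowprev\) for every \(\aevt\subseteq\xvalues\). Fixing arbitrary \(\aevt\subseteq\xvalues\) and \(\bevt\subseteq\yvalues\), the triviality of \(\xvar^{-1}(\aevt)\) lets me invoke Proposition~\ref{prop:triviality:implies:S:independence} to conclude that \(\xvar^{-1}(\aevt)\) and \(\yvar^{-1}(\bevt)\) are S-independent with respect to~\(\rejectset[\lowprev]\); in particular, \(\xvar^{-1}(\aevt)\) is S-irrelevant to~\(\yvar^{-1}(\bevt)\) and \(\yvar^{-1}(\bevt)\) is S-irrelevant to~\(\xvar^{-1}(\aevt)\), both with respect to~\(\rejectset[\lowprev]\). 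As these hold for all \(\aevt\subseteq\xvalues\) and \(\bevt\subseteq\yvalues\), I would then apply Theorem~\ref{theo:lower:equivalentdefinitions} to the pair \((\xvar,\yvar)\) to get that \(\xvar\) is S-irrelevant to~\(\yvar\), and to the pair \((\yvar,\xvar)\) to get that \(\yvar\) is S-irrelevant to~\(\xvar\), with respect to~\(\rejectset[\lowprev]\). By Definition~\ref{def:S-irrelevance:for:variables}, this is exactly the statement that \(\xvar\) and \(\yvar\) are S-independent with respect to~\(\rejectset[\lowprev]\).

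Since every ingredient is already in place, I do not expect a genuine obstacle here. The only point that needs a little care is to notice that Proposition~\ref{prop:triviality:implies:S:independence} requires only \emph{one} of the two events to be trivial, so that the triviality of \(\xvar\) alone—via the triviality of each \(\xvar^{-1}(\aevt)\)—suffices to trigger it for every choice of~\(\bevt\), and hence that the symmetric case of \(\yvar\) trivial needs no separate work beyond the remark about symmetry.
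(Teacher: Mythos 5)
Your proof is correct and follows essentially the same route as the paper's: reduce triviality of the variable to triviality of the induced events, apply Proposition~\ref{prop:triviality:implies:S:independence} at the event level, and lift back to variables via Theorem~\ref{theo:lower:equivalentdefinitions}. Your explicit double application of Theorem~\ref{theo:lower:equivalentdefinitions} (once per direction of S-irrelevance) is a slightly more careful spelling-out of a step the paper leaves implicit, but the argument is the same.
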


\begin{proof}
Assume that \(\xvar\) or \(\yvar\) is trivial with respect to~\(\lowprev\).
Consider any \(\aevt\subseteq\xvalues\) and \(\bevt\subseteq\yvalues\). 
If the variable \(\xvar\) is trivial with respect to~\(\lowprev\), then the event \(\xvar^{-1}(\aevt)\) is trivial with respect to~\(\lowprev\).
Similarly, if \(\yvar\) is trivial with respect to~\(\lowprev\), then \(\yvar^{-1}(\bevt)\) is as well. 
In both cases, it follows from Proposition~\ref{prop:triviality:implies:S:independence} that \(\xvar^{-1}(\aevt)\) and \(\yvar^{-1}(\bevt)\) are S-independent with respect to~\(\rejectset[\lowprev]\). 
Since this is true for every \(\aevt\subseteq\xvalues\) and \(\bevt\subseteq\yvalues\), it follows from Theorem~\ref{theo:lower:equivalentdefinitions} that \(\xvar\) and \(\yvar\) are S-independent with respect to~\(\rejectset[\lowprev]\).
\end{proof}

All this preparatory work is about to bear fruit in the final two theorems of this section.
The following characterisation provides better insight into what---and how surprisingly strong---the implications of an S-irrelevance assessment really are.

\begin{theorem}\label{theo:lowertwosided:variables}
Consider a coherent lower prevision~\(\lowprev\) on~\(\gbls\) and two variables \(\xvar\) and \(\yvar\).
Then \(\xvar\) is S-irrelevant to~\(\yvar\) with respect to~\(\rejectset[\lowprev]\) if and only if \(\xvar\) is trivial with respect to~\(\lowprev\), or if \(\yvar\) has a precise distribution \(\dis[\yvar]\) with respect to~\(\lowprev\) and
\begin{equation}
\label{eq:lower:twosided:irrelevance:variables}
\lowuppprev(\gbl(\xvar)\altgbl(\yvar))=\lowuppprev(\gbl(\xvar))\odot\dis[\yvar](\altgbl)
\text{ for all gambles \(\gbl\) on \(\xvalues\) and \(\altgbl\) on \(\yvalues\).}%
\footnote{Since `factorisation' of this kind for multiple variables leads to a version of the law of large numbers \cite{cooman2004a,cooman2011a}, it doesn't seem too farfetched to envision extensions of S-irrelevance and S-independence from two to multiple variables that allow us to prove similar laws of large numbers.}
\end{equation}
Similarly, \(\xvar\) and \(\yvar\) are S-independent with respect to~\(\rejectset[\lowprev]\) if and only if \(\xvar\) or \(\yvar\) are trivial with respect to~\(\lowprev\), or if they both have precise distributions \(\dis[\xvar]\) and \(\dis[\yvar]\) with respect to~\(\lowprev\) and
\begin{equation}\label{eq:lower:twosided:independence:variables}
\lowuppprev(\gbl(\xvar)\altgbl(\yvar))=\dis[\xvar](\gbl)\dis[\yvar](\altgbl)
\text{ for all gambles \(\gbl\) on \(\xvalues\) and \(\altgbl\) on \(\yvalues\).}
\end{equation}
\end{theorem}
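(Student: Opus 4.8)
The plan is to follow the template of Theorem~\ref{theo:lowertwosided} for events, now fed with the variable-level machinery; since S-independence is S-irrelevance in both directions and Proposition~\ref{prop:triviality:implies:S:independence:variables} already disposes of the trivial cases, I would first settle the characterisation of ``$\xvar$ is S-irrelevant to $\yvar$'' and then read off the S-independence statement at the end.

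\emph{Necessity of the S-irrelevance characterisation.} Assuming $\xvar$ is S-irrelevant to $\yvar$ with respect to $\rejectset[\lowprev]$, Proposition~\ref{prop:variableimpliesevents} first turns this into S-irrelevance of $\xvar^{-1}(\aevt)$ to $\yvar^{-1}(\bevt)$ with respect to $\rejectset[\lowprev]$ for all $\aevt\subseteq\xvalues$ and $\bevt\subseteq\yvalues$. Proposition~\ref{prop:lowerprecise:variables} then leaves two possibilities: $\xvar$ is trivial with respect to $\lowprev$ (the first alternative of the statement), or $\yvar$ has a precise distribution $\dis[\yvar]$. In the latter case, Proposition~\ref{prop:loweronesided:variables} gives independence of $\xvar$ and $\yvar$ with respect to every $\altlinprev\in\linprevs(\lowprev)$, so Proposition~\ref{prop:independenceforvariables} yields $\altlinprev(\gbl(\xvar)\altgbl(\yvar))=\altlinprev(\gbl(\xvar))\altlinprev(\altgbl(\yvar))=\altlinprev(\gbl(\xvar))\dis[\yvar](\altgbl)$ for all gambles $\gbl$ on $\xvalues$ and $\altgbl$ on $\yvalues$, the last equality holding because precision forces $\altlinprev(\altgbl(\yvar))=\dis[\yvar](\altgbl)$. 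Taking infima and suprema over $\altlinprev\in\linprevs(\lowprev)$ via Equation~\eqref{eq:lowerandupperenvelope}, with a case split on the sign of $\dis[\yvar](\altgbl)$, should then give exactly $\lowuppprev(\gbl(\xvar)\altgbl(\yvar))=\lowuppprev(\gbl(\xvar))\odot\dis[\yvar](\altgbl)$, i.e.\ Equation~\eqref{eq:lower:twosided:irrelevance:variables}. This part follows the necessity argument of Theorem~\ref{theo:lowertwosided} and should be routine.

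\emph{Sufficiency of the S-irrelevance characterisation.} If $\xvar$ is trivial, Proposition~\ref{prop:triviality:implies:S:independence:variables} already delivers S-irrelevance, so the work is in the case where $\yvar$ has a precise distribution $\dis[\yvar]$ and Equation~\eqref{eq:lower:twosided:irrelevance:variables} holds; there I would verify criterion~\eqref{eq:S-irrelevance:lowprev:for:variables}. Fixing a finite partition $\partition$ of $\xvalues$ and simple gambles $s_{\aevt}\in\sgbls[\yvalues]$, write $h\coloneqq\sum_{\cevt\in\partition}\ind{\cevt}(\xvar)s_{\cevt}(\yvar)$. Using $\sum_{\cevt\in\partition}\ind{\cevt}(\xvar)=1$, the option $\sum_{\cevt\in\partition\setminus\set{\aevt}}\ind{\cevt}(\xvar)[s_{\aevt}(\yvar)-s_{\cevt}(\yvar)]$ equals $s_{\aevt}(\yvar)-h$, and since $s_{\aevt}(\yvar)$ has precise prevision $\dis[\yvar](s_{\aevt})$ this gives $\lowprev(s_{\aevt}(\yvar)-h)=\dis[\yvar](s_{\aevt})-\uppprev(h)$. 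The crux is the bound $\uppprev(h)\leq m$ with $m\coloneqq\max_{\cevt\in\partition}\dis[\yvar](s_{\cevt})$: subtracting $m$ and using $\sum_{\cevt\in\partition}\ind{\cevt}(\xvar)=1$ again rewrites $h-m=\sum_{\cevt\in\partition}\ind{\cevt}(\xvar)[s_{\cevt}(\yvar)-m]$; subadditivity of $\uppprev$ [\ref{ax:lowprev:mixed:additivity}] reduces matters to bounding each $\uppprev(\ind{\cevt}(\xvar)[s_{\cevt}(\yvar)-m])$; and Equation~\eqref{eq:lower:twosided:irrelevance:variables} applied with $\gbl=\ind{\cevt}$ and $\altgbl=s_{\cevt}-m$ identifies this with the top endpoint of $\lowuppprev(\ind{\cevt}(\xvar))\odot\dis[\yvar](s_{\cevt}-m)$, namely $\lowprev(\ind{\cevt}(\xvar))\dis[\yvar](s_{\cevt}-m)$, which is $\leq0$ because $\dis[\yvar](s_{\cevt}-m)\leq0$ while $\lowprev(\ind{\cevt}(\xvar))\geq0$. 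Hence $\uppprev(h-m)\leq0$, so $\uppprev(h)\leq m$, and therefore $\max_{\aevt\in\partition}\lowprev(s_{\aevt}(\yvar)-h)\geq\max_{\aevt\in\partition}\dis[\yvar](s_{\aevt})-m=0$, which is criterion~\eqref{eq:S-irrelevance:lowprev:for:variables}. I expect this step — rewriting the composite option $h$ as a gamble with a non-positive upper prevision by pivoting on $m$ and exploiting that the interval product with a non-positive scalar swaps the endpoints — to be the only genuinely delicate point; everything around it is bookkeeping.

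\emph{From S-irrelevance to S-independence.} For necessity I would apply the S-irrelevance characterisation to both ``$\xvar$ S-irrelevant to $\yvar$'' and ``$\yvar$ S-irrelevant to $\xvar$'': if neither variable is trivial, both $\yvar$ and $\xvar$ then have precise distributions, and $\lowuppprev(\gbl(\xvar))=\dis[\xvar](\gbl)$ collapses Equation~\eqref{eq:lower:twosided:irrelevance:variables} to Equation~\eqref{eq:lower:twosided:independence:variables}. For sufficiency, if $\xvar$ or $\yvar$ is trivial I invoke Proposition~\ref{prop:triviality:implies:S:independence:variables}; otherwise Equation~\eqref{eq:lower:twosided:independence:variables} implies Equation~\eqref{eq:lower:twosided:irrelevance:variables} (a singleton being its own interval product with any real number), so the S-irrelevance characterisation gives that $\xvar$ is S-irrelevant to $\yvar$, and — because Equation~\eqref{eq:lower:twosided:independence:variables} is symmetric in $\xvar$ and $\yvar$ while both distributions are precise — the same argument with the roles of $\xvar$ and $\yvar$ interchanged gives that $\yvar$ is S-irrelevant to $\xvar$; hence $\xvar$ and $\yvar$ are S-independent with respect to $\rejectset[\lowprev]$.
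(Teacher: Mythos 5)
Your proposal is correct, and the necessity direction, the treatment of the trivial cases, and the passage from S-irrelevance to S-independence all follow the paper's own argument. Where you genuinely diverge is in the sufficiency direction of the S-irrelevance characterisation. The paper gets this almost for free: from Equation~\eqref{eq:lower:twosided:irrelevance:variables} it reads off, for each \(\aevt\subseteq\xvalues\) and \(\bevt\subseteq\yvalues\), the event-level condition of Theorem~\ref{theo:lowertwosided}, concludes that \(\xvar^{-1}(\aevt)\) is S-irrelevant to \(\yvar^{-1}(\bevt)\), and then invokes Theorem~\ref{theo:lower:equivalentdefinitions} to lift this back to the variables. You instead verify criterion~\eqref{eq:S-irrelevance:lowprev:for:variables} directly: writing the composite gamble as \(h\), pivoting on \(m=\max_{\cevt\in\partition}\dis[\yvar](s_{\cevt})\), and using the interval product with the non-positive scalar \(\dis[\yvar](s_{\cevt}-m)\) to force \(\uppprev(h)\leq m\). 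This computation is sound (the endpoint swap and the sign of \(\lowprev(\ind{\cevt}(\xvar))\) are exactly right), but it essentially re-proves, in the special case at hand, the sufficiency half of Theorem~\ref{theo:lower:equivalentdefinitions} — whose own proof pivots on the very same maximum, there phrased as a convex combination of the \(\dis[\yvar](s_{\cevt})\) being dominated by their maximum. So your route is more self-contained and makes the mechanism behind the reduction visible inside this one proof, while the paper's route is shorter because it delegates that mechanism to the already-established equivalence between variable-level and event-level S-irrelevance. Both are valid; if you keep your version, it is worth noting explicitly that the step \(\lowprev(s_{\aevt}(\yvar)-h)=\dis[\yvar](s_{\aevt})-\uppprev(h)\) only needs the inequality \(\geq\), which follows from~\ref{ax:lowprev:mixed:additivity} together with the precision of \(\lowprev\) on gambles of the form \(s_{\aevt}(\yvar)\).
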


\begin{proof}
We begin with the first statement.
For necessity, assume that \(\xvar\) is non-trivial with respect to~\(\lowprev\) and S-irrelevant to~\(\yvar\) with respect to~\(\rejectset[\lowprev]\). 
Theorem~\ref{theo:lower:equivalentdefinitions} and Proposition~\ref{prop:lowerprecise:variables} then guarantee that \(\yvar\) has a precise distribution \(\linprev[\yvar]\) with respect to~\(\lowprev\).
Consider now any \(\gbl\in\gbls(\xvalues)\), \(\altgbl\in\gbls(\yvalues)\) and \(\altlinprev\in\linprevs(\lowprev)\), then on the one hand \(\altlinprev(\altgbl(\yvar))=\dis[\yvar](\altgbl)\) and on the other hand \(\altlinprev(\gbl(\xvar)\altgbl(\yvar))=\altlinprev(\gbl(\xvar))\altlinprev(\altgbl(\yvar))=\altlinprev(\gbl(\xvar))\dis[\yvar](\altgbl)\) by Theorem~\ref{theo:lower:equivalentdefinitions} and Propositions~\ref{prop:loweronesided:variables} and~\ref{prop:independenceforvariables}.
Hence by taking minima and maxima over all \(\altlinprev\in\linprevs(\lowprev)\) on both sides, we get that
\begin{multline*}
\lowprev(\gbl(\xvar)\altgbl(\yvar))=
\begin{cases}
\lowprev(\gbl(\xvar))\dis[\yvar](\altgbl)&\text{if \(\dis[\yvar](\altgbl)\geq0\)}\\
\uppprev(\gbl(\xvar))\dis[\yvar](\altgbl)&\text{if \(\dis[\yvar](\altgbl)\leq0\)}
\end{cases}\\
\text{ and }
\uppprev(\gbl(\xvar)\altgbl(\yvar))=
\begin{cases}
\uppprev(\gbl(\xvar))\dis[\yvar](\altgbl)&\text{if \(\dis[\yvar](\altgbl)\geq0\)}\\
\lowprev(\gbl(\xvar))\dis[\yvar](\altgbl)&\text{if \(\dis[\yvar](\altgbl)\leq0\)}
\end{cases}
\end{multline*}
which can indeed be summarised as \(\lowuppprev(\gbl(\xvar)\altgbl(\yvar))=\lowuppprev(\gbl(\xvar))\odot\dis[\yvar](\altgbl)\).

We now turn to sufficiency. 
If \(\xvar\) is trivial with respect to~\(\lowprev\), it follows immediately from Proposition~\ref{prop:triviality:implies:S:independence:variables} that \(\xvar\) is S-irrelevant to~\(\yvar\) with respect to~\(\rejectset[\lowprev]\). 
We can therefore assume, without loss of generality, that \(\yvar\) has a precise distribution \(\dis[\yvar]\) with respect to~\(\lowprev\) and that Equation~\eqref{eq:lower:twosided:irrelevance:variables} holds. 
Consider now any \(\aevt\subseteq\xvalues\) and \(\bevt\subseteq\yvalues\). 
Then for any \(\gbl(\xvar)\in\gbls[\xvar^{-1}(\aevt)]\) and \(\altgbl(\yvar)\in\gbls[\yvar^{-1}(\bevt)]\), we have that \(\smash{\lowuppprev(\altgbl(\yvar))=\dis[\yvar](\altgbl)}\) because \(\yvar\) has a precise distribution with respect to~\(\lowprev\), and that \(\smash{\lowuppprev(\gbl(\xvar)\altgbl(\yvar))=\lowuppprev(\gbl(\xvar))\odot\dis[\yvar](\altgbl)}\) because of Equation~\eqref{eq:lower:twosided:irrelevance:variables}. 
It therefore follows from Theorem~\ref{theo:lowertwosided} that \(\xvar^{-1}(\aevt)\) is S-irrelevant to~\(\yvar^{-1}(\bevt)\). 
Since \(\aevt\subseteq\xvalues\) and \(\bevt\subseteq\yvalues\) were arbitrary, it follows from Theorem~\ref{theo:lower:equivalentdefinitions} that \(\xvar\) is S-irrelevant to~\(\yvar\) with respect to~\(\rejectset[\lowprev]\).

Next, we turn to the second statement.
For necessity, assume that \(\xvar\) and \(\yvar\) are non-trivial with respect to~\(\lowprev\) and S-independent with respect to~\(\rejectset[\lowprev]\). 
Theorem~\ref{theo:lower:equivalentdefinitions} and Proposition~\ref{prop:lowerprecise:variables} then guarantee that \(\xvar\) and \(\yvar\) respectively have precise distributions \(\dis[\xvar]\) and \(\dis[\yvar]\) with respect to~\(\lowprev\).
Consider now any \(\gbl\in\gbls(\xvalues)\), \(\altgbl\in\gbls(\yvalues)\) and \(\altlinprev\in\linprevs(\lowprev)\), then on the one hand \(\altlinprev(\gbl(\xvar))=\dis[\xvar](\gbl)\) and \(\altlinprev(\altgbl(\yvar))=\dis[\yvar](\altgbl)\) and on the other hand \(\altlinprev(\gbl(\xvar)\altgbl(\yvar))=\altlinprev(\gbl(\xvar))\altlinprev(\altgbl(\yvar))=\dis[\xvar](\gbl)\dis[\yvar](\altgbl)\) by Theorem~\ref{theo:lower:equivalentdefinitions} and Propositions~\ref{prop:loweronesided:variables} and~\ref{prop:independenceforvariables}.
Hence by taking minima and maxima over all \(\altlinprev\in\linprevs(\lowprev)\) on both sides, we get that, indeed, \(\smash{\lowuppprev(\gbl(\xvar)\altgbl(\yvar))=\dis[\xvar](\gbl)\dis[\yvar](\altgbl)}\).

We now turn to sufficiency.
If \(\xvar\) or \(\yvar\) are trivial with respect to~\(\lowprev\), then it follows immediately from Proposition~\ref{prop:triviality:implies:S:independence:variables} that \(\xvar\) and \(\yvar\) are S-independent with respect to~\(\rejectset[\lowprev]\). 
We can therefore assume, without loss of generality, that \(\xvar\) and \(\yvar\) respectively have precise distributions \(\dis[\xvar]\) and \(\dis[\yvar]\) with respect to~\(\lowprev\) and that Equation~\eqref{eq:lower:twosided:independence:variables} holds, and prove that \(\xvar\) and \(\yvar\) are S-independent with respect to~\(\rejectset[\lowprev]\). 
Since Equation~\eqref{eq:lower:twosided:independence:variables} implies Equation~\eqref{eq:lower:twosided:irrelevance:variables}, the S-irrelevance of~\(\xvar\) to \(\yvar\) with respect to~\(\rejectset[\lowprev]\) follows from the first part of this theorem. 
Since Equation~\eqref{eq:lower:twosided:independence:variables} is symmetric in~\(\xvar\) and \(\yvar\), the S-irrelevance of~\(\yvar\) to \(\xvar\) with respect to~\(\rejectset[\lowprev]\) follows in exactly the same way. 
Hence, we find that, indeed, \(\xvar\) and \(\yvar\) are S-independent with respect to~\(\rejectset[\lowprev]\).
\end{proof}

By combining this result with Proposition~\ref{prop:S-irrelevance:in:terms_of:binary:variables}, we immediately obtain characterisations for S-irrelevance and S-independence for Archimedean sets of desirable option sets~\(\rejectset\), in terms of their representing lower previsions. 
If we ignore the trivial cases, we see that each of these lower previsions features both precision and factorisation.
As we did in Section~\ref{sec:lowerprevisions:events}, we now seek to exclude the trivial cases by imposing credible indeterminacy, this time for variables instead of events.
We say that the variable~\(\zvar\) is \emph{credibly indeterminate} with respect to a coherent set of desirable option sets~\(\rejectset\) if there is at least one event \(\cevt\subseteq\zvalues\) such that \(\zvar^{-1}(\cevt)\) is credibly indeterminate with respect to~\(\rejectset\). 
If \(\rejectset\) is Archimedean, then due to Proposition~\ref{prop:credible:in:terms:of:lowprev}, this means that there is some \(\epsilon>0\) such that for all \(\lowprev\in\cohlowprevs(\rejectset)\), both \(\lowprev[\zvar](\cevt)>\epsilon\) and \(\lowprev[\zvar](\co{\cevt})>\epsilon\).

Similarly to what we found for events, the condition of credible indeterminacy, when combined with S-independence, allows us to infer both precision---for~\(\xvar\) and~\(\yvar\)---and factorisation for every representing lower prevision of an Archimedean set of desirable option sets~\(\rejectset\), \emph{without having to impose mixingness}. 
This is a surprisingly strong implication, we think, and especially so since credible indeterminacy for a variable \(\zvar\) is such a weak requirement, as it only requires one single event about this variable \(\zvar\) to be credibly indeterminate. 

\begin{theorem}\label{theo:loweronesided:K}
Let \(\rejectset\) be an Archimedean set of desirable option sets and consider two variables \(\xvar\) and \(\yvar\). 
If \(\xvar\) is credibly indeterminate and S-irrelevant to~\(\yvar\) with respect to~\(\rejectset\), then for all \(\lowprev\in\cohlowprevs(\rejectset)\), \(\yvar\) has a precise distribution~\(\dis[\yvar]\) with respect to~\(\lowprev\) and
\begin{equation*}
\lowuppprev(\gbl(\xvar)\altgbl(\yvar))=\lowuppprev(\gbl(\xvar))\odot\dis[\yvar](\altgbl)
\text{ for all gambles \(\gbl\) on \(\xvalues\) and \(\altgbl\) on \(\yvalues\).}
\end{equation*}
Similarly, if \(\xvar\) and \(\yvar\) are credibly indeterminate and S-independent with respect to~\(\rejectset\), then for all \(\lowprev\in\cohlowprevs(\rejectset)\), \(\xvar\) and \(\yvar\) have a precise distribution~\(\dis[\xvar]\) and \(\dis[\yvar]\) with respect to~\(\lowprev\), respectively, and
\begin{equation*}
\lowuppprev(\gbl(\xvar)\altgbl(\yvar))=\dis[\xvar](\gbl)\dis[\yvar](\altgbl)
\text{ for all gambles \(\gbl\) on \(\xvalues\) and \(\altgbl\) on \(\yvalues\).}
\end{equation*}
\end{theorem}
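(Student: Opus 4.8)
The plan is to reduce the statement about the Archimedean model~\(\rejectset\) to the corresponding statement about its representing binary models~\(\rejectset[\lowprev]\), and then invoke the characterisation of S-irrelevance (and S-independence) for variables with respect to such binary models that we have already established in Theorem~\ref{theo:lowertwosided:variables}. The only genuinely new ingredient is to rule out the trivial alternatives in that characterisation by exploiting the credible indeterminacy hypothesis.

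For the first part, I would begin by invoking Proposition~\ref{prop:S-irrelevance:in:terms_of:binary:variables}: since~\(\rejectset\) is Archimedean, \(\xvar\) being S-irrelevant to~\(\yvar\) with respect to~\(\rejectset\) is equivalent to \(\xvar\) being S-irrelevant to~\(\yvar\) with respect to~\(\rejectset[\lowprev]\) for every \(\lowprev\in\cohlowprevs(\rejectset)\). Fixing such a~\(\lowprev\), Theorem~\ref{theo:lowertwosided:variables} tells us that either \(\xvar\) is trivial with respect to~\(\lowprev\), or \(\yvar\) has a precise distribution~\(\dis[\yvar]\) with respect to~\(\lowprev\) together with the factorisation identity~\eqref{eq:lower:twosided:irrelevance:variables}. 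It therefore remains only to show that the first alternative cannot occur, i.e.\ that \(\xvar\) is non-trivial with respect to every \(\lowprev\in\cohlowprevs(\rejectset)\).

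This is where credible indeterminacy enters, and the argument mirrors the one used in the events version. By definition, \(\xvar\) being credibly indeterminate with respect to~\(\rejectset\) means that there is some \(\cevt\subseteq\xvalues\) such that both \(\xvar^{-1}(\cevt)\) and its complement \(\xvar^{-1}(\co{\cevt})\) are credible with respect to~\(\rejectset\). Proposition~\ref{prop:credible:in:terms:of:lowprev} then yields, for each \(\lowprev\in\cohlowprevs(\rejectset)\), some \(\epsilon>0\) with \(\lowprev(\xvar^{-1}(\cevt))>\epsilon\) and \(\lowprev(\xvar^{-1}(\co{\cevt}))>\epsilon\); hence \(\uppprev(\xvar^{-1}(\cevt))\geq\lowprev(\xvar^{-1}(\cevt))>0\) and likewise \(\uppprev(\xvar^{-1}(\co{\cevt}))>0\), so \(\xvar^{-1}(\cevt)\) is not trivial with respect to~\(\lowprev\). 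Since triviality of the variable~\(\xvar\) would require \emph{every} derived event \(\xvar^{-1}(\cdot)\) to be trivial, it follows that \(\xvar\) is non-trivial with respect to~\(\lowprev\), which disposes of the trivial alternative and establishes the first statement.

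For the second part I would argue in exactly the same way: Proposition~\ref{prop:S-irrelevance:in:terms_of:binary:variables} reduces S-independence of~\(\xvar\) and~\(\yvar\) with respect to~\(\rejectset\) to S-independence with respect to each~\(\rejectset[\lowprev]\); the credible indeterminacy of both \(\xvar\) and~\(\yvar\) ensures, by the argument just given, that neither is trivial with respect to any \(\lowprev\in\cohlowprevs(\rejectset)\); and the second statement of Theorem~\ref{theo:lowertwosided:variables} then forces precise distributions~\(\dis[\xvar]\) and~\(\dis[\yvar]\) together with the product factorisation~\eqref{eq:lower:twosided:independence:variables}. I do not anticipate any real obstacle, since all the substantive work has been front-loaded into Theorem~\ref{theo:lowertwosided:variables} and Proposition~\ref{prop:lowerprecise:variables}; the only point requiring a little care — and the closest thing to a sticking point — is the (essentially definitional) observation that the non-triviality of a single derived event \(\xvar^{-1}(\cevt)\) already suffices to make the variable~\(\xvar\) non-trivial with respect to~\(\lowprev\).
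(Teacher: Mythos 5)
Your proposal is correct and follows essentially the same route as the paper's proof: reduce to the binary models \(\rejectset[\lowprev]\) via Proposition~\ref{prop:S-irrelevance:in:terms_of:binary:variables}, invoke Theorem~\ref{theo:lowertwosided:variables}, and use credible indeterminacy together with Proposition~\ref{prop:credible:in:terms:of:lowprev} to rule out the trivial alternative. Your closing observation---that non-triviality of a single event \(\xvar^{-1}(\cevt)\) already makes the variable \(\xvar\) non-trivial---is exactly the (definitional) step the paper also relies on, so there is no gap.
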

\begin{proof}
Due to Proposition~\ref{prop:S-irrelevance:in:terms_of:binary:variables} and Theorem~\ref{theo:lowertwosided:variables}, it suffices to show that the credible indeterminacy of \(\xvar\) implies that \(\xvar\) is non-trivial with respect to every \(\lowprev\in\cohlowprevs(\rejectset)\). 
So assume that \(\xvar\) is credibly indeterminate with respect to~\(\rejectset\). 
Then there is some \(\aevt\subseteq\xvalues\) that is credibly indeterminate with respect to~\(\rejectset\), meaning that \(\taevt\) is credible for each \(\taevt\in\set{\aevt,\co{\aevt}}\). 
For any \(\lowprev\in\cohlowprevs(\rejectset)\), it then follows from Proposition~\ref{prop:credible:in:terms:of:lowprev} that there is some \(\epsilon>0\) such that \(\lowprev(\taevt)>\epsilon\). 
Hence, since \(\smash{\uppprev(\taevt)\geq\lowprev(\taevt)>\epsilon>0}\), we see that \(\aevt\) is non-trivial with respect to~\(\lowprev\), implying that \(\xvar\) is non-trivial with respect to~\(\lowprev\) as well.
\end{proof}

\subsection{S-irrelevance for variables with respect to linear prevision models}
We now want to reward those readers who are fans of decision-making with linear previsions---or precise probability models---and who have nevertheless had the courage and determination to follow our arguments all the way to this point.
Due to the heavy lifting already done for the more general cases of lower previsions and Archimedean models in the previous section, we are now able, without further ado, to present our results for the special case of linear previsions, in Theorem~\ref{theo:variables:and:events:linear:previsions}, and for the more involved, non-binary case of mixing models, in Theorem~\ref{theo:variables:and:events:mixing:models} below.

Observe, first of all, that Theorem~\ref{theo:lower:equivalentdefinitions} also applies in particular in the linear previsions context of the present section. 
It allows us to apply arguments for events---Theorem~\ref{theo:lineartwosided} in  particular---in order to obtain the following results about variables in a fairly straightforward manner.

\begin{theorem}\label{theo:variables:and:events:linear:previsions}
Consider two variables~\(\xvar\) and~\(\yvar\) and a linear prevision~\(\linprev\) on~\(\gbls\).
Then the following statements are equivalent:
\begin{enumerate}[label=\upshape(\roman*),leftmargin=*]
\item\label{it:variables:and:events:variables:irrelevance:linear} \(\xvar\) is S-irrelevant to~\(\yvar\) with respect to~\(\rejectset[\linprev]\);
\item\label{it:variables:and:events:variables:independence:linear} \(\xvar\) and \(\yvar\) are S-independent with respect to~\(\rejectset[\linprev]\);
\item\label{it:variables:and:events:determinate:or:precise:independence:factorisation}
\(\xvar\) and \(\yvar\) are independent with respect to~\(\linprev\).
\end{enumerate}
\end{theorem}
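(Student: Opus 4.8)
The plan is to funnel everything through the event-level result Theorem~\ref{theo:lineartwosided} together with the reduction of S-irrelevance for variables to S-irrelevance for events in Theorem~\ref{theo:lower:equivalentdefinitions}. Since a linear prevision is in particular a coherent lower prevision, Theorem~\ref{theo:lower:equivalentdefinitions} applies with \(\lowprev=\linprev\), so statement~\ref{it:variables:and:events:variables:irrelevance:linear} is equivalent to saying that \(\xvar^{-1}(\aevt)\) is S-irrelevant to \(\yvar^{-1}(\bevt)\) with respect to \(\rejectset[\linprev]\) for all \(\aevt\subseteq\xvalues\) and \(\bevt\subseteq\yvalues\). By Theorem~\ref{theo:lineartwosided}, each such event-level S-irrelevance is in turn equivalent to the independence of \(\xvar^{-1}(\aevt)\) and \(\yvar^{-1}(\bevt)\) with respect to \(\linprev\); quantifying over all \(\aevt\) and \(\bevt\), this is exactly Definition~\ref{def:independenceforvariables} of independence of \(\xvar\) and \(\yvar\) with respect to \(\linprev\). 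Hence I obtain \ref{it:variables:and:events:variables:irrelevance:linear}~\(\Leftrightarrow\)~\ref{it:variables:and:events:determinate:or:precise:independence:factorisation}.

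Next I would observe that the condition in~\ref{it:variables:and:events:determinate:or:precise:independence:factorisation} is symmetric in \(\xvar\) and \(\yvar\): independence of events with respect to \(\linprev\) is symmetric because \(\linprev(\aevt\cap\bevt)=\linprev(\aevt)\linprev(\bevt)\) is a symmetric condition, and Definition~\ref{def:independenceforvariables} inherits this symmetry. Combining this with the equivalence \ref{it:variables:and:events:variables:irrelevance:linear}~\(\Leftrightarrow\)~\ref{it:variables:and:events:determinate:or:precise:independence:factorisation} just established (applied with the roles of \(\xvar\) and \(\yvar\) swapped), \(\yvar\) is S-irrelevant to \(\xvar\) with respect to \(\rejectset[\linprev]\) if and only if \(\yvar\) and \(\xvar\) are independent with respect to \(\linprev\), which holds if and only if~\ref{it:variables:and:events:determinate:or:precise:independence:factorisation} holds. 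Therefore~\ref{it:variables:and:events:determinate:or:precise:independence:factorisation} implies both that \(\xvar\) is S-irrelevant to \(\yvar\) and that \(\yvar\) is S-irrelevant to \(\xvar\), i.e.\ it implies~\ref{it:variables:and:events:variables:independence:linear}. Since~\ref{it:variables:and:events:variables:independence:linear}~\(\Rightarrow\)~\ref{it:variables:and:events:variables:irrelevance:linear} is immediate from the definition of S-independence, this closes the cycle \ref{it:variables:and:events:variables:irrelevance:linear}~\(\Rightarrow\)~\ref{it:variables:and:events:determinate:or:precise:independence:factorisation}~\(\Rightarrow\)~\ref{it:variables:and:events:variables:independence:linear}~\(\Rightarrow\)~\ref{it:variables:and:events:variables:irrelevance:linear}.

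I do not expect any real obstacle here: the argument is essentially bookkeeping that assembles previously established equivalences, and the heavy lifting has already been done in Theorems~\ref{theo:lineartwosided} and~\ref{theo:lower:equivalentdefinitions}. The only mild subtlety to keep track of is the explicit appeal to the symmetry of the factorisation (independence) condition, which is what lets one pass from the \emph{asymmetric} notion of S-irrelevance to the symmetric notion of S-independence; and the minor point that both cited theorems cover the linear-prevision case without modification — the former because it is stated directly for linear previsions, the latter because linear previsions are a special case of coherent lower previsions.
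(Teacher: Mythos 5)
Your proposal is correct and follows essentially the same route as the paper: reduce the variable-level statements to their event-level counterparts via Theorem~\ref{theo:lower:equivalentdefinitions} (valid since a linear prevision is a coherent lower prevision), then invoke Theorem~\ref{theo:lineartwosided} together with Definition~\ref{def:independenceforvariables}. The only cosmetic difference is that you obtain~(ii) by closing a cycle through the symmetry of independence, whereas the paper reduces~(ii) directly to event-level S-independence; both are sound.
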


\begin{proof}
Since \(\linprev\) is a linear prevision and hence definitely a coherent lower prevision, it follows from Theorem~\ref{theo:lower:equivalentdefinitions} that condition~\ref{it:variables:and:events:variables:irrelevance:linear} holds if and only if for all \(\aevt\subseteq\xvalues\) and \(\bevt\subseteq\yvalues\), \(\smash{\xvar^{-1}(\aevt)}\) is S-irrelevant to~\(\smash{\yvar^{-1}(\bevt)}\) with respect to~\(\rejectset[\linprev]\).

Similarly, condition~\ref{it:variables:and:events:variables:independence:linear} holds if and only if for all \(\aevt\subseteq\xvalues\) and \(\bevt\subseteq\yvalues\), \(\smash{\xvar^{-1}(\aevt)}\) is S-independent to \(\smash{\yvar^{-1}(\bevt)}\) with respect to~\(\rejectset[\linprev]\).

Furthermore, because of Definition~\ref{def:independenceforvariables}, condition~\ref{it:variables:and:events:determinate:or:precise:independence:factorisation} holds if and only if for all \(\aevt\subseteq\xvalues\) and \(\bevt\subseteq\yvalues\), the events \(\xvar^{-1}(\aevt)\) and \(\yvar^{-1}(\bevt)\) are independent with respect to~\(\linprev\).

Given these observations, the equivalence of \ref{it:variables:and:events:variables:irrelevance:linear}, \ref{it:variables:and:events:variables:independence:linear} and \ref{it:variables:and:events:determinate:or:precise:independence:factorisation} follows immediately from Theorem~\ref{theo:lineartwosided}.
\end{proof}

Since we know from Theorem~\ref{thm:archimedean:and:mixing} that Archimedean and mixing models correspond to sets of linear previsions, the result above can be extended to Archimedean and mixing models too. 
Observe that in this case, due to the mixingness property, credible indeterminacy is not required for factorisation to appear---here in the form of independence; see Proposition~\ref{prop:independenceforvariables}. Note also that, as a direct result of Theorem~\ref{theo:lower:equivalentdefinitions:K}, the conditions (i) and (ii) can be equivalently expressed in terms of events as well.

\begin{theorem}\label{theo:variables:and:events:mixing:models}
Consider two variables~\(\xvar\) and~\(\yvar\) and an Archimedean and mixing set of desirable option sets~\(\rejectset\).
Then the following statements are equivalent: 
\begin{enumerate}[label=\upshape(\roman*),leftmargin=*]
\item \(\xvar\) is S-irrelevant to~\(\yvar\) with respect to~\(\rejectset\);
\item \(\xvar\) and \(\yvar\) are S-independent with respect to~\(\rejectset\);
\item \(\xvar\) and \(\yvar\) are independent with respect to~\(\linprev\), for all \(\linprev\in\linprevs(\rejectset)\).
\end{enumerate}
\end{theorem}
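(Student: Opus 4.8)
The plan is to deduce this directly by combining three earlier results: the representation Theorem~\ref{thm:archimedean:and:mixing}, the binary-reduction Proposition~\ref{prop:S-irrelevance:in:terms_of:binary:variables}, and the linear-prevision characterisation Theorem~\ref{theo:variables:and:events:linear:previsions}. Since $\rejectset$ is Archimedean and mixing, Theorem~\ref{thm:archimedean:and:mixing} tells us that $\linprevs(\rejectset)$ is non-empty, that $\rejectset=\rejectset[\linprevs(\rejectset)]$, and that $\cohlowprevs(\rejectset)=\linprevs(\rejectset)$; it is this last identity that lets us work throughout with linear previsions rather than merely coherent lower previsions.

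First I would establish the equivalence of (i) and (iii). By the mixing part of Proposition~\ref{prop:S-irrelevance:in:terms_of:binary:variables}, statement (i) holds if and only if $\xvar$ is S-irrelevant to~$\yvar$ with respect to~$\rejectset[\linprev]$ for every $\linprev\in\linprevs(\rejectset)$. By Theorem~\ref{theo:variables:and:events:linear:previsions} --- specifically the equivalence of its conditions \ref{it:variables:and:events:variables:irrelevance:linear} and \ref{it:variables:and:events:determinate:or:precise:independence:factorisation} --- for each fixed $\linprev$ this S-irrelevance is in turn equivalent to $\xvar$ and $\yvar$ being independent with respect to~$\linprev$. Quantifying over all $\linprev\in\linprevs(\rejectset)$ then yields exactly statement (iii).

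Next I would handle (ii). Since S-independence is by definition the conjunction of ``$\xvar$ is S-irrelevant to~$\yvar$'' and ``$\yvar$ is S-irrelevant to~$\xvar$'', applying Proposition~\ref{prop:S-irrelevance:in:terms_of:binary:variables} to both directions shows that (ii) holds if and only if, for every $\linprev\in\linprevs(\rejectset)$, $\xvar$ and $\yvar$ are S-independent with respect to~$\rejectset[\linprev]$. By the equivalence of conditions \ref{it:variables:and:events:variables:independence:linear} and \ref{it:variables:and:events:determinate:or:precise:independence:factorisation} in Theorem~\ref{theo:variables:and:events:linear:previsions}, this is equivalent to $\xvar$ and $\yvar$ being independent with respect to~$\linprev$ for all $\linprev\in\linprevs(\rejectset)$, i.e.\ to statement (iii). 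Routing both chains of equivalences through (iii) closes the loop.

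There is essentially no hard step here: the real content has already been extracted in Theorem~\ref{theo:variables:and:events:linear:previsions} for the binary linear case, and in the representation and reduction results. The only point demanding a little care is to keep the universal quantifier over $\linprev\in\linprevs(\rejectset)$ handled consistently on both sides of each equivalence, together with the fact that $\linprevs(\rejectset)\neq\emptyset$ --- guaranteed by Theorem~\ref{thm:archimedean:and:mixing} --- so that none of the ``for all $\linprev$'' statements collapses vacuously. I would also remark, following the comment preceding the theorem, that by Theorem~\ref{theo:lower:equivalentdefinitions:K} the conditions (i) and (ii) can equivalently be phrased in terms of the preimage events $\xvar^{-1}(\aevt)$ and $\yvar^{-1}(\bevt)$, but this observation is not needed for the argument itself.
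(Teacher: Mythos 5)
Your proposal is correct and follows essentially the same route as the paper's own (very terse) proof: reduce to the binary linear-prevision models via Proposition~\ref{prop:S-irrelevance:in:terms_of:binary:variables} and then invoke Theorem~\ref{theo:variables:and:events:linear:previsions} pointwise for each \(\linprev\in\linprevs(\rejectset)\). Your added remarks on the non-emptiness of \(\linprevs(\rejectset)\) and on handling S-independence as a conjunction of two S-irrelevances are sound elaborations of details the paper leaves implicit.
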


\begin{proof}
This result follows directly from Theorem~\ref{theo:variables:and:events:linear:previsions} and Proposition~\ref{prop:S-irrelevance:in:terms_of:binary:variables}.
\end{proof}

\section{The far-reaching implications of S-irrelevance and S-independence}\label{sec:discussion}
After the detailed mathematical analysis of the previous sections, let us now take a moment to consider what these mathematical results imply, and how far-reaching we believe these implications to be. 
In doing so, we will also lay the foundations for talking about inferences and decisions involving variables and non-binary choice models.

If our subject has a choice model \(\rejectset\) for choosing between gambles on~\(\pspace\), we can derive from that her choice model \(\rejectset[\zvar]\) for choosing between gambles on the value of a variable \(\zvar\colon\pspace\to\zvalues\).
We will use the following (notational) device: for any option set \(\altoptsettoo\in\optsets(\zvalues)\) of gambles on the possibility space~\(\zvalues\), we let
\begin{equation*}
\altoptsettoo(\zvar)\coloneqq\cset{\altgbltoo(\zvar)}{\altgbltoo\in\altoptsettoo}\in\optsets(\pspace)
\end{equation*}
be the corresponding option set of gambles~\(h(\zvar)\coloneqq h\circ\zvar\) on the variable~\(\zvar\), which are, of course, gambles whose domain is the possibility space~\(\pspace\).
Then clearly,
\begin{equation*}
\rejectset[\zvar]
\coloneqq\cset{\altoptsettoo\in\optsets(\zvalues)}{\altoptsettoo(\zvar)\in\rejectset}
\end{equation*}
is the set of desirable option sets on \(\zvalues\) that represents the choices between gambles that depend on the variable \(\zvar\), implicit in~\(\rejectset\).
It is completely in the spirit of the existing terminology in standard probability theory to call this choice model \(\rejectset[\zvar]\) the \emph{distribution} of the variable \(\zvar\), as it is a full decision-theoretic model for the subject's uncertainty about the value that \(\zvar\) assumes in~\(\zvalues\).

It is also a matter of simple and direct verification that this operation preserves coherence, mixingness and Archimedeanity.
Moreover, if \(\lowprev\) is a coherent lower prevision on \(\gbls(\pspace)\), then this operation turns the Archimedean \(\rejectset=\rejectset[\lowprev]\) into the Archimedean \(\rejectset[\zvar]=\rejectset[{\lowdis[\zvar]}]\), where the coherent lower prevision~\(\lowdis[\zvar]\) on~\(\gbls(\zvalues)\) is the lower distribution of~\(\zvar\) with respect to~\(\lowprev\), introduced in Section~\ref{sec:irrelevance:variables:lower:previsions}.
The same goes for a linear prevision~\(\linprev\) and the corresponding precise distribution~\(\dis[\zvar]\).
We prove some of these claims involving Archimedean (and mixing) models explicitly in the following proposition.

\begin{proposition}\label{prop:distributions:and:marginals}
Consider an Archimedean set of desirable option sets~\(\rejectset\) and a variable~\(\zvar\).
Then \(\rejectset[\zvar]\) is Archimedean too, and has \(\cset{\lowdis[\zvar]}{\lowprev\in\cohlowprevs(\rejectset)}\) as a set of representing coherent lower previsions. 
If \(\rejectset\) is furthermore mixing, then \(\rejectset[\zvar]\) is Archimedean and mixing, and has \(\cset{\dis[\zvar]}{\linprev\in\linprevs(\rejectset)}\) as a set of representing linear previsions.
\end{proposition}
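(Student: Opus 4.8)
The plan is to exhibit $\rejectset[\zvar]$ as $\rejectset[\mathscr{P}']$ for an explicit non-empty set $\mathscr{P}'$ of coherent lower previsions on $\gbls(\zvalues)$, and then read off the conclusions from Definition~\ref{def:archimedeanity} and Theorem~\ref{thm:archimedean:and:mixing}. The only preliminary I would record is the routine fact that the composition map $\altgbltoo\mapsto\altgbltoo(\zvar)=\altgbltoo\circ\zvar$ from $\gbls(\zvalues)$ to $\gbls(\pspace)$ is linear and satisfies $\inf(\altgbltoo\circ\zvar)\geq\inf\altgbltoo$; together with \ref{ax:lowprev:inf}--\ref{ax:lowprev:superadditive} this immediately gives that $\lowdis[\zvar]$, defined by $\lowdis[\zvar](\altgbltoo)\coloneqq\lowprev(\altgbltoo\circ\zvar)$, is a coherent lower prevision on $\gbls(\zvalues)$ whenever $\lowprev\in\cohlowprevs$, and that it is moreover linear (namely $\dis[\zvar]$) when $\lowprev$ additionally satisfies \ref{ax:linprev:homo}--\ref{ax:linprev:additive}.

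The core step is the identity $\rejectset[\zvar]=\rejectset[\mathscr{P}']$. Since $\rejectset$ is Archimedean, Definition~\ref{def:archimedeanity} gives $\rejectset=\bigcap\cset{\rejectset[\lowprev]}{\lowprev\in\cohlowprevs(\rejectset)}$, so for any $\altoptsettoo\in\optsets(\zvalues)$,
\begin{align*}
\altoptsettoo\in\rejectset[\zvar]
&\ifandonlyif\altoptsettoo(\zvar)\in\rejectset\\
&\ifandonlyif(\forall\lowprev\in\cohlowprevs(\rejectset))(\exists\altgbltoo\in\altoptsettoo)\,\lowprev(\altgbltoo\circ\zvar)>0\\
&\ifandonlyif(\forall\lowprev\in\cohlowprevs(\rejectset))(\exists\altgbltoo\in\altoptsettoo)\,\lowdis[\zvar](\altgbltoo)>0,
\end{align*}
where the middle equivalence merely unfolds the definition of $\altoptsettoo(\zvar)$ and Equation~\eqref{eq:Kfromlowprev}. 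The last line says precisely that $\altoptsettoo\in\rejectset[\mathscr{P}']$ with $\mathscr{P}'\coloneqq\cset{\lowdis[\zvar]}{\lowprev\in\cohlowprevs(\rejectset)}$, which is non-empty because $\cohlowprevs(\rejectset)$ is (Archimedeanity of $\rejectset$). Hence $\rejectset[\zvar]=\rejectset[\mathscr{P}']$ with $\mathscr{P}'$ a non-empty set of coherent lower previsions, so by Definition~\ref{def:archimedeanity} $\rejectset[\zvar]$ is Archimedean with $\mathscr{P}'$ as a set of representing coherent lower previsions.

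For the mixing case I would feed this into Theorem~\ref{thm:archimedean:and:mixing}. If $\rejectset$ is also mixing, that theorem gives $\cohlowprevs(\rejectset)=\linprevs(\rejectset)\subseteq\linprevs$, so every $\lowprev\in\cohlowprevs(\rejectset)$ is linear; by the preliminary step each corresponding $\lowdis[\zvar]$ is then the linear prevision $\dis[\zvar]$, whence $\mathscr{P}'=\cset{\dis[\zvar]}{\linprev\in\linprevs(\rejectset)}$ is a non-empty subset of $\linprevs$. Applying the converse half of Theorem~\ref{thm:archimedean:and:mixing} to $\rejectset[\zvar]=\rejectset[\mathscr{P}']$ then yields that $\rejectset[\zvar]$ is Archimedean and mixing, with $\mathscr{P}'$ as a set of representing linear previsions.

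I do not expect a real obstacle: the argument is essentially definition-chasing plus two citations. The points that need a touch of care are that $\altoptsettoo(\zvar)$ does lie in $\optsets(\pspace)$ (it is finite, being the image of a finite set), that one may pass freely between ``$\exists\gbl\in\altoptsettoo(\zvar)$ with $\lowprev(\gbl)>0$'' and ``$\exists\altgbltoo\in\altoptsettoo$ with $\lowprev(\altgbltoo\circ\zvar)>0$'' even though $\altoptsettoo(\zvar)$ may contain fewer elements than $\altoptsettoo$, and that the proposition asks only for $\mathscr{P}'$ to be \emph{a} representing set (not the largest), so no maximality argument is needed.
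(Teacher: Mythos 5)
Your proof is correct and follows essentially the same route as the paper's: the same chain of equivalences reducing \(\altoptsettoo\in\rejectset[\zvar]\) to \((\forall\lowprev\in\cohlowprevs(\rejectset))\,\altoptsettoo\in\rejectset[{\lowdis[\zvar]}]\) via Definition~\ref{def:archimedeanity}, followed by two applications of Theorem~\ref{thm:archimedean:and:mixing} for the mixing case. The only difference is that you explicitly verify that \(\lowdis[\zvar]\) is a coherent (resp.\ linear) prevision and flag the non-injectivity of \(\altgbltoo\mapsto\altgbltoo\circ\zvar\), details the paper leaves implicit.
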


\begin{proof}
First assume that \(\rejectset\) is Archimedean. Definition~\ref{def:archimedeanity} then implies that we can consider the following chain of equivalences for any \(\altoptsettoo\) in \(\optsets(\zvalues)\):
\begin{align*}
\altoptsettoo\in\rejectset[\zvar]
\ifandonlyif\altoptsettoo(\zvar)\in\rejectset
&\ifandonlyif(\forall\lowprev\in\cohlowprevs(\rejectset))\altoptsettoo(\zvar)\in\rejectset[\lowprev]\\
&\ifandonlyif(\forall\lowprev\in\cohlowprevs(\rejectset))(\exists\altgbltoo\in\altoptsettoo)\lowprev(\altgbltoo(\zvar))>0\\
&\ifandonlyif(\forall\lowprev\in\cohlowprevs(\rejectset))(\exists\altgbltoo\in\altoptsettoo)\lowdis(\altgbltoo)>0\\
&\ifandonlyif(\forall\lowprev\in\cohlowprevs(\rejectset))\altoptsettoo\in\rejectset[{\lowdis[\zvar]}].
\end{align*}
So \(\rejectset[\zvar]\) is indeed Archimedean, and has \(\cset[\big]{\lowdis[\zvar]}{\lowprev\in\cohlowprevs(\rejectset)}\) as a set of representing coherent lower previsions. 
If \(\rejectset\) is furthermore mixing, we know from Theorem~\ref{thm:archimedean:and:mixing} that \(\cohlowprevs(\rejectset)=\linprevs(\rejectset)\), which implies that \(\cset[\big]{\lowdis[\zvar]}{\lowprev\in\cohlowprevs(\rejectset)}=\cset[\big]{\dis[\zvar]}{\linprev\in\linprevs(\rejectset)}\). 
A second application of Theorem~\ref{thm:archimedean:and:mixing} therefore implies that \(\rejectset[\zvar]\) is indeed mixing.
\end{proof}

In order to explore the implications of what we have discovered in the previous sections, let us now focus on a decision problem involving gambles that depend on two variables~\(\xvar\) and~\(\yvar\).
These are gambles of the type \(\altgbltoo(\xvar,\yvar)\coloneqq\altgbltoo\circ(\xvar,\yvar)\), where \(\altgbltoo\) is some gamble on~\(\xyvalues\).
Of course \((\xvar,\yvar)\) can be seen as a new variable \(\xyvar\colon\pspace\to\xyvalues\colon\omega\mapsto(\xvar(\omega),\yvar(\omega))\), and all we have said above about the distribution of a variable can also be brought to bear on this variable \((\xvar,\yvar)\).
In particular, if our subject has a coherent set of desirable option sets~\(\rejectset\), then the so-called \emph{joint distribution} \(\rejectset[\xyvar]\) of the variable \(\xyvar\) is given by
\begin{equation*}
\rejectset[\xyvar]
\coloneqq\cset{\altoptsettoo\in\optsets(\xyvalues)}{\altoptsettoo\xyvar\in\rejectset},
\end{equation*}
and, of course, choices between gambles on the value of, say, \(\yvar\) separately are modelled by the so-called \emph{marginal distribution}~\(\rejectset[\yvar]\) of~\(\yvar\), given by\footnote{This notation uses the implicit convention that gambles with domain~\(\yvalues\) are considered as special instances of gambles with domain~\(\xyvalues\).}
\begin{equation*}
\rejectset[\yvar]
\coloneqq\cset{\optset\in\optsets(\yvalues)}{\optset(\yvar)\in\rejectset}
=\rejectset[\xyvar]\cap\optsets(\yvalues),
\end{equation*}
where the rightmost equality also indicates that the marginal distribution~\(\rejectset[\yvar]\) can be derived from the joint distribution~\(\rejectset[\xyvar]\) by a \emph{marginalisation operation}, similarly to what is done for sets of desirable gambles \cite{debock2015:thesis,debock2015:credal:nets,cooman:2012:indnatexdesirs}.

Assume now that our subject's choice model~\(\rejectset\) is Archimedean---but not necessarily mixing---and that she has furthermore made the assessment that \(\xvar\) is S-irrelevant to~\(\yvar\), and that this assessment is reflected in her Archimedean model \(\rejectset\).
We will also assume that \(\rejectset\) reflects her beliefs that \(\xvar\) is credibly indeterminate, which we have argued is a rather weak requirement to impose.
The strong consequences of these assumptions have been derived in Theorem~\ref{theo:loweronesided:K}. 
In particular, it guarantees factorisation properties for the binary distributions of \(\xvar\) and \(\yvar\) in the representation of the Archimedean model~\(\rejectset\): the distribution~\(\rejectset[\xyvar]\) is represented by a set of factorising lower previsions with precise (linear) marginals for \(\yvar\). 
And if we furthermore symmetrise the assessment of our subject---that is, if \(\xvar\) and \(\yvar\) are (both) credibly indeterminate and S-independent---then the same is true for the marginals of \(\xvar\).

What is perhaps the most striking about Theorem~\ref{theo:loweronesided:K}, however, are its implications for the choice model~\(\rejectset[\yvar]\). 
Since \(\rejectset\) is Archimedean, we know from Proposition~\ref{prop:distributions:and:marginals} that \(\rejectset[\yvar]\) is Archimedean as well. 
What is very surprising, though, is that our subject's assessments---that \(\xvar\) is credibly indeterminate and S-irrelevant to \(\yvar\)---imply that it must be \emph{mixing} as well.

\begin{corollary}\label{corol:crazyresult}
Suppose that a variable \(\xvar\) is credibly indeterminate and S-irrelevant to a variable~\(\yvar\) with respect to an Archimedean set of desirable option sets~\(\rejectset\).
Then the distribution~\(\rejectset[\yvar]\) of~\(\yvar\) is an Archimedean and mixing set of desirable option sets.
\end{corollary}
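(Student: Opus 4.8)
The plan is to reduce the claim to the representation theorems already at our disposal, exploiting that credible indeterminacy together with S-irrelevance forces precision on the $\yvar$-distributions of all representing lower previsions of~$\rejectset$.

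First I would record what comes for free. Since $\rejectset$ is Archimedean, Proposition~\ref{prop:distributions:and:marginals} tells us that the distribution $\rejectset[\yvar]$ is Archimedean too, and that it is represented by the set $\mathscr{P}\coloneqq\cset{\lowdis[\yvar]}{\lowprev\in\cohlowprevs(\rejectset)}$ of coherent lower previsions on $\gbls(\yvalues)$, in the sense that $\rejectset[\yvar]=\rejectset[\mathscr{P}]$. I would also note that $\mathscr{P}$ is non-empty: by Definition~\ref{def:archimedeanity}, an Archimedean $\rejectset$ arises from some non-empty subset of $\cohlowprevs$, and $\cohlowprevs(\rejectset)$ is the largest such subset, hence itself non-empty.

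The decisive step is to upgrade each member of $\mathscr{P}$ to a linear prevision. Since $\xvar$ is credibly indeterminate and S-irrelevant to~$\yvar$ with respect to~$\rejectset$, Theorem~\ref{theo:loweronesided:K} applies and tells us that, for every $\lowprev\in\cohlowprevs(\rejectset)$, the variable $\yvar$ has a precise distribution $\dis[\yvar]$ with respect to~$\lowprev$ — i.e.\ $\lowdis[\yvar]$ coincides with its conjugate upper prevision and is therefore linear. Consequently $\mathscr{P}=\cset{\dis[\yvar]}{\lowprev\in\cohlowprevs(\rejectset)}$ is a non-empty subset of~$\linprevs$.

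To finish, I would invoke Theorem~\ref{thm:archimedean:and:mixing}: a set of desirable option sets equals $\rejectset[\mathscr{P}]$ for some non-empty $\mathscr{P}\subseteq\linprevs$ precisely when it is Archimedean and mixing. Applied to $\rejectset[\yvar]=\rejectset[\mathscr{P}]$, this yields exactly the corollary. There is no genuine obstacle at this stage — all the substantial work sits inside Theorem~\ref{theo:loweronesided:K} (and, beneath it, Theorem~\ref{theo:lowertwosided:variables} and the machinery of Section~\ref{sec:irrelevance:variables:lower:previsions}). The only thing demanding a little care is making the three representing sets line up correctly across Proposition~\ref{prop:distributions:and:marginals}, Theorem~\ref{theo:loweronesided:K} and Theorem~\ref{thm:archimedean:and:mixing}, and carrying non-emptiness along; this is bookkeeping rather than mathematics.
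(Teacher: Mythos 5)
Your proof is correct and follows essentially the same route as the paper's: Proposition~\ref{prop:distributions:and:marginals} to represent \(\rejectset[\yvar]\) by the lower distributions, Theorem~\ref{theo:loweronesided:K} to upgrade each of these to a linear prevision via precision, and Theorem~\ref{thm:archimedean:and:mixing} to conclude. The only cosmetic difference is that you justify the non-emptiness of \(\cohlowprevs(\rejectset)\) directly from Definition~\ref{def:archimedeanity} rather than via Theorem~\ref{theo:rejectsets:representation:lowerprev:twosided}, which is fine.
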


\begin{proof}
For any \(\altoptset\in\optsets(\yvalues)\), we have that
\begin{align*}
\altoptset\in\rejectset[\yvar]
\ifandonlyif(\forall\lowprev\in\cohlowprevs(\rejectset))\altoptset\in\rejectset[{\lowdis[\yvar]}]
\ifandonlyif(\forall\lowprev\in\cohlowprevs(\rejectset))\altoptset\in\rejectset[{\dis[\yvar]}],
\end{align*}
where the first equivalence follows from Proposition~\ref{prop:distributions:and:marginals}, and the second from Theorem~\ref{theo:loweronesided:K}. 
Since we know from Theorem~\ref{theo:rejectsets:representation:lowerprev:twosided} that \(\cohlowprevs(\rejectset)\) is non-empty, it therefore follows from Theorem~\ref{thm:archimedean:and:mixing} that \(\rejectset[\yvar]\) is Archimedean and mixing. 
\end{proof}
\noindent
If the Archimedean set of desirable option sets~\(\rejectset\) is mixing, then the mixingness of~\(\rejectset[\yvar]\) follows easily from the fact that mixingness is preserved under marginalisation; see for example Proposition~\ref{prop:distributions:and:marginals}. 
The striking thing about Corollary~\ref{corol:crazyresult} is that it doesn't require \(\rejectset\) to be mixing: we obtain the mixingness of~\(\rejectset[\yvar]\) using only credible indeterminacy and S-irrelevance.

Since mixing sets of desirable options sets correspond to choice functions governed by E-admissibility, the implications of this result are far-reaching: we find that choices between gambles that depend only on~\(\yvar\) will be governed by E-admissibility with respect to a set of linear previsions, for example \(\linprevs(\rejectset[\yvar])\). 
This is a very curious and amazingly strong result.
As soon as a subject assumes that there is \emph{some} credibly indeterminate variable \(\xvar\) that is S-irrelevant to a variable~\(\yvar\), which seems a very weak assumption to make, she is forced by coherence---and Archimedeanity---to use a \emph{mixing} model for \(\yvar\), and to use E-admissibility as her decision scheme for choosing between gambles on \(\yvar\). 
To give a simple example, we believe that our flipping a coin here in Ghent today will not affect in any way whether Teddy will have pickled herring for breakfast tomorrow morning.
As soon as we translate this belief into an assessment that the outcome of our coin flip today is credibly indeterminate---which seems uncontroversial for a coin flip---and \mbox{S-irrelevant} to Teddy's choice of breakfast tomorrow, we are forced by coherence---and Archimedeanity---to use a mixing model for our uncertainty about Teddy's breakfast choice.

It would seem, then, that our mathematical derivations in this paper lead to an argument in favour of using mixing models and decision schemes based on E-admissibility. 
It is indeed very easy to imagine that there are experiments whose outcomes---variables~\(\xvar\)---are indeterminate and have nothing whatsoever to do with the outcome---variable~\(\yvar\)---of the experiment that we are currently considering. 
As soon as we translate this `being indeterminate and having nothing whatsoever to do with' by an assessment of credible indeterminacy and S-irrelevance, we are led to using mixing models and E-admissibility only.

We don't want to take this discussion too far, but still feel inclined to suggest that, perhaps, it is the translation that constitutes the Achilles' heel of this argument.
Going back to binary variables, or events, in the interest of simplicity, isn't requiring that the composite gamble~\(\ind{\aevt}\gbl+\ind{\co{\aevt}}\altgbl-\epsilon\) must \emph{always} be \emph{rejected} from the set \(\set{\gbl,\altgbl,\ind{\aevt}\gbl+\ind{\co{\aevt}}\altgbl-\epsilon}\) for \emph{all} \(\gbl,\altgbl\in\gbls[\bevt]\) too strong if we want to express that `whether the event \(\aevt\) obtains has nothing whatsoever to do with whether \(\bevt\) obtains'?
At least one of us isn't entirely convinced of the validity of this requirement.
Even if our subject believes that the event~\(\aevt\) has no effect on the event~\(\bevt\), why should she then reject the gamble \(\ind{\aevt}\gbl+\ind{\co{\aevt}}\altgbl-\epsilon\) from the set \(\set{\gbl,\altgbl,\ind{\aevt}\gbl+\ind{\co{\aevt}}\altgbl-\epsilon}\)? 
Or equivalently, why should she then necessarily find \(\ind{\co{\aevt}}(\gbl-\altgbl)+\epsilon\) or \(\ind{\aevt}(\altgbl-\gbl)+\epsilon\) desirable? 
For example, if \(\gbl\) and \(\altgbl\) are deemed incomparable by our subject, meaning that \(\gbl-\altgbl\) nor \(\altgbl-\gbl\) are desirable, what would then compel her to find \(\ind{\aevt}\gbl+\ind{\co{\aevt}}\altgbl-\epsilon\) comparable to---even dominated by---\(f\) or \(g\). Or to rephrase it one more time: if \(\gbl-\altgbl\) nor \(\altgbl-\gbl\) are deemed desirable, why then should \(\ind{\co{\aevt}}(\gbl-\altgbl)+\epsilon\) or \(\ind{\aevt}(\altgbl-\gbl)+\epsilon\) be desirable? 
We definitely think that these and related questions merit further attention.

\begin{acknowledgement}
We would like to thank Teddy Seidenfeld for the many discussions, throughout the years, on so many issues related to imprecise probabilities and the foundations of decision-making.
This paper, and our related earlier work on choice functions, would not have existed without his constructive and destructive criticism of our earlier work on binary choice. 

We would also like to thank the editors of this Festschrift for giving us the opportunity to contribute to it, and two anonymous reviewers for their valuable and constructive feedback.

Jasper De Bock's work was partially supported by his BOF Starting Grant ``Rational decision making under uncertainty: a new paradigm based on choice functions'', number 01N04819.

As with most of our joint work, there is no telling, after a while, which of us two had what idea, or did what, exactly. 
An irrelevant coin flip may have determined the actual order we are listed in.
\end{acknowledgement}

\bibliographystyle{plain}
\bibliography{general}

\end{document}